\documentclass{article}

\newif\ifarxiv
\arxivtrue

\ifarxiv 
\usepackage[final]{neurips_2022}
\let\cite\citep
\else 
\usepackage{neurips_2022}
\fi

\usepackage{microtype}
\usepackage{graphicx}
\usepackage{floatrow}
\newfloatcommand{capbtabbox}{table}[][\FBwidth]
\usepackage{blindtext}
\usepackage{subfigure}
\usepackage{booktabs} 


\usepackage{amsmath,amsfonts,bm}









\def\eqref#1{equation~\ref{#1}}









\def\1{\bm{1}}



\def\rr{{\textnormal{r}}}






\def\vone{{\bm{1}}}

\def\vtheta{{\bm{\theta}}}
\def\va{{\bm{a}}}
\def\vb{{\bm{b}}}
\def\vc{{\bm{c}}}

\def\ve{{\bm{e}}}
\def\vf{{\bm{f}}}
\def\vg{{\bm{g}}}

\def\vu{{\bm{u}}}
\def\vv{{\bm{v}}}
\def\vw{{\bm{w}}}
\def\vx{{\bm{x}}}
\def\vy{{\bm{y}}}
\def\vz{{\bm{z}}}



\DeclareMathAlphabet{\mathsfit}{\encodingdefault}{\sfdefault}{m}{sl}
\SetMathAlphabet{\mathsfit}{bold}{\encodingdefault}{\sfdefault}{bx}{n}













\DeclareMathOperator*{\argmax}{arg\,max}

\usepackage{amsthm}
\usepackage{thmtools} 
\usepackage{thm-restate}
\usepackage{xcolor}
\usepackage[export]{adjustbox}

\usepackage{hyperref}


\usepackage{multirow}


\usepackage[utf8]{inputenc}
\usepackage{color}

\def\rr{\mathbb{R}}

\def\cR{\mathcal{R}}
\def\aug{\mathrm{aug}}
\def\diag{\mathrm{diag}}
\def\dd{\mathrm{d}}

\def\bn{\mathrm{n}}
\def\sg{\mathrm{sg}}

\def\tr{\mathrm{tr}}
\def\vec{\mathrm{vec}}

\def\ee{\mathbb{E}}
\def\cov{\mathbb{V}}
\def\con{\mathbb{C}}
\def\pr{\mathbb{P}}

\def\cE{\mathcal{E}}
\def\cA{\mathcal{A}}
\def\cW{\mathcal{W}}
\def\cJ{\mathcal{J}}
\def\cN{\mathcal{N}}
\def\cL{\mathcal{L}}
\def\vtheta{\boldsymbol{\theta}}

\def\method{$\alpha$-CL}
\def\boldmethod{$\boldsymbol{\alpha}$\textbf{-CL}}

\newtheorem{definition}{\textbf{Definition}}
\newtheorem{lemma}{\textbf{Lemma}}
\newtheorem{assumption}{\textbf{Assumption}}
\newtheorem{theorem}{\textbf{Theorem}}

\newcommand{\yuandong}[1]{\textcolor{red}{TODO: #1}}

\begin{document}

\title{Understanding Deep Contrastive Learning via Coordinate-wise Optimization}

\vspace{-0.2in}



\author{Yuandong Tian \\ 
Meta AI (FAIR) \\
\texttt{yuandong@meta.com}}
\maketitle




\vspace{-0.2in}
\begin{abstract}
We show that Contrastive Learning (CL) under a broad family of loss functions (including InfoNCE) has a unified formulation of coordinate-wise optimization on the network parameter $\vtheta$ and pairwise importance $\alpha$, where the \emph{max player} $\vtheta$ learns representation for contrastiveness, and the \emph{min player} $\alpha$ puts more weights on pairs of distinct samples that share similar representations. The resulting formulation, called \boldmethod{}, unifies not only various existing contrastive losses, which differ by how sample-pair importance $\alpha$ is constructed, but also is able to extrapolate to give novel contrastive losses beyond popular ones, opening a new avenue of contrastive loss design. These novel losses yield comparable (or better) performance on CIFAR10, STL-10 and CIFAR-100 than classic InfoNCE. Furthermore, we also analyze the max player in detail: we prove that with fixed $\alpha$, max player is equivalent to Principal Component Analysis (PCA) for deep linear network, and almost all local minima are global and rank-1, recovering optimal PCA solutions. Finally, we extend our analysis on max player to 2-layer ReLU networks, showing that its fixed points can have higher ranks. Codes are available~\footnote{\url{https://github.com/facebookresearch/luckmatters/tree/main/ssl/real-dataset}}. 
\end{abstract}

\vspace{-0.1in}
\section{Introduction}
\vspace{-0.1in}
While contrastive self-supervised learning has been shown to learn good features~\cite{chen2020simclr,He2020MomentumCF,oord2018representation} and in many cases, comparable with features learned from supervised learning, it remains an open problem what features it learns, in particular when deep nonlinear networks are used. Theory on this is quite sparse, mostly focusing on loss function~\cite{arora2019theoretical} and treating the networks as a black-box function approximator.  

In this paper, we present a novel perspective of contrastive learning (CL) for a broad family of contrastive loss functions $\cL(\vtheta)$: minimizing $\cL(\vtheta)$ corresponds to a \emph{coordinate-wise optimization} procedure on an objective $\cE_\alpha(\vtheta) - \cR(\alpha)$ with respect to network parameter $\vtheta$ and \emph{pairwise importance} $\alpha$ on batch samples, where $\cE_\alpha(\vtheta)$ is an energy function and $\cR(\alpha)$ is a regularizer, both associated with the original contrastive loss $\cL$. In this view, the \emph{max player} $\vtheta$ learns a representation to maximize the contrastiveness of different samples and keep different augmentation view of the same sample similar, while the \emph{min player} $\alpha$ puts more weights on pairs of different samples that appear similar in the representation space, subject to regularization. Empirically, this formulation, named P\underline{a}ir-weighed \underline{C}ontrastive \underline{L}earning ({\boldmethod{}}), when coupled with various regularization terms, yields novel contrastive losses that show comparable (or better) performance in CIFAR10~\cite{krizhevsky2009learning} and STL-10~\cite{coates2011analysis}.  

We then focus on the behavior of the max player who does \emph{representation learning} via maximizing the energy function $\cE_\alpha(\vtheta)$. When the underlying network is deep linear, we show that $\max_{\vtheta}\cE_\alpha(\vtheta)$ is the loss function (under re-parameterization) of Principal Component Analysis (PCA)~\cite{wold1987principal}, a century-old unsupervised dimension reduction method. To further show they are equivalent, we prove that the nonlinear training dynamics of CL with a linear multi-layer feedforward network (MLP) enjoys nice properties: with proper weight normalization, almost all its local optima are global, achieving optimal PCA objective, and are rank-1. The only difference here is that the data augmentation provides negative eigen-directions to avoid.  

Furthermore, we extend our analysis to 2-layer ReLU network, to explore the difference between the rank-1 PCA solution and the solution learned by a nonlinear network. Assuming the data follow an orthogonal mixture model, the 2-layer ReLU networks enjoy similar dynamics as the linear one, except for a special \emph{sticky weight rule} that keeps the low-layer weights to be non-negative and stays zero when touching zero. In the case of one hidden node, we prove that the solution in ReLU always picks a single mode from the mixtures. In the case of multiple hidden nodes, the resulting solution is not necessarily rank-1. 

\vspace{-0.05in}
\section{Related Work}
\vspace{-0.05in}
\textbf{Contrastive learning}. While many contrastive learning techniques (e.g., SimCLR~\cite{chen2020simclr}, MoCo~\cite{He2020MomentumCF}, PIRL~\cite{misra2020pirl}, SwAV~\cite{caron2020swav}, DeepCluster~\cite{Caron2018DeepCF}, Barlow Twins~\cite{zbontar2021barlow}, InstDis~\cite{wu2018unsupervised}, etc) have been proposed empirically and able to learn good representations for downstream tasks, theoretical study is relatively sparse, mostly focusing on loss function itself~\cite{tian2020makes,haochen2021provable,arora2019theoretical}, e.g., the relationship of loss functions with mutual information (MI). To our knowledge, there is no analysis that combines the property of neural network and that of loss functions.  

\textbf{Theoretical analysis of deep networks}. Many works focus on analysis of deep linear networks in supervised setting, where label is given. \cite{baldi1989neural,zhou2018critical,kawaguchi2016deep} analyze the critical points of linear networks. \cite{saxe2013exact,arora2018optimization} also analyze the training dynamics. On the other hand, analyzing nonlinear networks has been a difficult task. Existing works mostly lie in supervised learning, e.g., teacher-student setting~\cite{tian2019student,allen2018learning}, landscape~\cite{safran2018spurious}. For contrastive learning, recent work~\cite{wen2021toward} analyzes the dynamics of 1-layer ReLU networks with a specific weight structure, and~\cite{jing2021understanding} analyzes the collapsing behaviors in 2-layer linear network for CL. To our best knowledge, we are not aware of such analysis on deep networks ($> 2$ layers, linear or nonlinear) in the context of CL.  

\textbf{Connection between Principal Component Analysis (PCA) and Self-supervised Learning}. ~\cite{lee2021predicting} establishes the statistical connection between non-linear Canonical Component Analysis (CCA) and SimSiam~\cite{chen2020simsiam} for any zero-mean encoder, without considering the aspect of training dynamics. In contrast, we reformulate contrastive learning as coordinate-wise optimization procedure with min/max players, in which the max player is a reparameterization of PCA optimized with gradient descent, and analyze its training dynamics in the presence of specific neural architectures. 

\begin{figure}
    \includegraphics[width=.32\textwidth]{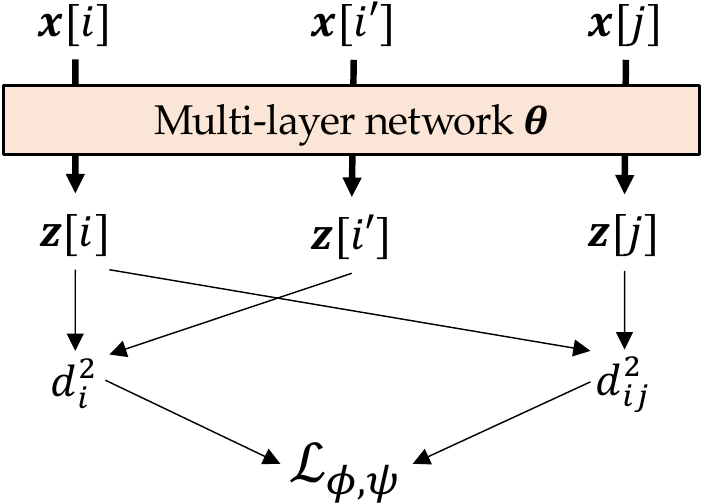}
    \hfill 
    \footnotesize
    \setlength\tabcolsep{2pt}
    \begin{tabular}[b]{l|l|l|}
    Contrastive Loss & $\phi(x)$ & $\psi(x)$ \\
    \hline 
    \hline
    InfoNCE~{\tiny\cite{oord2018representation}} & $\tau\log(\epsilon + x)$ & $e^{x/\tau}$ \\
    MINE~{\tiny\cite{belghazi2018mutual}} & $\log(x)$ & $e^x$ \\
    Triplet~{\tiny\cite{schroff2015facenet}} & $x$ & $[x + \epsilon]_+$ \\ 
    Soft Triplet~{\tiny\cite{tian2020understanding}}  & $\tau\log(1 + x)$ & $e^{x/\tau + \epsilon}$ \\
    N+1 Tuplet~{\tiny\cite{sohn2016improved}}      & $\log(1+x)$ & $e^x$ \\
    Lifted Structured~{\tiny\cite{oh2016deep}} & $[\log(x)]^2_+$ & $e^{x + \epsilon}$ \\
    Modified Triplet~{\tiny Eqn.~10~\cite{coria2020comparison}} & $x$ & $\mathrm{sigmoid}(c x)$ \\ 
    Triplet Contrastive~{\tiny Eqn.~2~\cite{ji2021power}} & linear & linear \\
    \end{tabular}
    \caption{\small Problem Setting. \textbf{Left}: Data points ($i$-th sample $\vx[i]$ and its augmented version $\vx[i']$, $j$-th sample $\vx[j]$) are sent to networks with weights $\vtheta$, to yield outputs $\vz[i]$, $\vz[i']$ and $\vz[j]$. From the outputs $\vz$, we compute pairwise squared distance $d^2_{ij}$ between $\vz[i]$ and $\vz[j]$ and intra-class squared distance $d^2_i$ between $\vz[i]$ and $\vz[i']$ for contrastive learning with a general family of contrastive loss $\cL_{\phi, \psi}$ (Eqn.~\ref{eq:general-loss}). \textbf{Right}: Different existing loss functions corresponds to different monotonous functions $\phi$ and $\psi$. Here $[x]_+ := \max(x, 0)$.}
    \label{tab:loss-funcs}
\end{figure}

\vspace{-0.05in}
\section{Contrastive Learning as Coordinate-wise Optimization} 
\vspace{-0.05in}
\label{sec:general-game-framework}
\textbf{Notation}. Suppose we have $N$ pairs of samples $\{\vx[i]\}_{i=1}^N$ and $\{\vx[i']\}_{i=1}^N$. Both $\vx[i]$ and $\vx[i']$ are augmented samples from sample $i$ and $\vx$ represents the input batch. These samples are sent to neural networks and $\vz[i]$ and $\vz[i']$ are their outputs. The goal of contrastive learning (CL) is to find the representation to maximize the squared distance $d^2_{ij} := \|\vz[i]-\vz[j]\|_2^2/2$ between distinct samples $i$ and $j$, and minimize the squared distance $d^2_i := \|\vz[i]-\vz[i']\|_2^2/2$ between different data augmentations $\vx[i]$ and $\vx[i']$ of the same sample $i$.

\vspace{-0.05in}
\subsection{A general family of contrastive loss}
\vspace{-0.05in}
We consider minimizing a general family of loss functions $\cL_{\phi,\psi}$, where $\phi$ and $\psi$ are monotonously increasing and differentiable scalar functions (define $\xi_i := \sum_{j\neq i} \psi(d^2_i - d^2_{ij})$ for notation brevity): 
\begin{equation}
    \min_{\vtheta} \cL_{\phi,\psi}(\vtheta) := \sum_{i=1}^N \phi(\xi_i) = \sum_{i=1}^N \phi\left(\sum_{j\neq i} \psi(d^2_i - d^2_{ij})\right) \label{eq:general-loss}
\end{equation}
Both $i$ and $j$ run from $1$ to $N$. With different $\phi$ and $\psi$, Eqn.~\ref{eq:general-loss} covers many loss functions (Tbl.~\ref{tab:loss-funcs}). In particular, setting $\phi(x) = \tau\log(\epsilon + x)$ and $\psi(x) = \exp(x/\tau)$ gives a generalized version of InfoNCE loss~\cite{oord2018representation}:
\begin{equation}
    \!\!\cL_{nce}\!:=\!-\tau \sum_{i=1}^N\log\frac{\exp(-d^2_i/\tau)}{\epsilon\exp(-d^2_i/\tau)\!+\!\sum_{j\neq i} \exp(-d^2_{ij}/\tau)} = \tau \sum_{i=1}^N\log\left(\epsilon\!+\!\sum_{j\neq i} e^{\frac{d_i^2-d^2_{ij}}{\tau}}\right)
\end{equation}
where $\epsilon > 0$ is some constant not related to $\vz[i]$ and $\vz[i']$. $\epsilon = 1$ has been used in many works~\cite{He2020MomentumCF,tian2020contrastive}. Setting $\epsilon = 0$ yields SimCLR setting~\cite{chen2020simclr} where the denominator doesn't contains $\exp(-d^2_i/\tau)$. This is also used in~\cite{yeh2021decoupled}. 

\def\vtheta{\boldsymbol{\theta}}

\subsection{The other side of gradient descent of contrastive loss}
To minimize $\cL_{\phi,\psi}$, gradient descent follows its negative gradient direction. As a first discovery of this work, it turns out that the gradient descent of the loss function $\cL$ is the \emph{gradient ascent} direction of another energy function $\cE_\alpha$:

\begin{restatable}{theorem}{gradientmatch}
\label{thm:general-E}
For any differential mapping $\vz = \vz(\vx; \vtheta)$, gradient descent of $\cL_{\phi,\psi}$ is equivalent to \underline{\textbf{gradient ascent}} of the objective $\cE_\alpha(\vtheta) := \frac{1}{2}\tr(\con_\alpha[\vz(\vtheta), \vz(\vtheta)])$:
\begin{equation}
    \frac{\partial \cL_{\phi, \psi}}{\partial \vtheta} = -\frac{\partial \cE_{\alpha}}{\partial \vtheta}\Big|_{\alpha = \alpha(\vtheta)} \label{eq:eq-derivative}
\end{equation}
Here the \emph{pairwise importance} $\alpha = \alpha(\vtheta) := \{\alpha_{ij}(\vtheta)\}$ is a function of input batch $\vx$, defined as:
\begin{equation}
     \alpha_{ij}(\vtheta) := \phi'(\xi_i)\psi'(d^2_i - d^2_{ij}) \ge 0
     \label{eq:alpha}
\end{equation}
where $\phi',\psi'\ge 0$ are derivatives of $\phi,\psi$. The \emph{contrastive covariance} $\con_\alpha[\cdot, \cdot]$ is defined as:
\begin{equation}
  \con_\alpha[\va,\vb] := \sum_{i=1}^N\sum_{j\neq i} \alpha_{ij} (\va[i] - \va[j])(\vb[i]-\vb[j])^\top - \sum_{i=1}^N \left(\sum_{j\neq i}\alpha_{ij}\right) (\va[i] - \va[i'])(\vb[i]-\vb[i'])^\top \label{eq:contrastive-covariance}
\end{equation} 
That is, \underline{\textbf{minimizing}} the loss function $\cL_{\phi,\psi}(\vtheta)$ can be regarded as \underline{\textbf{maximizing}} the energy function $\cE_{\alpha=\sg(\alpha(\vtheta))}(\vtheta)$ with respect to $\vtheta$. Here $\sg(\cdot)$ means stop-gradient, i.e., the gradient of $\vtheta$ is not backpropagated into $\alpha(\vtheta)$.
\end{restatable}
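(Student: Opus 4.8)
The plan is to verify the identity in Eqn.~\ref{eq:eq-derivative} by computing the two gradients separately in closed form and checking that they agree term by term; the single subtle point is the interpretation of the stop-gradient $\sg(\cdot)$, which I would address at the end.

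First I would differentiate the loss directly. Since $\cL_{\phi,\psi} = \sum_i \phi(\xi_i)$ with $\xi_i = \sum_{j\neq i}\psi(d^2_i - d^2_{ij})$, two nested applications of the chain rule give
\begin{equation}
\frac{\partial \cL_{\phi,\psi}}{\partial\vtheta} = \sum_{i=1}^N \phi'(\xi_i)\sum_{j\neq i}\psi'(d^2_i - d^2_{ij})\,\frac{\partial(d^2_i - d^2_{ij})}{\partial\vtheta} = \sum_{i=1}^N\sum_{j\neq i}\alpha_{ij}\,\frac{\partial(d^2_i - d^2_{ij})}{\partial\vtheta},
\end{equation}
where the last equality simply collects the scalar factors $\phi'(\xi_i)\psi'(d^2_i-d^2_{ij})$ into the pairwise importance $\alpha_{ij}$ of Eqn.~\ref{eq:alpha}. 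This already exhibits $\alpha_{ij}$ as the natural weight that the chain rule attaches to each distance gradient.

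Next I would put $\cE_\alpha$ into a form whose gradient is transparent. Taking the trace of the contrastive covariance in Eqn.~\ref{eq:contrastive-covariance} and using $\tr\big((\vz[i]-\vz[j])(\vz[i]-\vz[j])^\top\big) = \|\vz[i]-\vz[j]\|_2^2 = 2d^2_{ij}$, and likewise $\|\vz[i]-\vz[i']\|_2^2 = 2d^2_i$ for the augmentation term, every outer product collapses to a squared distance and the prefactor $\tfrac12$ cancels, yielding
\begin{equation}
\cE_\alpha(\vtheta) = \sum_{i=1}^N\sum_{j\neq i}\alpha_{ij}\,(d^2_{ij} - d^2_i).
\end{equation}
Differentiating with $\alpha$ held fixed by $\sg(\cdot)$ then gives $-\partial\cE_\alpha/\partial\vtheta = \sum_i\sum_{j\neq i}\alpha_{ij}\,\partial(d^2_i - d^2_{ij})/\partial\vtheta$, which is exactly the expression obtained for $\partial\cL_{\phi,\psi}/\partial\vtheta$, establishing Eqn.~\ref{eq:eq-derivative}.

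The step I expect to require the most care is justifying why freezing $\alpha$ on the right is legitimate even though the left-hand gradient is a full derivative of $\cL$ in which $\alpha_{ij}(\vtheta)$ genuinely depends on $\vtheta$. The resolution is that the $\vtheta$-dependence carried by $\phi'$ and $\psi'$ inside $\alpha_{ij}$ never enters the first-order gradient of $\cL$: these factors appear only as the outermost layers of a single chain-rule pass and hence multiply the distance gradients rather than being differentiated themselves. Equivalently, $\cE_\alpha$ is defined precisely so that its frozen-$\alpha$ gradient reproduces this chain-rule output. I would state this explicitly to avoid the tempting but erroneous reading in which one differentiates through $\alpha(\vtheta)$; apart from this bookkeeping, the argument needs nothing beyond differentiability of $\vz(\vx;\vtheta)$, $\phi$, and $\psi$.
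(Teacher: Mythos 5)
Your proof is correct, but it takes a genuinely different and more elementary route than the paper's. You first collapse the trace to the scalar identity $\cE_\alpha(\vtheta)=\sum_{i}\sum_{j\neq i}\alpha_{ij}\,(d^2_{ij}-d^2_i)$ and then match the two gradients term by term via a chain rule through the squared distances; both steps check out (the factor bookkeeping $\tfrac12\cdot 2=1$ is right, and the frozen-$\alpha$ differentiation of $\cE_\alpha$ is exactly where the stop-gradient enters, as you note). The paper instead works entirely in matrix form: it first establishes a gradient formula (its Lemma~\ref{lemma:gradient-form}) expressing $\sum_i \frac{\partial\cL}{\partial\vz[i]}A^\top[i]+\frac{\partial\cL}{\partial\vz[i']}A^\top[i']=-\con_\alpha[\vz,A]$ for an arbitrary $A$, contracts with the Jacobian $\partial\vz/\partial\theta$ to get $-\partial\cL/\partial\theta=\con_\alpha\bigl[\partial\vz/\partial\theta,\vz^\top\bigr]$, and then shows the frozen-$\alpha$ gradient of $\cE_\alpha$ equals the same expression using bilinearity and symmetry of $\con_\alpha$. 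What your approach buys is brevity and transparency: the theorem reduces to a one-line comparison of weighted sums of distance gradients, with no matrix calculus or auxiliary lemma. What the paper's approach buys is reusability: the matrix-form gradient formula is exactly what is needed later to derive the layer-wise dynamics (e.g., $\dot W_l = W_{>l}^\top W_{>l} W_l \con_\alpha[\vf_{l-1}]$ in Lemma~\ref{lemma:dyn-linear}), whereas your scalar identity does not by itself yield those per-layer expressions; amusingly, your scalar form of $\cE_\alpha$ is precisely the computation the paper defers to its proof of Theorem~\ref{thm:infonce-alpha}, so your route effectively front-loads that calculation into Theorem~\ref{thm:general-E}.
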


\def\primal{\mathrm{primal}}
\def\dual{\mathrm{dual}}
\def\game{\mathrm{g}}
\def\cP{\mathcal{P}}
Please check Supplementary Materials (SM) for all proofs. From the definition of energy $\cE_\alpha(\vtheta)$, it is clear that $\alpha_{ij}$ determines the importance of each sample pair $\vx[i]$ and $\vx[j]$. For $(i, j)$-pair that ``deserves attention'', $\alpha_{ij}$ is large so that it plays a large role in the contrastive covariance term. In particular, for InfoNCE loss with $\epsilon = 0$, the pairwise importance $\alpha$ takes the following form:
\begin{equation}
 \alpha_{ij} = \frac{\exp(-d^2_{ij}/\tau)}{\sum_{j\neq i} \exp(-d^2_{ij}/\tau)} > 0 \label{eq:infonce-alpha-beta}
\end{equation}
which means that InfoNCE focuses on $(i,j)$-pair with small squared distance $d^2_{ij}$. If both $\phi$ and $\psi$ are linear, then $\alpha_{ij} = \mathrm{const}$ and $\cL$ is a simple subtraction of positive/negative squared distances. 

From Thm.~\ref{thm:general-E}, an important observation is that when propagating gradient w.r.t. $\vtheta$ using the objective $\cE_\alpha(\vtheta)$ during the backward pass, the gradient does not propagate into $\alpha(\vtheta)$, even if $\alpha(\vtheta)$ is a function of $\vtheta$ in the forward pass. In fact, in Sec.~\ref{sec:different-loss-formulation} we show that propagating gradient through $\alpha(\vtheta)$ yields worse empirical performance. This suggests that $\alpha$ should be treated as an \emph{independent} variable when optimizing $\vtheta$. It turns out that if $\psi(x)$ is an exponential function (as in most cases of Tbl.~\ref{eq:general-loss}), this is indeed true and $\alpha$ can be determined by a separate optimization procedure:
\begin{restatable}{theorem}{infoncealpha}
\label{thm:infonce-alpha}
If $\psi(x) = e^{x/\tau}$, then the corresponding pairwise importance $\alpha$ (Eqn.~\ref{eq:alpha}) is the solution to the minimization problem: 
\begin{equation}
 \alpha(\vtheta) = \arg\min_{\alpha\in \cA} \cE_\alpha(\vtheta) - \cR(\alpha),\quad\quad \cA := \left\{\alpha:\quad \forall i,\ \ \sum_{j\neq i} \alpha_{ij} = \tau^{-1} \xi_i\phi'(\xi_i), \ \ \alpha_{ij} \ge 0\right\} 
\end{equation}
Here the regularization $\cR(\alpha) = \cR_H(\alpha) := \tau\sum_{i=1}^N H(\alpha_{i\cdot}) = -\tau\sum_{i=1}^N \sum_{j\neq i} \alpha_{ij} \log \alpha_{ij}$.
\end{restatable}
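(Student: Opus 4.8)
The plan is to first collapse the matrix-valued energy $\cE_\alpha(\vtheta)$ into a scalar weighted sum of the squared distances, then recognize the constrained problem as a row-separable entropy-regularized linear program whose unique minimizer is a Gibbs distribution, and finally identify that distribution with Eqn.~\ref{eq:alpha} using the special structure $\psi(x)=e^{x/\tau}$.

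First I would take the trace in $\cE_\alpha(\vtheta)=\tfrac12\tr(\con_\alpha[\vz,\vz])$. Using $\tr((\vz[i]-\vz[j])(\vz[i]-\vz[j])^\top)=\|\vz[i]-\vz[j]\|_2^2=2d^2_{ij}$ and the analogous identity for $d^2_i$, the matrix expression in Eqn.~\ref{eq:contrastive-covariance} reduces to the scalar form
\begin{equation}
\cE_\alpha(\vtheta)=\sum_{i=1}^N\sum_{j\neq i}\alpha_{ij}\,(d^2_{ij}-d^2_i).
\end{equation}
Hence the objective becomes $\cE_\alpha-\cR_H=\sum_{i}\sum_{j\neq i}\big[\alpha_{ij}(d^2_{ij}-d^2_i)+\tau\,\alpha_{ij}\log\alpha_{ij}\big]$, which is linear in $\alpha$ plus a separable entropy term. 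The key structural observation is that both this objective and the feasible set $\cA$ decouple across the row index $i$: for each $i$ one minimizes over $\alpha_{i\cdot}\ge 0$ subject to the single linear constraint $\sum_{j\neq i}\alpha_{ij}=c_i$ with $c_i:=\tau^{-1}\xi_i\phi'(\xi_i)$.

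Next I would solve each row problem by convex duality. Since $x\mapsto x\log x$ is strictly convex on $(0,\infty)$, each row objective is strictly convex and continuous on its (convex, compact) feasible slice, so it admits a unique minimizer characterized by the KKT conditions; moreover, as $\psi>0$ forces the solution to be interior, only the equality multiplier $\lambda_i$ is active. Setting the gradient to zero gives $d^2_{ij}-d^2_i+\tau(\log\alpha_{ij}+1)-\lambda_i=0$, i.e. $\alpha_{ij}\propto e^{(d^2_i-d^2_{ij})/\tau}=\psi(d^2_i-d^2_{ij})$. Imposing $\sum_{j\neq i}\alpha_{ij}=c_i$ and recalling $\xi_i=\sum_{j\neq i}\psi(d^2_i-d^2_{ij})$ fixes the normalizer and yields
\begin{equation}
\alpha_{ij}=c_i\,\frac{\psi(d^2_i-d^2_{ij})}{\xi_i}=\frac{\phi'(\xi_i)}{\tau}\,\psi(d^2_i-d^2_{ij}).
\end{equation}

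Finally I would match this to Eqn.~\ref{eq:alpha}. Because $\psi(x)=e^{x/\tau}$ satisfies $\psi'(x)=\tau^{-1}\psi(x)$, the defined importance $\alpha_{ij}(\vtheta)=\phi'(\xi_i)\psi'(d^2_i-d^2_{ij})$ equals exactly $\tau^{-1}\phi'(\xi_i)\psi(d^2_i-d^2_{ij})$, which is precisely the minimizer just derived; substituting back also verifies membership in $\cA$ directly, since $\sum_{j\neq i}\alpha_{ij}=\tau^{-1}\phi'(\xi_i)\sum_{j\neq i}\psi(\cdot)=\tau^{-1}\phi'(\xi_i)\xi_i=c_i$. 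Rather than a deep obstacle, the main technical care lies in the sign bookkeeping when passing from the matrix covariance to the scalar energy (so that the Gibbs weight correctly decreases with $d^2_{ij}$, consistent with Eqn.~\ref{eq:infonce-alpha-beta}) and in handling degenerate rows with $\phi'(\xi_i)=0$, where the constraint forces $\alpha_{i\cdot}=0$ and Eqn.~\ref{eq:alpha} likewise returns $0$.
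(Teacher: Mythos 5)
Your proposal is correct and follows essentially the same route as the paper: the paper likewise reduces $\cE_\alpha$ to the scalar form $\sum_{i\neq j}\alpha_{ij}(d^2_{ij}-d^2_i)$, decouples the problem over rows $\alpha_{i\cdot}$, and solves each entropy-regularized row problem by a Lagrangian/KKT argument (packaged there as a standalone lemma with $c_{ij}=d^2_{ij}-d^2_i$), before matching the resulting Gibbs weights to Eqn.~\ref{eq:alpha} via $\psi'=\tau^{-1}\psi$. Your extra checks (interiority forcing only the equality multiplier, explicit verification of membership in $\cA$, and the degenerate case $\phi'(\xi_i)=0$) are harmless refinements of the same argument.
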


For InfoNCE, the feasible set $\cA$ becomes $\{\alpha: \alpha \ge 0, \sum_{j\neq i} \alpha_{ij} = \xi_i / (\xi_i + \epsilon)\}$. This means that if $i$-th sample is already well-separated (small intra-augmentation distance $d_i$ and large inter-augmentation distance $d_{ij}$), then $\xi_i$ is small, the summation of weights $\sum_{j\neq i}\alpha_{ij}$ associated with sample $i$ is also small and such a sample is overall discounted. Setting $\epsilon=0$ reduces to sample-agnostic constraint (i.e., $\sum_{j\neq i}\alpha_{ij} = 1$).

Thm.~\ref{thm:infonce-alpha} leads to a novel perspective of \emph{coordinate-wise optimization} for Contrastive Learning (CL): 

\begin{restatable}[Contrastive Learning as Coordinate-wise Optimization]{corollary}{minimaxeq}
\label{co:minimax}
If $\psi(x) = e^{x/\tau}$, minimizing $\cL_{\phi, \psi}$ is equivalent to the following iterative procedure: 
\begin{subequations}\label{eq:minmax-opt}
\begin{align}
    \text{(Min-player $\alpha$)} \quad\quad\quad\quad\quad \alpha_t &= \arg\min_{\alpha \in \cA} \cE_\alpha(\vtheta_t) - \cR(\alpha) \label{eq:min-player} \\
    \text{(Max-player $\vtheta$)} \quad\quad\quad\quad\quad \vtheta_{t+1} &= \vtheta_t + \eta \nabla_{\vtheta} \cE_{\alpha_t}(\vtheta) \label{eq:max-player}
\end{align}
\end{subequations}
\end{restatable}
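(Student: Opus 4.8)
The plan is to obtain this corollary as a direct composition of Theorem~\ref{thm:general-E} and Theorem~\ref{thm:infonce-alpha}; no new machinery should be needed. Here ``minimizing $\cL_{\phi,\psi}$'' is read as running gradient descent on the loss, so the starting point is to write out the plain descent step $\vtheta_{t+1} = \vtheta_t - \eta\, \partial \cL_{\phi,\psi}/\partial\vtheta\big|_{\vtheta_t}$ and then rewrite its right-hand side using the two available characterizations.

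First I would invoke Theorem~\ref{thm:general-E}, which via Eqn.~\ref{eq:eq-derivative} gives $\partial \cL_{\phi,\psi}/\partial\vtheta = -\,\partial\cE_\alpha/\partial\vtheta\big|_{\alpha=\alpha(\vtheta)}$ under the stop-gradient convention. Substituting this turns the descent step into the ascent step $\vtheta_{t+1} = \vtheta_t + \eta\,\nabla_\vtheta \cE_{\alpha}(\vtheta)\big|_{\alpha = \alpha(\vtheta_t)}$. Setting $\alpha_t := \alpha(\vtheta_t)$, and recalling that the stop-gradient means $\alpha_t$ is held frozen during the update, this is verbatim the max-player update (Eqn.~\ref{eq:max-player}).

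Second, since we are in the regime $\psi(x) = e^{x/\tau}$, Theorem~\ref{thm:infonce-alpha} identifies this same $\alpha_t = \alpha(\vtheta_t)$ as the solution of the constrained problem $\arg\min_{\alpha\in\cA}\cE_\alpha(\vtheta_t) - \cR(\alpha)$ with $\cR = \cR_H$, which is exactly the min-player update (Eqn.~\ref{eq:min-player}). Hence a single gradient-descent step on $\cL_{\phi,\psi}$ factors as the min-player step followed by the max-player step, giving the claimed iteration.

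The only point that needs care, rather than a genuine obstacle, is to confirm that the $\alpha_t$ consumed by the max-player and the $\alpha_t$ produced by the min-player are literally the same object. This is immediate because both theorems are phrased through the identical quantity $\alpha(\vtheta)$ of Eqn.~\ref{eq:alpha}, and the stop-gradient in Theorem~\ref{thm:general-E} makes the max-player's treatment of $\alpha_t$ as a frozen constant consistent with viewing it as the output of a separate $\arg\min$. No optimality or convergence analysis is required; the equivalence is established step by step, so I expect the proof to be short.
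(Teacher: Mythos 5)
Your proposal is correct and follows essentially the same route as the paper: the paper's own proof simply states that the corollary follows from combining Theorem~\ref{thm:general-E} (gradient descent on $\cL_{\phi,\psi}$ equals stop-gradient ascent on $\cE_{\alpha(\vtheta)}$) with Theorem~\ref{thm:infonce-alpha} (for $\psi(x)=e^{x/\tau}$, the same $\alpha(\vtheta_t)$ solves the regularized minimization over $\cA$). Your write-up just makes explicit the substitution and the identification of the two $\alpha_t$'s, which the paper leaves implicit.
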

Intuitively, the max player $\vtheta$ (Eqn.~\ref{eq:max-player}) performs one-step gradient ascent for the objective $\cE_\alpha(\vtheta) - \cR(\alpha)$, \emph{learns a representation} to maximize the distance of different samples and minimize the distance of the same sample with different augmentations (as suggested by $\con_\alpha[\vz, \vz]$). On the other hand, the ``min player'' $\alpha$ (Eqn.~\ref{eq:min-player}) finds optimal $\alpha$ analytically, assigning high weights on confusing pairs for ``max player'' to solve. 

\textbf{Relation to max-min formulation}. While Corollary~\ref{co:minimax} looks very similar to max-min formulation, important differences exist. Different from traditional max-min formulation, in Corollary~\ref{co:minimax} there is asymmetry between $\vtheta$ and $\alpha$. First, $\vtheta$ only follows one step update along gradient ascent direction of $\max_{\vtheta} \cE_\alpha(\vtheta)$, while $\alpha$ is solved analytically. Second, due to the stop-gradient operator, the gradient of $\vtheta$ contains no knowledge on how $\vtheta$ changes $\alpha$. This prevents $\vtheta$ from adapting to $\alpha$'s response on changing $\vtheta$. Both give advantages to min-player $\alpha$ to find the confusing sample pairs more effectively.  

\textbf{Relation to hard-negative samples}. While many previous works~\cite{kalantidis2020hard,robinson2020contrastive} focus on seeking and putting more weights on hard samples, Corollary~\ref{co:minimax} shows that contrastive losses already have such mechanism at the batch level, focusing on ``hard-negative pairs'' beyond hard-negative samples.  

From this formulation, different pairwise importance $\alpha$ corresponds to different loss functions within the loss family specified by Eqn.~\ref{eq:general-loss}, and choosing among this family (i.e., different $\phi$ and $\psi$) can be regarded as choosing different $\alpha$ when optimizing the \emph{same} objective $\cE_\alpha(\vtheta)$. Based on this observation, we now propose the following training framework called \boldmethod{}:
\begin{definition}[P\underline{a}ir-weighed \underline{C}ontrastive \underline{L}earning (\method{})]
\label{def:method}
Optimize $\vtheta$ by gradient ascent: $\vtheta_{t+1} = \vtheta_t + \eta \nabla_{\vtheta} \cE_{\sg(\alpha_t)}(\vtheta)$, with the energy  $\cE_\alpha(\vtheta)$ defined in Thm.~\ref{thm:general-E} and pairwise importance $\alpha_t = \alpha(\vtheta_t)$. 
\end{definition}
In \method{}, choosing $\alpha$ can be achieved by either implicitly specifying a regularizer $\cR(\alpha)$ and solve Eqn.~\ref{eq:min-player}, or by a direct mapping $\alpha=\alpha(\vtheta)$ without any optimization. This opens a novel revenue for CL loss design. Initial experiments (Sec.~\ref{sec:different-loss-formulation}) show that \method{} gives comparable (or even better) downstream performance in CIFAR10 and STL-10, compared to vanilla InfoNCE loss. 

\vspace{-0.05in}
\section{Representation Learning in Deep Linear CL is PCA}
\label{sec:linear-is-pca}
\vspace{-0.05in}
In Corollary~\ref{co:minimax}, optimizing over $\alpha$ is well-understood, since $\cE_\alpha(\vtheta)$ is \emph{linear} w.r.t. $\alpha$ and $\cR(\alpha)$ in general is a (strong) concave function. As a result, $\alpha$ has a unique optimal. On the other hand, understanding the max player $\max_{\vtheta} \cE_\alpha(\vtheta)$ is important since it performs \emph{representation learning} in CL. It is also a hard problem because of non-convex optimization. 

We start with a specific case when $\vz$ is a deep linear network, i.e., $\vz = W(\vtheta)\vx$, where $W$ is the equivalent linear mapping for the deep linear network, and $\vtheta$ is the parameters to be optimized. Note that this covers many different kinds of deep linear networks, including VGG-like~\cite{saxe2013exact}, ResNet-like~\cite{hardt2016identity} and DenseNet-like~\cite{huang2017densely}. For notation brevity, we define $\con_\alpha[\vx] := \con_\alpha[\vx, \vx]$. 

\begin{restatable}[Representation learning in Deep Linear CL reparameterizes Principal Component Analysis (PCA)]{corollary}{anylinearispca}
\label{co:pca-reparam}
When $\vz = W(\vtheta)\vx$ with a constraint $W W^\top = I$, $\cE_\alpha$ is the objective of Principal Component Analysis (PCA) with reparameterization $W = W(\vtheta)$:  
\begin{equation}
\max_{\vtheta} \cE_\alpha(\vtheta) = \frac{1}{2}\tr(W(\vtheta)X_\alpha W^\top(\vtheta)) \quad\mathrm{s.t.\ } W W^\top = I
\label{eq:pca-reparameterization}
\end{equation}
here $X_\alpha := \con_\alpha[\vx]$ is the contrastive covariance of input $\vx$. 
\end{restatable}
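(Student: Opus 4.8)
The plan is to reduce the whole claim to one algebraic identity: that the contrastive covariance transforms covariantly under the linear map $W$, i.e. $\con_\alpha[\vz, \vz] = W\,\con_\alpha[\vx, \vx]\,W^\top = W X_\alpha W^\top$. Once this identity is established, the energy $\cE_\alpha(\vtheta) = \frac{1}{2}\tr(\con_\alpha[\vz, \vz])$ supplied by Theorem~\ref{thm:general-E} becomes $\frac{1}{2}\tr(W X_\alpha W^\top)$ immediately, and the constrained maximization $\max_{\vtheta}\cE_\alpha$ subject to $WW^\top = I$ is then recognized as the classical variational form of PCA.

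First I would substitute $\vz[i] = W\vx[i]$ into each difference appearing in the definition (Eqn.~\ref{eq:contrastive-covariance}) of $\con_\alpha$: both the inter-sample differences $\vz[i] - \vz[j] = W(\vx[i] - \vx[j])$ and the intra-augmentation differences $\vz[i] - \vz[i'] = W(\vx[i] - \vx[i'])$ factor through the same $W$, precisely because $W$ does not depend on the sample indices $i, j$. Each rank-one summand then becomes $W(\vx[i] - \vx[j])(\vx[i] - \vx[j])^\top W^\top$, and since the weights $\alpha_{ij}$ are scalars, the common left factor $W$ and right factor $W^\top$ pull out of both summations in Eqn.~\ref{eq:contrastive-covariance}. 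This gives $\con_\alpha[\vz, \vz] = W X_\alpha W^\top$ with $X_\alpha = \con_\alpha[\vx]$; equivalently, $\con_\alpha[\cdot, \cdot]$ obeys the same transformation rule $\con_\alpha[A\va, B\vb] = A\,\con_\alpha[\va, \vb]\,B^\top$ as an ordinary cross-covariance, and no extra terms are created by the linearity of $\vz$ in $\vx$.

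Taking the trace and the factor $\frac{1}{2}$ yields $\cE_\alpha(\vtheta) = \frac{1}{2}\tr(W X_\alpha W^\top)$, which is the desired equality. To complete the PCA identification I would invoke the Ky Fan maximum principle (equivalently the Courant--Fischer characterization) for the symmetric matrix $X_\alpha$: maximizing $\tr(W X_\alpha W^\top)$ over all $W$ with orthonormal rows ($WW^\top = I$) is exactly the objective whose maximizers have rows spanning the top eigenspace of $X_\alpha$, i.e. PCA performed on the contrastive covariance $X_\alpha$. The reparameterization claim is then simply the observation that $W = W(\vtheta)$ is the effective linear map realized by the deep linear network, so the max player optimizes the PCA objective through the (generally overparameterized) coordinates $\vtheta$ rather than over $W$ directly.

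There is essentially no analytic obstacle in the equality itself, since it is a direct substitution exploiting that $W$ is index-independent. The one point that warrants care is conceptual rather than computational: the identity only shows that the two objectives coincide at the level of the effective map $W$, so ``equivalent to PCA'' in the strong sense — that gradient descent on $\vtheta$ actually recovers the PCA optimum — is not established by this corollary and must be deferred to the later analysis of the training dynamics (almost all local optima being global and rank-1). I would therefore state this corollary strictly as an identity of objective functions under the orthonormality constraint, explicitly flagging that the dynamical equivalence is proved separately.
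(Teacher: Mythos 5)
Your proposal is correct and follows essentially the same route as the paper's proof: the key step in both is that $W(\vtheta)$ is sample-independent, so the differences $\vz[i]-\vz[j]$ and $\vz[i]-\vz[i']$ factor through $W$, giving $\con_\alpha[\vz,\vz] = W X_\alpha W^\top$ and hence $\cE_\alpha = \frac{1}{2}\tr(W X_\alpha W^\top)$. Your added caveat---that this corollary is only an identity of objectives under the constraint, with the dynamical statement deferred to Theorem~\ref{thm:pca}---matches exactly how the paper itself scopes the result.
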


As a comparison, in traditional Principal Component Analysis, the objective is~\citep{kokiopoulou2011trace}: $\frac{1}{2}\max_{W} \tr(W\cov_\mathrm{sample}[\vx]W^\top)$ subject to the constraint $WW^\top= I$, where $\cov_\mathrm{sample}[\vx]$ is the empirical covariance of the dataset (here it is one batch). Therefore, $X_\alpha$ can be regarded as a generalized covariance matrix, possibly containing negative eigenvalues. In the case of supervised CL (i.e,. pairs from the same/different labels are treated as positive/negative~\cite{khosla2020supervised}), then it is connected with Fisher's Linear Discriminant Analysis~\cite{fisher1936use}. 

Here we show a mathematically rigorous connection between CL and dimensional reduction, as suggested intuitively in~\cite{hadsell2006dimensionality}. Unlike traditional PCA, due to the presence of data augmentation, while symmetric, the contrastive covariance $X_\alpha$ is not necessarily a PSD matrix. Nevertheless, the intuition is the same: to find the direction that corresponds to maximal variation of the data. 

\def\linssl{\texttt{DeepLin}}

\begin{figure*}
    \centering
    \includegraphics[width=\textwidth]{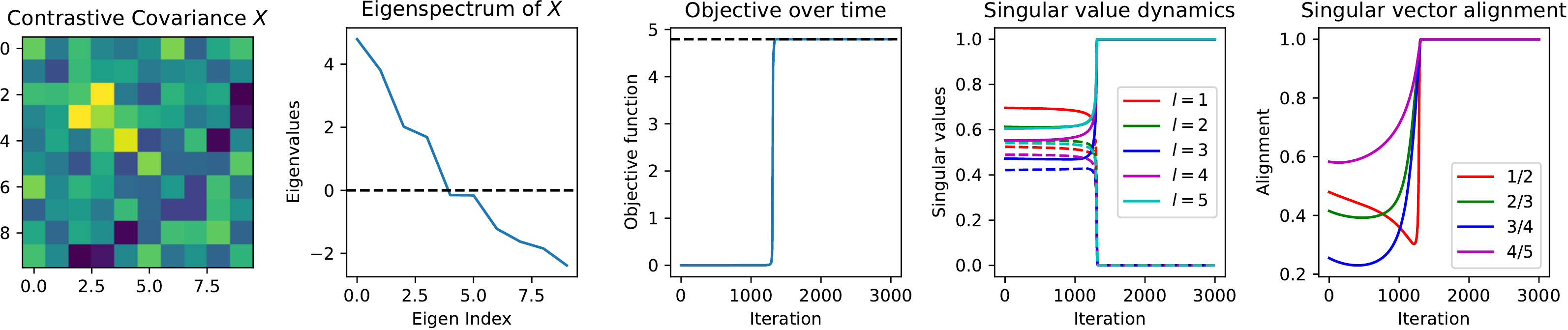}
    \caption{\small Dynamics of CL with multilayer ($L=5$) linear network (\linssl{}) with fixed $\alpha$. Running the training dynamics (Lemma~\ref{lemma:dyn-linear}) quickly leads to convergence towards the maximal eigenvalue of $X_\alpha$. For dynamics of singular value of $W_l$, the largest singular values (solid lines) converges to 1 while the second largest singular values (dashed lines) decay to $0$.}
    \label{fig:deep-lin-dynamics}
\end{figure*}

While it is interesting to discover that CL with deep linear network is essentially a reparameterization of PCA, it remains elusive that such a reparameterization leads to the same solution of PCA, in particular when the network is deep (and may contain local optima). Also, PCA has an overall end-to-end constraint $WW^\top = I$, while in network training, we instead use normalization layers and it is unclear whether they are equivalent or not.

In this section, we show for a specific deep linear model, almost all its local maxima of Eqn.~\ref{eq:pca-reparameterization} are global and it indeed solves PCA.  

\vspace{-0.05in}
\subsection{A concrete deep linear model}
\vspace{-0.05in}
We study a concrete deep linear network with parameters/weights $\vtheta := \{W_l\}_{l=1}^L$:
\begin{equation}
    \vz[i] := W_L W_{L-1} \ldots W_1\vx[i] \label{eq:model}
\end{equation}
Here $W_l \in \rr^{n_l\times n_{l-1}}$, $n_l$ is the number of nodes at layer $l$, $\vz[i]$ is the output of $\vx[i]$ and similarly $\vz[i']$ for $\vx[i']$. We use $\vtheta$ to represent the collection of weights at all layers. For convenience, we define the $l$-th layer activation $\vf_l[i] = W_l\vf_{l-1}[i]$. With this notation $\vf_0[i] = \vx[i]$ is the input and $\vz[i] = W_L\vf_{L-1}[i]$. 

We call this setting \linssl{}. The Jacobian matrix $W_{>l} := W_L W_{L-1} \ldots W_{l+1}$ and $W := W_{>0} = W_L W_{L-1} \ldots W_1$.
\begin{restatable}{lemma}{dynamicslinear}
\label{lemma:dyn-linear}
The training dynamics in \linssl{} is $\dot W_l = W_{>l}^\top W_{>l} W_l \con_\alpha[\vf_{l-1}]$ 
\end{restatable}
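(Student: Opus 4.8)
The plan is to recognize that the dynamics is nothing but gradient ascent on the energy $\cE_\alpha(\vtheta) = \frac{1}{2}\tr(\con_\alpha[\vz,\vz])$ with $\alpha$ held fixed by the stop-gradient of Thm.~\ref{thm:general-E}, so that $\dot W_l = \partial \cE_\alpha/\partial W_l$, and then to compute this partial derivative one layer at a time.

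First I would exploit the bilinearity of the contrastive covariance. Directly from Eqn.~\ref{eq:contrastive-covariance}, each summand of $\con_\alpha[\va,\vb]$ depends on $\va,\vb$ only through differences such as $\va[i]-\va[j]$, so for any fixed matrix $A$ one has $\con_\alpha[A\va,A\vb] = A\,\con_\alpha[\va,\vb]\,A^\top$. Since $\vz[i] = W_{>l}\vf_l[i]$ and $\vf_l[i] = W_l\vf_{l-1}[i]$, applying this identity twice gives $\con_\alpha[\vz,\vz] = W_{>l}W_l\,\con_\alpha[\vf_{l-1}]\,W_l^\top W_{>l}^\top$, hence $\cE_\alpha = \frac{1}{2}\tr(W_{>l}W_l\,\con_\alpha[\vf_{l-1}]\,W_l^\top W_{>l}^\top)$. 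This isolates all the $W_l$-dependence, because $W_{>l}$ involves only the layers above $l$ and $\con_\alpha[\vf_{l-1}]$ only the layers below $l$; both are therefore constant when differentiating in $W_l$ (and $\alpha$ is frozen).

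Next I would differentiate. Writing $M := W_{>l}^\top W_{>l}$ and $C := \con_\alpha[\vf_{l-1}]$, cyclicity of the trace reduces the energy to $\frac{1}{2}\tr(M W_l C W_l^\top)$. Both $M$ and $C$ are symmetric (the former manifestly, the latter because each summand in $\con_\alpha$ is an outer product of the form $vv^\top$), so the two occurrences of $W_l$ — once as $W_l$ and once as $W_l^\top$ — each contribute $M W_l C$ to the gradient, and the standard matrix-calculus identity yields $\partial \cE_\alpha/\partial W_l = M W_l C = W_{>l}^\top W_{>l} W_l \con_\alpha[\vf_{l-1}]$. Substituting into the gradient-ascent rule gives the claim.

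The only delicate point is this differentiation step: because $W_l$ appears twice, I must apply the product rule carefully and use the symmetry of $M$ and $C$ to merge the two contributions into the single factor of $2$ that cancels the leading $\frac{1}{2}$. Everything else is bookkeeping — confirming the two-fold factorization from bilinearity and checking that no copy of $W_l$ is hidden inside $W_{>l}$ or $\con_\alpha[\vf_{l-1}]$.
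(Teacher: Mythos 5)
Your proof is correct, but it takes a genuinely different route from the paper's. The paper never forms the energy as an explicit function of $W_l$: it starts from the chain-rule expansion of $-\partial\cL/\partial W_l$ (Eqn.~\ref{eq:dot-wl-original}), factors out the Jacobian $J^\top_{>l}$ (sample-independent for a linear net), and applies the gradient formula Lemma~\ref{lemma:gradient-form} with $A=\vf_{l-1}$ to get the intermediate identity $\dot W_l = J^\top_{>l}\con_\alpha[\vz,\vf_{l-1}]$; it then substitutes $\vz = J_{\ge l}\vf_{l-1}$ and pulls the linear factor out of the first slot of $\con_\alpha$. You instead treat the lemma as a corollary of Thm.~\ref{thm:general-E} (so $\dot W_l = \partial\cE_\alpha/\partial W_l$ with $\alpha$ frozen), use the congruence identity $\con_\alpha[A\vu,A\vv]=A\,\con_\alpha[\vu,\vv]\,A^\top$ to reduce $\cE_\alpha$ to the quadratic trace form $\frac{1}{2}\tr\bigl(W^\top_{>l}W_{>l}\,W_l\,\con_\alpha[\vf_{l-1}]\,W_l^\top\bigr)$, and finish with standard trace calculus; the two symmetry facts you flag (of $W_{>l}^\top W_{>l}$ and of $\con_\alpha[\vf_{l-1}]$) are genuinely needed and both hold. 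The trade-off: your variational computation is cleaner given Thm.~\ref{thm:general-E} and makes the layerwise quadratic (PCA-like) structure of Corollary~\ref{co:pca-reparam} explicit at every layer, but it uses linearity twice (to factor the covariance and to keep $W_{>l}$, $\vf_{l-1}$ constant in $W_l$); the paper's intermediate formula $\dot W_l = J^\top_{>l}\con_\alpha[\vz,\vf_{l-1}]$ is the backprop-structural form that survives, with per-sample Jacobians, in the later nonlinear (reversible/ReLU) analysis, with linearity invoked only in the final step.
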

Note that $\con_\alpha[\vf_0] = \con_\alpha[\vx] = X_\alpha$. Similar to supervised learning~\cite{arora2018optimization,du2018algorithmic}, nearby layers are also balanced: $\frac{\dd}{\dd t} \left(W_lW_l^\top - W_{l+1}^\top W_{l+1}\right) = 0$.  

\def\onehidden{\texttt{ReLU2Layer1Hid}}
\def\noverlap{\texttt{ReLU2LayerNoOverlap}}
\def\relumodel{\texttt{ReLU2Layer}}

\vspace{-0.05in}
\subsection{Normalization Constraints}
\vspace{-0.05in}
\label{sec:normalization}
Note that if we just run the training dynamics (Lemma~\ref{lemma:dyn-linear}) without any constraints, $\|W_l\|_F$ will go to infinity. Fortunately, empirical works already suggest various ways of normalization to stabilize the network training. 

One popular technique in CL is $\ell_2$ normalization. It is often put right after the output of the network and before the loss function $\cL$~\cite{chen2020simclr,byol,He2020MomentumCF}, i.e., $\hat \vz[i] = \vz[i] / \|\vz[i]\|_2$. Besides, LayerNorm~\cite{ba2016layer} (i.e., $\hat \vf[i] = (\vf[i] - \mathrm{mean}(\vf[i])) / \mathrm{std}(\vf[i])$) is extensively used in Transformer-based models~\cite{xiong2020layer}. Here we show that for gradient flow dynamics of MLP models, such normalization layers conserve $\|W_l\|_F$ for any $l$ below it, regardless of loss function. 

\begin{restatable}{lemma}{lemmaltwolayernorm}
\label{lemma:l2-layer-norm}
For MLP, if the weight $W_l$ is below a $\ell_2$-norm or LayerNorm layer, then $\frac{\dd}{\dd t} \|W_l\|^2_F = 0$.
\end{restatable}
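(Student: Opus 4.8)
The plan is to exploit a \textbf{scale invariance}: whenever a normalization layer sits above $W_l$, the objective is unchanged under positive rescaling $W_l \mapsto cW_l$, so its gradient is orthogonal to $W_l$, and gradient flow cannot change $\|W_l\|_F$.

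First I would reduce the claim to an orthogonality statement. Writing the gradient flow as $\dot W_l = \pm\,\nabla_{W_l} J$ (gradient ascent on $J=\cE_\alpha$ as in Lemma~\ref{lemma:dyn-linear}, or descent on any loss $J=\cL_{\phi,\psi}$), we have $\frac{\dd}{\dd t}\|W_l\|_F^2 = 2\tr(W_l^\top \dot W_l) = \pm\,2\langle W_l, \nabla_{W_l} J\rangle$. Hence it suffices to show $\langle W_l, \nabla_{W_l} J\rangle = 0$, which is where the loss-agnostic nature of the statement comes from: we never need the explicit form of $J$, only how it depends on $W_l$.

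Next comes the structural core. In an MLP the substitution $W_l \mapsto cW_l$ with $c>0$ scales every downstream pre-normalization activation $\vf_m$ ($m\ge l$) by $c$ — exactly in \linssl{}, and also through ReLU by positive homogeneity $\mathrm{rectifier}(cx)=c\,\mathrm{rectifier}(x)$. I would then check the normalization layer directly above $W_l$ kills this scale: for $\ell_2$ normalization, $\widehat{c\vf}=c\vf/\|c\vf\|_2=\vf/\|\vf\|_2=\hat\vf$; for LayerNorm, mean and std both scale by $c$, so $\widehat{c\vf}=(c\vf-c\,\mathrm{mean}(\vf))/(c\,\mathrm{std}(\vf))=\hat\vf$. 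In either case the post-normalization activation — and therefore all downstream distances $d^2_i, d^2_{ij}$ and the resulting objective — is invariant, giving $J(cW_l)=J(W_l)$ for every $c>0$. Differentiating this identity in $c$ at $c=1$ is exactly Euler's theorem for a positively-homogeneous-of-degree-$0$ function and yields $\langle W_l, \nabla_{W_l} J\rangle=0$, completing the argument.

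The main obstacle is making the invariance \emph{exact} rather than approximate. I must argue carefully that $W_l$ influences $J$ \emph{only} through the normalized activations, i.e.\ that no unnormalized copy of $\vf_m$ leaks into the objective, so that degree-$0$ homogeneity holds identically on a neighborhood of $c=1$ and the derivative is legitimate. I would also be explicit about the per-sample nature of the normalization (the scaling cancels independently for each $\vf[i]$) and about the restriction $c>0$, which is what lets LayerNorm's $\mathrm{std}$ factor and any intervening ReLU pass the scalar through unchanged so that differentiation at $c=1$ is valid.
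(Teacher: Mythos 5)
Your proof is correct, but it takes a genuinely different route from the paper's. The paper (Lemma~\ref{lemma:layernorml2}) works infinitesimally: it writes the forward/backward passes of the normalization layer as $\vf^\bn = J^\bn \vf$, $\vg = {J^\bn}^\top \vg^\bn$ with $J^\bn$ a projection-type Jacobian (Lemma~\ref{lemma:normalization-lemma}), propagates the gradient down through intervening \emph{reversible} layers, and then evaluates $\tr(W_l^\top \dot W_l)$ directly, reducing it to the key algebraic identity $J^\bn[i]\,\vf_m[i] = 0$ (a vector projected onto its own orthogonal complement vanishes). You instead argue globally: the map $W_l \mapsto cW_l$ ($c>0$) leaves the objective invariant because positive homogeneity carries the scale up to the normalization layer, which annihilates it; Euler's theorem for degree-$0$ homogeneous functions then gives $\langle W_l, \nabla_{W_l} J\rangle = 0$, hence conservation of $\|W_l\|_F^2$ under gradient flow. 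These are two faces of the same fact --- the identity $J^\bn(\vf)\vf = 0$ is exactly the derivative at $c=1$ of the scale invariance $\widehat{c\vf} = \hat\vf$ you invoke, and the paper's reversibility condition $h(x) = h'(x)x$ is the infinitesimal form of your positive homogeneity $h(cx)=ch(x)$. Your symmetry argument is more elementary and conspicuously loss-agnostic (no Jacobian bookkeeping, works verbatim for any differentiable objective built on the normalized outputs), and you correctly flag the two points that make it rigorous: the scale must reach the normalization through \emph{every} path (true for a feedforward MLP chain), and $c$ must be restricted to be positive. What the paper's computation buys in exchange is reusable machinery: the explicit Jacobian formulas and the reversible-layer formalism are reused elsewhere in the appendix, e.g.\ for the per-filter BatchNorm conservation (Lemma~\ref{lemma:bn-weight}), where your global rescaling of the whole matrix $W_l$ would have to be refined to per-row rescalings --- a refinement your argument admits, but which you would need to state separately.
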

Note that Lemma~\ref{lemma:l2-layer-norm} also holds for nonlinear MLP with reversible activations, which includes ReLU (see SM). Therefore, without loss of generality, we consider the following complete objective for max player with \linssl{} (here $\Theta$ is the constraint set of the weights due to normalization): 
\begin{equation}
    \max_{\vtheta \in \Theta} \cE_\alpha(\vtheta) := \frac{1}{2}\tr(WX_\alpha W^\top), \quad \Theta := \{\vtheta: \ \ \|W_l\|_F = 1, 1\le l \le L\} \label{eq:pca-objective-main}
\end{equation}
\vspace{-0.05in}
\subsection{Representation Learning with \linssl{} is PCA}
\vspace{-0.05in}
\label{sec:local-is-global}
As one of our main contributions, the following theorem asserts that almost all local optimal solutions of Eqn.~\ref{eq:pca-objective-main} are global, and the optimal objective corresponds to the PCA objective. Note that~\cite{kawaguchi2016deep,laurent2018deep} proves no bad local optima for deep linear network in supervised learning, while here we give similar results for CL, and additionally we also give the (simple) rank-1 structure of all local optima.  

\begin{restatable}[Representation Learning with \linssl{} is PCA]{theorem}{mlpispca}
\label{thm:pca}
If $\lambda_{\max}(X_\alpha) > 0$, then for any local maximum $\vtheta\in \Theta$ of Eqn.~\ref{eq:pca-objective-main} whose $W^\top_{>1}W_{>1}$ has distinct maximal eigenvalue:
\begin{itemize}
    \item there exists a set of unit vectors $\{\vv_l\}_{l=0}^L$ so that $W_l = \vv_l \vv_{l-1}^\top$ for $1 \le l \le L$, in particular, $\vv_0$ is the unit eigenvector corresponding to $\lambda_{\max}(X_\alpha)$,
    \item $\vtheta$ is global optimal with objective $2\cE^* = \lambda_{\max}(X_\alpha)$. 
\end{itemize}
\end{restatable}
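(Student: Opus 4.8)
\emph{Overall strategy.} The plan is to pair a clean global upper bound with a block-coordinate second-order analysis that propagates a rank-one structure from the bottom layer upward. First I would establish that $2\cE_\alpha(\vtheta)\le\lambda_{\max}(X_\alpha)$ on all of $\Theta$: writing $W=W_L\cdots W_1$ and using $\|AB\|_F\le\|A\|_{op}\|B\|_F$ together with $\|W_l\|_{op}\le\|W_l\|_F=1$ gives $\|W\|_F\le1$, and then $2\cE_\alpha=\tr(WX_\alpha W^\top)=\sum_k\vw_k^\top X_\alpha\vw_k\le\lambda_{\max}(X_\alpha)\|W\|_F^2\le\lambda_{\max}(X_\alpha)$, where $\vw_k^\top$ are the rows of $W$. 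The construction $W_l=\vv_l\vv_{l-1}^\top$ with $\vv_0$ a unit top eigenvector of $X_\alpha$ collapses to $W=\vv_L\vv_0^\top$ and attains this bound, so it suffices to show that every regular local maximum has this form and attains $2\cE=\lambda_{\max}(X_\alpha)$.

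\emph{Block-coordinate reduction and the base case.} The heart of the argument is that at a local maximum of the full problem, the restriction to a single layer $W_l$ (holding all other layers fixed) is itself a local maximum of $W_l\mapsto\frac12\tr(M_l\,W_l\con_\alpha[\vf_{l-1}]W_l^\top)$ on the sphere $\|W_l\|_F=1$, where $M_l:=W_{>l}^\top W_{>l}$ and $\con_\alpha[\vf_{l-1}]=W_{<l}X_\alpha W_{<l}^\top$ (obtained from $\cE_\alpha$ by the cyclic property of the trace). Vectorizing, this is $\max\ \frac12\vec(W_l)^\top(\con_\alpha[\vf_{l-1}]\otimes M_l)\vec(W_l)$ on the unit sphere, whose local maxima are exactly the unit vectors lying in the top eigenspace of $\con_\alpha[\vf_{l-1}]\otimes M_l$. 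For $l=1$ we have $\con_\alpha[\vf_0]=X_\alpha$; since $\lambda_{\max}(X_\alpha)>0$ and, by the regularity hypothesis (which for $n_1\ge2$ also forces $\mu_{\max}(M)>0$, since a PSD matrix with maximal eigenvalue $0$ has that eigenvalue repeated), the top eigenvalue of $M=W_{>1}^\top W_{>1}$ is simple, the top eigenspace of $X_\alpha\otimes M$ equals $E_{\lambda_{\max}}(X_\alpha)\otimes\mathrm{span}(\vp_1)$ with $\vp_1$ the unique top eigenvector of $M$. Because one tensor factor is one-dimensional, every element reshapes to a rank-one matrix, so $W_1=\vp_1\vv_0^\top$ with $\vv_0$ a unit top eigenvector of $X_\alpha$; set $\vv_1:=\vp_1$.

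\emph{Upward induction.} This base case gives $\con_\alpha[\vf_1]=W_1X_\alpha W_1^\top=\lambda_{\max}(X_\alpha)\vv_1\vv_1^\top$, a rank-one PSD matrix whose top eigenspace is one-dimensional (spanned by $\vv_1$, since $\lambda_{\max}(X_\alpha)>0$), and moreover $2\cE=\lambda_{\max}(X_\alpha)\|W_{>1}\vv_1\|^2=\lambda_{\max}(X_\alpha)\mu_{\max}(M)>0$. Positivity of $\cE$ then guarantees $W_{>l}\ne0$, i.e. $\mu_{\max}(M_l)>0$, for every $l$ (else $W=0$ and $\cE=0$), which is exactly what rules out the degenerate case $M_l=0$ in the reduction above. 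I would then induct: assuming $\con_\alpha[\vf_{l-1}]=\lambda_{\max}\vv_{l-1}\vv_{l-1}^\top$ has a one-dimensional top eigenspace, the same local-max-on-the-sphere reduction forces $W_l=\vv_l\vv_{l-1}^\top$ (now the one-dimensional factor is supplied by $\con_\alpha[\vf_{l-1}]$, so no regularity on $M_l$ is needed), and a direct computation gives $\con_\alpha[\vf_l]=\lambda_{\max}\vv_l\vv_l^\top$, closing the induction. Telescoping the outer products yields $W=\vv_L\vv_0^\top$, whence $2\cE=\vv_0^\top X_\alpha\vv_0=\lambda_{\max}(X_\alpha)$, matching the global bound, so the local maximum is global.

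\emph{Main obstacle.} The main difficulty is passing from first-order stationarity, which only says each $W_l$ is an eigen-object and does not preclude higher-rank mixed solutions, to the rank-one conclusion. This is resolved by the second-order (local-max) condition, which confines $\vec(W_l)$ to a top eigenspace, combined with the observation that the regularity hypothesis makes exactly one tensor factor one-dimensional, so the top eigenspace consists solely of rank-one matrices. The subtle bookkeeping is (i) verifying that regularity is needed \emph{only} at the first layer, because once $\con_\alpha[\vf_1]$ is rank one the one-dimensional factor is automatically supplied by the data side at every higher layer regardless of the multiplicity of $M_l$, and (ii) ruling out the degenerate branches $M_l=0$, which I handle by first establishing $\cE>0$ directly from the base case.
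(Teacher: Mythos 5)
Your proposal is correct, and it reaches the theorem by a genuinely different route than the paper. The paper works with the first-order stationarity conditions of the joint problem: it vectorizes the layer-1 condition into $(X_\alpha\otimes W_{>1}^\top W_{>1})\vec(W_1)=\lambda\vec(W_1)$, uses local maximality to force $\lambda$ to be the top eigenvalue of that Kronecker matrix, and then treats all upper layers \emph{at once}, showing $\|W_{>1}\|_2=1$ by a perturbation argument that rests on two auxiliary lemmas (Lemma~\ref{lemma:norm2rank1}: under $\|W_l\|_F=1$ the product has spectral norm $1$ iff the factors are aligned-rank-1; Lemma~\ref{lemma:mm-property}: local maxima of $\cW\mapsto\|W_L\cdots W_2\vu\|_2$ with positive value are aligned-rank-1), concluding globality because all local maxima share the value $\lambda_{\max}(X_\alpha)$. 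You instead (i) prove an a priori bound $2\cE_\alpha\le\lambda_{\max}(X_\alpha)$ on all of $\Theta$ via submultiplicativity, so that ``local $\Rightarrow$ global'' is immediate once the value is attained, and (ii) replace the multi-layer lemmas by a bottom-up induction using only the block-coordinate restriction of local maximality: each restricted problem is a quadratic form on a sphere whose local maxima are exactly unit vectors in the top eigenspace, and the rank-one covariance $\con_\alpha[\vf_l]=\lambda_{\max}(X_\alpha)\vv_l\vv_l^\top$ propagates the one-dimensional Kronecker factor upward, so regularity of $W_{>1}^\top W_{>1}$ is needed only at layer 1 --- exactly where the paper uses it too. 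Your bookkeeping is sound: the top eigenspace of $X_\alpha\otimes M$ is $E_{\lambda_{\max}}(X_\alpha)\otimes\mathrm{span}(\vv_1)$ even though $X_\alpha$ may have negative eigenvalues (because $M$ is PSD), positivity of $\cE$ rules out $M_l=0$ at higher layers, and the shared vector $\vv_{l-1}$ between consecutive layers makes the product telescope to $\vv_L\vv_0^\top$. What each approach buys: yours is more elementary and self-contained (no product-norm lemmas, no multi-layer perturbation), and the explicit upper bound is cleaner than the paper's ``all local maxima have the same value'' step, which implicitly needs compactness of $\Theta$ so that a global maximizer exists and is itself a local maximum; the paper's route, in exchange, builds reusable machinery --- Lemma~\ref{lemma:mm-property} and its per-filter variant Lemma~\ref{lemma:bn-reg} directly power Corollary~\ref{co:pre-filter-normalization}, which your layer-wise argument would have to be adapted to recover. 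One shared caveat: both proofs extract $\mu_{\max}(W_{>1}^\top W_{>1})>0$ from the simplicity hypothesis, which is only forced when $n_1\ge 2$; you flag this edge case more explicitly than the paper does.
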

\begin{restatable}{corollary}{mlpispcabn}
\label{co:pre-filter-normalization}
If we additionally use per-filter normalization (i.e., $\|\vw_{lk}\|_2 = 1 / \sqrt{n_l}$), then Thm.~\ref{thm:pca} holds and $\vv_l$ is more constrained: $[\vv_l]_k = \pm 1 / \sqrt{n_l}$ for $1 \le l \le L-1$.
\end{restatable}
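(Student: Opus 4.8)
The plan is to leverage the mechanism behind Theorem~\ref{thm:pca} almost verbatim, since per-filter normalization is strictly stronger than the layerwise Frobenius constraint. First I would observe that $\|\vw_{lk}\|_2 = 1/\sqrt{n_l}$ for all $k$ forces $\|W_l\|_F^2 = \sum_{k=1}^{n_l}\|\vw_{lk}\|_2^2 = 1$, so the per-filter feasible set is a subset of the set $\Theta$ in Eqn.~\ref{eq:pca-objective-main}. Consequently it suffices to (i) verify that every local maximum of $\cE_\alpha$ over this smaller set still has the rank-$1$ layer structure of Theorem~\ref{thm:pca}, and (ii) read off the sharpened constraint on $\vv_l$ from the per-filter normalization applied to that structure. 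Part (ii) is immediate: if $W_l = \vv_l\vv_{l-1}^\top$ with $\|\vv_{l-1}\|_2 = 1$, then the $k$-th filter has norm $\|\vw_{lk}\|_2 = |[\vv_l]_k|$, so the constraint $\|\vw_{lk}\|_2 = 1/\sqrt{n_l}$ is equivalent to $[\vv_l]_k = \pm 1/\sqrt{n_l}$. The work is therefore all in part (i).

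For part (i) I would re-run the layer-peeling/downward induction that drives Theorem~\ref{thm:pca}, from $l=L$ down to $l=1$, checking that the per-filter constraint on the hidden layers $W_1,\dots,W_{L-1}$ does not disturb the argument (the top layer $W_L$ keeps only its Frobenius constraint, which is why the sharpened conclusion stops at $L-1$). Fixing all layers but $W_L$, the top-layer subproblem is $\max_{\|W_L\|_F=1}\frac12\tr(W_L\con_\alpha[\vf_{L-1}]W_L^\top)$, a Rayleigh-type problem on the sphere whose local maxima (given a distinct top eigenvalue) are rank-$1$, $W_L=\vv_L\vv_{L-1}^\top$ with $\vv_{L-1}$ the top eigenvector of $\con_\alpha[\vf_{L-1}]$. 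Inductively, once $W_{l+1},\dots,W_L$ are rank-$1$ of the form $W_m=\vv_m\vv_{m-1}^\top$, the product telescopes to $W_{>l}=\vv_L\vv_l^\top$, so $W_{>l}^\top W_{>l}=\vv_l\vv_l^\top$ is itself rank-$1$. The layer-$l$ objective then collapses to a single scalar direction: writing $\vq:=W_l^\top\vv_l=\sum_{k}[\vv_l]_k\vw_{lk}$, one gets $\cE_\alpha=\tfrac12\,\vq^\top\con_\alpha[\vf_{l-1}]\vq$.

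This collapse is exactly what makes the per-filter constraint tractable. The per-filter stationarity condition is $[\vv_l]_k\,\con_\alpha[\vf_{l-1}]\vq=\nu_{lk}\vw_{lk}$ for each $k$, so every active filter ($[\vv_l]_k\neq0$) is parallel to the common vector $\con_\alpha[\vf_{l-1}]\vq$; hence $\vq$ is an eigenvector of $\con_\alpha[\vf_{l-1}]$, and a local maximum (with distinct top eigenvalue) forces $\vq$ to align with the top eigenvector $\vv_{l-1}$. Each filter is then $\vw_{lk}=\pm\frac{1}{\sqrt{n_l}}\vv_{l-1}$, i.e.\ $W_l=\vv_l\vv_{l-1}^\top$ with $[\vv_l]_k=\pm1/\sqrt{n_l}$, completing the induction. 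Telescoping down to $l=1$ gives $W=\vv_L\vv_0^\top$ with $\vv_0$ the top eigenvector of $X_\alpha=\con_\alpha[\vx]$ and objective $2\cE^*=\lambda_{\max}(X_\alpha)$, so every such local maximum is global and Theorem~\ref{thm:pca} holds, with the extra conclusion $[\vv_l]_k=\pm1/\sqrt{n_l}$ supplied by part (ii).

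I expect the only real obstacle to be the degenerate filters where $[\vv_l]_k=0$: for these the stationarity condition imposes no direction, so $W_l$ need not be exactly rank-$1$ and the sign pattern $\pm1/\sqrt{n_l}$ is not literally forced. The clean resolution is to note that such filters are annihilated by $\vv_l$ in the product $W_{>l}W_l=\vv_L\vq^\top$, so they affect neither $W$ nor $\cE_\alpha$, leaving global optimality intact; to obtain the stated entrywise conclusion one invokes the genericity/distinct-eigenvalue hypotheses already present in Theorem~\ref{thm:pca} (under which the relevant top eigenvectors have no vanishing coordinates). The remaining ingredients — the sphere Rayleigh-quotient facts and the telescoping — are routine, so the entire proof is really the observation that rank-$1$ upper layers reduce each per-filter subproblem to a one-dimensional eigenvalue problem.
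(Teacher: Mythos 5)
Your part (ii) is correct and matches the paper, but part (i) --- the top-down induction on the energy itself --- has two genuine gaps. First, your base case needs $\con_\alpha[\vf_{L-1}]$ to have a \emph{simple} top eigenvalue. This matrix equals $W_{L-1}\cdots W_1 X_\alpha W_1^\top \cdots W_{L-1}^\top$, so it depends on exactly the lower layers your induction has not yet analyzed; the corollary only inherits the hypothesis of Theorem~\ref{thm:pca}, which is simplicity of the top eigenvalue of $W_{>1}^\top W_{>1}$ --- a different matrix --- so you are assuming something neither given nor derivable at that stage. Since $W_L$ carries only a Frobenius constraint (a single Lagrange multiplier), criticality and local maximality only force its rows to lie in the top eigenspace of $\con_\alpha[\vf_{L-1}]$; if that eigenspace has dimension $\ge 2$, the rows can be spread inside it and $W_L$ need not be rank-1 (this is precisely the degeneracy behind the one-layer counterexample discussed after Theorem~\ref{thm:pca}). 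Second, your resolution of the dead filters $[\vv_l]_k=0$ is unsupported: simplicity of an eigenvalue says nothing about its eigenvector having no zero coordinates, and the hypotheses of Theorem~\ref{thm:pca} say nothing of the sort either. Worse, the corollary's conclusion is exactly that no dead filters exist ($[\vv_l]_k=\pm 1/\sqrt{n_l}$ for \emph{every} $k$), so observing that dead filters ``don't affect $\cE_\alpha$'' concedes the claim rather than proves it; moreover, once $\vv_l$ has zero entries, your $\vq=W_l^\top\vv_l$ has $\|\vq\|_2<1$, the telescoping product picks up a factor strictly less than one, and you can no longer conclude $2\cE^*=\lambda_{\max}(X_\alpha)$. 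Excluding dead filters requires a \emph{joint} perturbation (re-activating a dead node by simultaneously moving its fan-out in $W_{l+1}$ and its fan-in in $W_l$), which a layer-at-a-time stationarity analysis with frozen upper layers cannot detect.

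The paper's proof avoids both problems by a different reduction: it reruns the proof of Theorem~\ref{thm:pca} with Lemma~\ref{lemma:bn-reg} replacing Lemma~\ref{lemma:mm-property}. The bottom layer is pinned down through the Kronecker-form criticality condition $(X_\alpha \otimes W_{>1}^\top W_{>1})\vec(W_1)=\lambda \vec(W_1)$ --- this is where the hypothesis on $W_{>1}^\top W_{>1}$ actually enters --- and the layers above are handled via the auxiliary objective $\cJ(\cW_{>1};\vu)=\|W_{>1}\vu\|_2$. In that peeling, the quadratic form seen by each layer is the rank-one matrix $\vv'_{l-1}{\vv'}^\top_{l-1}$, whose positive top eigenvalue is automatically simple, so no per-layer genericity assumption is needed; and because the upper layers are re-optimized whenever a lower layer is perturbed, any row of $W_l$ not equal to $\pm n_l^{-1/2}\,\vv'_{l-1}/\|\vv'_{l-1}\|_2$ can be rotated to strictly increase the objective, contradicting local maximality --- which is exactly what eliminates dead or misaligned rows under the per-filter constraint. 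To salvage your structure you would need to import both devices; as written, your argument establishes a weaker conclusion under assumptions stronger than (and different from) those of the corollary.
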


\textbf{Remark}. Here we prove that given fixed $\alpha$, maximizing $\cE_\alpha(\vtheta)$ gives rank-1 solutions for deep linear network. This conclusion is an extension of~\cite{jing2021understanding}, which shows weight collapsing happens if $\vtheta$ is 2-layer linear network and $\alpha$ is fixed. If the pairwise importance $\alpha$ is adversarial, then it may not lead to a rank-1 solution. In fact, $\alpha$ can magnify minimal eigen-directions and change the eigenstructure of $X_\alpha$ continuously. We leave it for future work. 

Note that the condition that ``$W^\top_{>1}W_{>1}$ has distinct maximal eigenvalue'' is important. Otherwise there are counterexamples. For example, consider 1-layer linear network $\vz = W_1\vx$, and $X_\alpha$ has duplicated maximal eigenvalues (with $\vu_1$ and $\vu_2$ being corresponding orthogonal eigenvectors), then $W_{>1}^\top W_{>1} = I$ (i.e., it has degenerated eigenvalues), and for any local maximal $W_1$, its row vector can be arbitrary linear combinations of $\vu_1$ and $\vu_2$ and thus $W_1$ is not rank-1. 

Compared to recent works~\cite{ji2021power} that also relates CL with PCA in linear representation setting using constant $\alpha$, our Theorem~\ref{thm:pca} has no statistical assumptions on data distribution and augmentation, and operates on vanilla InfoNCE loss and deep architectures.

\def\rank{\mathrm{rank}}
\def\aug{\mathrm{aug}}

\vspace{-0.05in}
\section{How Representation Learning Differs in Two-layer ReLU Network} 
\vspace{-0.05in}
\label{sec:relu-behavior}
So far we have shown that the max player $\max_{\vtheta}\cE_\alpha(\vtheta) := \frac{1}{2}\tr(\con_\alpha[\vz(\vtheta)])$ is essentially a PCA objective when the input-output mapping $\vz = W(\vtheta)\vx$ is linear. A natural question arises. What is the benefit of CL if its representation learning component has such a simple nature? Why can it learn a good representation in practice beyond PCA?

For this, nonlinearity is the key but understanding its role is highly nontrivial. For example, when the neural network model is nonlinear, Thm.~\ref{thm:general-E} and Corollary~\ref{co:minimax} holds but \emph{not} Corollary~\ref{co:pca-reparam}. Therefore, there is not even a well-defined $X_\alpha$ due to the fact that multiple hidden nodes can be switched on/off given different data input. Previous works~\cite{safran2018spurious,du2018gradient} also show that with nonlinearity, in supervised learning spurious local optima exist. 

Here we take a first step to analyze nonlinear cases. We study 2-layer models with ReLU activation $h(x) = \max(x, 0)$. We show that with a proper data assumption, the 2-layer model shares a \emph{modified} version of dynamics with its linear version, and the contrastive covariance term $X_\alpha$ (and its eigenstructure) remains well-defined and useful in nonlinear case. 

\vspace{-0.05in}
\subsection{The 2-layer ReLU network and data model}
\vspace{-0.05in}
We consider the bottom-layer weight $W_1 = [\vw_{11}, \vw_{12}, \ldots, \vw_{1K}]^\top$ with $\vw_{1k}$ being the $k$-th filter. For brevity, let $K=n_1$ be the number of hidden nodes. 
We still consider solution in the constraint set $\Theta$ (Eqn.~\ref{eq:pca-objective-main}), since Lemma~\ref{lemma:l2-layer-norm} still holds for ReLU networks. This model is named \relumodel{}. 

In addition, we assume the following data model: 
\begin{assumption}[Orthogonal mixture model within receptive field $R_k$]
\label{assumption:orth-mixture}
There exists a set of orthonormal bases $\{\bar\vx_{m}\}_{m=1}^M$ so that any input data $\vx[i] = \sum_m a_{m}[i] \bar\vx_{m}$ satisfies the property that $a_{m}[i]$ is \textbf{Nonnegative}: $a_{m}[i] \ge 0$, \textbf{One-hot}: for any $k$, $a_{m}[i] > 0$ for at most one $m$ and \textbf{Augmentation} only scales $\vx_k$ by a (sample-dependent) factor, i.e., $\vx[i'] = \gamma[i]\vx[i]$ with $\gamma[i] > 0$. 
\end{assumption}
Since all $\vx$ appears in the inner-product with the weight vectors $\vw_{1k}$, with a rotation of coordination, we can just set $\bar\vx_m = \ve_m$, where $\ve_m$ is the one-hot vector with $m$-th component being 1. In this case, $\vx\ge 0$ is always a one-hot vector with only at most only one positive entry. 

Intuitively, the model is motivated by sparsity: in each instantiation of $\vx$, there are very small number of activated modes and their linear combination becomes the input signal $\vx$. As we shall see, even with this simple model, the dynamics of ReLU network behaves very differently from the linear case.  

With this assumption, we only need to consider nonnegative low-layer weights and $X_\alpha$ is still a valid quantity for $\relumodel{}$: 
\begin{restatable}[Evaluation of \relumodel{}]{lemma}{lemmaevaluationtwolayer}
\label{lemma:negative-do-not-matter}
If Assumption~\ref{assumption:orth-mixture} holds, setting $\vw'_{1k} = \max(\vw_{1k}, 0)$ won't change the output of \relumodel{}. Furthermore, if $W_1 \ge 0$, then the formula for linear network $\cE_\alpha =\frac{1}{2}\tr(W_2W_1X_\alpha W_1^\top W_2^\top)$ still works for \relumodel{}. 
\end{restatable}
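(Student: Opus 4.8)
The plan is to exploit the one-hot nonnegative structure that Assumption~\ref{assumption:orth-mixture} imposes on every sample, so that each ReLU unit acts affinely on the data it actually sees. First I would pass to the rotated coordinates in which $\bar\vx_{m}=\ve_m$, as the text already does, so that each input $\vx[i]$ is a nonnegative multiple of a single basis vector, $\vx[i]=a_{m}[i]\,\ve_m$ with $a_{m}[i]\ge 0$ for the unique active mode $m=m(i)$ (the degenerate case $\vx[i]=\vzero$, where no mode is active, being handled trivially). Crucially, the ``Augmentation only scales'' clause gives $\vx[i']=\gamma[i]\vx[i]$ with $\gamma[i]>0$, so every augmented sample is again a nonnegative one-hot vector; I would record this up front since the $\con_\alpha$ identity later differences over augmentation pairs and thus needs linearity on both $\vz[i]$ and $\vz[i']$.

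For the first claim I would compute the pre-activation of filter $k$ directly. With the $k$-th row of $W_1$ equal to $\vw_{1k}^\top$, we get $\vw_{1k}^\top\vx[i]=a_{m}[i]\,[\vw_{1k}]_m$. Because $a_{m}[i]\ge 0$, the ReLU factors through the coefficient: $h(\vw_{1k}^\top\vx[i])=a_{m}[i]\,\max([\vw_{1k}]_m,0)=a_{m}[i]\,[\vw'_{1k}]_m=h((\vw'_{1k})^\top\vx[i])$, where $\vw'_{1k}=\max(\vw_{1k},0)$. Hence each hidden activation $[\vf_1[i]]_k$ is unchanged when the negative part of every first-layer filter is clipped, and since clipping leaves $W_2$ untouched, the output $\vz[i]=W_2 h(W_1\vx[i])$ is unchanged on all samples, original and augmented alike. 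This proves the first part and justifies restricting attention to $W_1\ge 0$.

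For the second claim, assuming $W_1\ge 0$, I would observe that for any nonnegative input $\vx[i]\ge 0$ the pre-activation $W_1\vx[i]\ge 0$ componentwise, so the ReLU acts as the identity, $h(W_1\vx[i])=W_1\vx[i]$, giving $\vz[i]=W_2W_1\vx[i]$ and likewise $\vz[i']=W_2W_1\vx[i']$. Thus on the whole batch the map $\vx\mapsto\vz$ coincides with the single linear map $W_2W_1$. Then I would invoke the bilinearity of $\con_\alpha$, which is immediate from its definition in Eqn.~\ref{eq:contrastive-covariance} as a scalar-weighted sum of outer products of sample differences: substituting $\vz[i]=W_2W_1\vx[i]$ (and the same for $i'$) in every factor pulls the constant matrices out, yielding $\con_\alpha[\vz]=W_2W_1\,\con_\alpha[\vx]\,W_1^\top W_2^\top=W_2W_1 X_\alpha W_1^\top W_2^\top$, whence $\cE_\alpha=\frac12\tr(W_2W_1 X_\alpha W_1^\top W_2^\top)$, exactly the linear-network formula.

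I do not expect a serious obstacle: the argument is essentially bookkeeping once the coordinate reduction is in place. The only point that genuinely needs care is ensuring the linearization is valid simultaneously on $\vz[i]$ and on the augmented outputs $\vz[i']$, since the contrastive covariance $\con_\alpha$ mixes both through its augmentation-difference term; this is precisely where the scaling form of the augmentation is used to keep $\vx[i']$ nonnegative one-hot. A secondary nuisance is the at-most-one-active-mode convention, which forces me to separately note the trivial $\vx[i]=\vzero$ case, but this does not affect any of the computations above.
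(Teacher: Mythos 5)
Your proof is correct and follows essentially the same route as the paper: one-hotness plus nonnegativity of the coefficients lets the ReLU commute with clipping the negative weight entries (the paper does this via a two-case gate analysis, you do it by factoring out $a_m[i]\ge 0$, which is equivalent), and for the second part both arguments note that $W_1\ge 0$ and $\vx\ge 0$ keep every gate open so the network is linear on the data, after which bilinearity of $\con_\alpha$ gives the trace formula. Your explicit remark that the augmentation-scaling clause keeps $\vx[i']$ nonnegative one-hot is a point the paper leaves implicit, but it is not a different approach.
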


On the other hand, sharing the energy function $\cE_\alpha$ does not mean \relumodel{} is completely identical to its linear version. In fact, the dynamics follows its linear counterparts, but with important modifications:
\begin{restatable}[Dynamics of \relumodel{}]{theorem}{theoremdynamicstwolayer}
\label{thm:property-of-relu}
If Assumption~\ref{assumption:orth-mixture} holds, then the dynamics of \relumodel{} with $\vw_{1k} \ge 0$ is equivalent to linear dynamics with the \textbf{Sticky Weight rule}: any component that reaches 0 stays 0.
\end{restatable}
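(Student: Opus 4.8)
The plan is to compare the ReLU gradient flow with the linear flow of Lemma~\ref{lemma:dyn-linear} entry-by-entry, exploiting the one-hot structure of the input. First I would record the forward pass. Under Assumption~\ref{assumption:orth-mixture} each input is a scaled one-hot vector $\vx[i] = a_{m(i)}[i]\,\ve_{m(i)}$ with $a_{m(i)}[i]\ge 0$, so the $k$-th preactivation is $(W_1\vx[i])_k = a_{m(i)}[i]\,[\vw_{1k}]_{m(i)}$. With $\vw_{1k}\ge 0$ this is nonnegative, hence by Lemma~\ref{lemma:negative-do-not-matter} the ReLU acts as the identity and the outputs $\vz[i] = W_2W_1\vx[i]$ -- and therefore the energy $\cE_\alpha$ and the output gradients $\vg_i := \partial\cE_\alpha/\partial\vz[i]$ -- coincide exactly with those of the linear network. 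In particular the top-layer flow $\dot W_2 = \sum_i \vg_i(W_1\vx[i])^\top$ matches its linear counterpart, so the Sticky Weight rule is only needed for $W_1$.

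Next I would backpropagate through the (sample-dependent) ReLU gate $D_1[i] := \diag(\1[(W_1\vx[i])_k>0])$, giving the bottom-layer flow $\dot W_1 = \sum_i D_1[i]\,W_2^\top\vg_i\,\vx[i]^\top$, which differs from the linear flow $W_2^\top W_2 W_1 X_\alpha$ only through the insertion of $D_1[i]$. The key simplification is that $\vx[i]^\top$ has a single nonzero entry, at the active coordinate $m(i)$: the $(k,m)$ entry of $\dot W_1$ thus aggregates only samples with $m(i)=m$, and for each such sample $D_1[i]_{kk} = \1[a_m[i]\,[\vw_{1k}]_m>0] = \1[[\vw_{1k}]_m>0]$ since $a_m[i]>0$. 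This yields the dichotomy: if $[\vw_{1k}]_m>0$ the gate is on and the velocity equals the linear velocity $[W_2^\top W_2 W_1 X_\alpha]_{km}$; if $[\vw_{1k}]_m=0$ every contributing gate is off and the velocity is exactly zero. The first case reproduces the linear dynamics and the second is precisely the Sticky Weight rule, and together they keep $W_1\ge 0$ invariant.

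I would then verify that neither augmentation nor normalization spoils this. Since $\vx[i'] = \gamma[i]\vx[i]$ with $\gamma[i]>0$ is a scaled one-hot with the same sign pattern, $D_1[i']=D_1[i]$ and $\vz[i']=\gamma[i]\vz[i]$, so both the inter-sample and the intra-augmentation terms of $\con_\alpha[\vz]$ are assembled from the same gating and introduce no new coupling across modes. Moreover the constraint $\|W_1\|_F=1$ (Lemma~\ref{lemma:l2-layer-norm}) only subtracts a tangential term proportional to $W_1$ itself, which vanishes on any zero component, so the sticky rule survives normalization.

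The main obstacle I anticipate is making the boundary behavior at $[\vw_{1k}]_m=0$ rigorous: at this non-differentiable locus one must argue, using the convention $h'(0)=0$ (equivalently, that the gate is exactly off), that a zero component carries exactly zero velocity and hence stays frozen for all later time, so that the resulting ``linear-plus-sticky'' flow is well defined even though the linear dynamics alone might push that component positive or negative. Everything else reduces to the one-hot decoupling bookkeeping above.
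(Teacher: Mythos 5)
Your proof is correct and takes essentially the same route as the paper's: first establish that the forward passes (and hence the backpropagated gradients) of the ReLU network and its linear counterpart coincide when $W_1 \ge 0$, then compare the $(k,m)$ entries of $\dot W_1$ and use one-hotness to identify the ReLU gate on each contributing sample with the indicator $\mathbb{I}([\vw_{1k}]_m > 0)$, which is exactly the sticky-weight rule. Your additional checks on augmentation and on the norm constraint are sound but go beyond the paper's proof, which only carries out the unconstrained entry-wise gradient comparison.
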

As we will see, this modification leads to very different dynamics and local optima in \relumodel{} from linear cases, even when there is only one ReLU node. 

\vspace{-0.05in}
\subsection{Dynamics in One ReLU node}
\vspace{-0.05in}
\label{sec:one-hidden-node}
Now we consider the dynamics of the simplest case: \relumodel{} with only 1 hidden node. In this case, $W^\top_{>1}W_{>1}$ is a scalar and thus $W_2^\top W_2 = \tr(W_2^\top W_2) = 1$. We only need to consider $\vw_1 \in \rr^{n_1}$, which is the only weight vector in the lower layer, under the constraint $\|W_1\|_F = \|\vw_1\|_2 = 1$ (Eqn.~\ref{eq:pca-objective-main}). We denote this setting as \onehidden{}. 

The dynamics now becomes very different from linear setting. Under linear network, according to Theorem~\ref{thm:pca}, $\vw_1$ converges to the largest eigenvector of $X_\alpha = \con_\alpha[\vx_1]$. For \onehidden{}, situation differs drastically: 
\begin{restatable}{theorem}{onenodedynamics}
\label{thm:relu-1-node-dynamics}
If Assumption~\ref{assumption:orth-mixture} holds, then in \onehidden{}, $\vw_1\rightarrow \ve_m$ for certain $m$. 
\end{restatable}
Intuitively, this theorem is achieved by closely tracing the dynamics. When the number of positive entries of $\vw_1$ is more than 1, the linear dynamics always hits the boundary of the polytope $\vw_1 \ge 0$, making one of its entry be zero, and stick to zero due to sticky weight rule. This procedure repeats until there is only one survival positive entry in $\vw_1$. 

Overall, this simple case already shows that nonlinear landscape can lead to many local optima: for any $m$, $\vw_1 = \ve_m$ is one local optimal. Which one the training falls into depends on weight initialization, and critically affects the properties of per-trained models. 

\def\rank{\mathrm{rank}}
\def\aug{\mathrm{aug}}

\begin{figure}
    \centering
    \includegraphics[width=.9\textwidth]{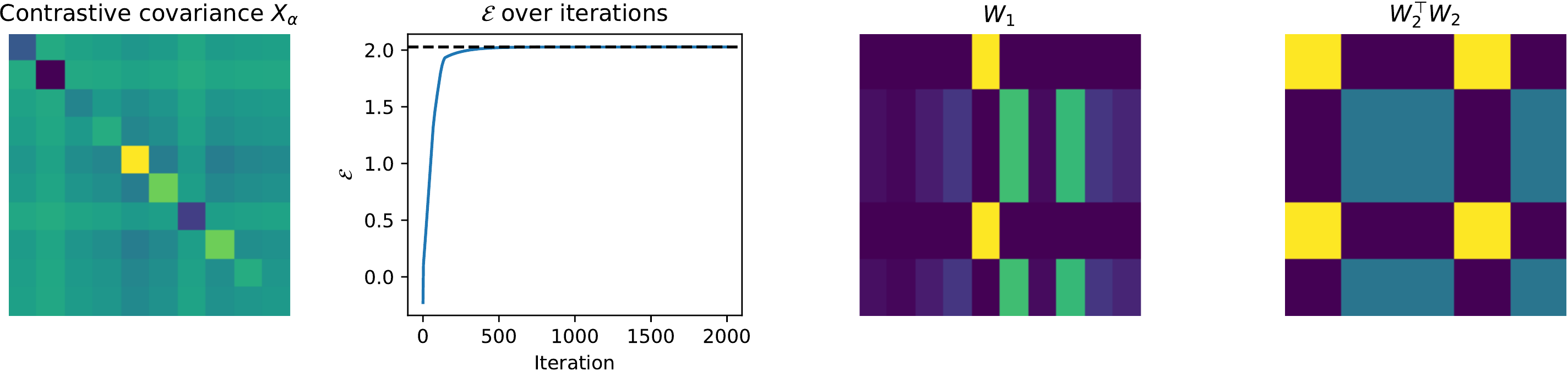}
    \vspace{-0.1in}
    \caption{\small Theorem~\ref{thm:relu-diversity} shows that training \relumodel{} could lead to more diverse hidden weight patterns beyond rank-1 solution obtained in the linear case (shown in right two figures: converged $W_1$ and $W_2^\top W_2$).}
    \label{fig:relu-diversity}
\end{figure}

\vspace{-0.05in}
\subsection{Multiple hidden nodes}
\vspace{-0.05in}
\label{sec:multiple-hidden-one-receptive-field}
For complicated situations like multiple hidden units, completely characterizing the training dynamics like Theorem~\ref{thm:relu-1-node-dynamics} becomes hard (if not impossible). Instead, we focus on fixed point analysis.

For deep linear model, using multiple hidden units does not lead to any better solutions. According to Thm.~\ref{thm:pca}, at local optimal, $W_1 = \vv_1\vv^\top_0$. This means that the weights $\vw_{1k}$, which are row vectors of $W_1$, are just a scaled version of the maximal eigenvector $\vv_0$ of $X_\alpha$. 
Moreover, this is independent of the eigenstructure of $X_\alpha$ as long as $\lambda_{\max}(X_\alpha) > 0$. 

In \relumodel{}, the situation is a bit different. Thm.~\ref{thm:relu-diversity} shows that these hidden nodes are (slightly) more diverse. Fig.~\ref{fig:relu-diversity} shows one such example. The intuition here is that in nonlinear case, rank-1 structure of the critical points may be replaced with low-rank structures.  

\begin{restatable}[\relumodel{} encourages diversity]{theorem}{reludiversity}
\label{thm:relu-diversity}
If Assumption~\ref{assumption:orth-mixture} holds, then for any local optimal $(W_2, W_1) \in \Theta$ of $\relumodel{}$ with $\cE > 0$, either $W_1 = \vv\ve_m^\top$ for some $m$ and $\vv \ge 0$, or $\rank(W_1) > 1$.  
\end{restatable}

\vspace{-0.1in}
\section{Experiments}
\vspace{-0.05in}
\label{sec:experiment}
\label{sec:different-loss-formulation}
We evaluate our \method{} framework (Def.~\ref{def:method}) in CIFAR10~\cite{krizhevsky2009learning} and STL-10~\cite{coates2011analysis} with ResNet18~\cite{he2016deep}, and compare the downstream performance of multiple losses, with regularizers taking the form of $\cR(\alpha) = \sum_i\sum_{j\neq i} r(\alpha_{ij})$ with a constraint $\sum_{j\neq i} \alpha_{ij} = 1$. Here $r$ can be different concave functions: 
\begin{itemize}
    \item (\method{}-$r_H$) Entropy regularizer $r_H(\alpha_{ij}) = -\tau \alpha_{ij}\log \alpha_{ij}$;
    \item (\method{}-$r_\gamma$) Inverse regularizers $r_\gamma(\alpha_{ij}) = \frac{\tau}{1-\gamma} \alpha_{ij}^{1-\gamma}$ ($\gamma > 1$). 
    \item (\method{}-$r_s$) Square regularizer $r_s(\alpha_{ij}) = -\frac{\tau}{2}\alpha^2_{ij}$.
\end{itemize}
Besides, we also compare with the following:
\begin{itemize}
\item Minimizing InfoNCE or quadratic loss: $\min_{\vtheta}\cL(\vtheta)$ for $\cL \in \{\cL_{nce}, \cL_{quadratic}\}$.
\item Setting $\alpha$ as InfoNCE (Eqn.~\ref{eq:infonce-alpha-beta}) and backpropagates through $\alpha=\alpha(\vtheta)$ with respect to $\vtheta$. 
\item (\method{}-direct) Directly setting $\alpha$ (here $p > 1$):
\begin{equation}
     \alpha_{ij} = \frac{\exp(-d^p_{ij}/\tau)}{\sum_j\exp(-d^p_{ij}/\tau)}   \label{eq:direct-alpha} 
\end{equation}
\end{itemize}
For inverse regularizer $r_\gamma$, we pick $\gamma=2$ and $\tau=0.5$; for direct-set $\alpha$, we pick $p = 4$ and $\tau=0.5$; for square regularizer, we use $\tau=5$. All training is performed with Adam~\cite{kingma2014adam} optimizer. Code is written in PyTorch and a single modern GPU suffices for the experiments.

\begin{table}[t]
\centering
\small
\caption{\small Comparison over multiple loss formulations (ResNet18 backbone, batchsize 128). Top-1 accuracy with linear evaluation protocol. Temperature $\tau=0.5$ and learning rate is $0.01$. \textbf{Bold} is highest performance and \textcolor{blue}{blue} is second highest. Each setting is repeated 5 times with different random seeds. \label{tbl:diff-formulation}}
\setlength{\tabcolsep}{1pt}
\begin{tabular}{l||c|c|c||c|c|c}
&\multicolumn{3}{c||}{\emph{CIFAR-10}} & \multicolumn{3}{c}{\emph{STL-10}} \\
\hline 
         & 100 epochs & 300 epochs & 500 epochs & 100 epochs & 300 epochs & 500 epochs \\ 
\hline
$\cL_{quadratic}$ & $63.59\pm 2.53$ & $73.02\pm 0.80$ & $73.58\pm 0.82$ & $55.59\pm 4.00$ & $64.97\pm 1.45$ & $67.28\pm 1.21$ \\
$\cL_{nce}$ & $84.06\pm 0.30$ & $87.63\pm 0.13$ & $87.86\pm 0.12$ & $78.46\pm 0.24$ & $82.49\pm 0.26$ & $83.70\pm 0.12$ \\
backprop $\alpha(\vtheta)$ & $83.42\pm 0.25$ & $87.18\pm 0.19$ & $87.48\pm 0.21$ & $77.88\pm 0.17$ & $81.86\pm 0.30$ & $83.19\pm 0.16$  \\
\method{}-$r_H$ & $84.27\pm 0.24$ & $\textcolor{blue}{87.75\pm 0.25}$ & $\textcolor{blue}{87.92\pm 0.24}$ & $\textcolor{blue}{78.53\pm 0.35}$ & $\textcolor{blue}{82.62\pm 0.15}$ & $\textcolor{blue}{83.74\pm 0.18}$ \\
\method{}-$r_\gamma$ & $83.72\pm 0.19$ &  $87.51\pm 0.11$ & $87.69\pm 0.09$ & $78.22\pm 0.28$ & $82.19\pm 0.52$ & $83.47\pm 0.34$  \\ 
\method{}-$r_s$ & $\textcolor{blue}{84.72\pm 0.10}$ & $86.62\pm 0.17$ & $86.74\pm 0.15$ & $76.95\pm 1.06$ & $80.64\pm 0.77$ &  $81.65\pm 0.59$ \\
\method{}-direct & $\mathbf{85.11\pm 0.19}$ & $\mathbf{87.93\pm 0.16}$ &  $\mathbf{88.09\pm 0.13}$ & $\mathbf{79.32\pm 0.36}$ & $\mathbf{82.95\pm 0.17}$ & $\mathbf{84.05\pm 0.20}$
\end{tabular}
\vspace{-0.1in}
\end{table}

\begin{table}[]
\small
\setlength{\tabcolsep}{1pt}
    \begin{tabular}{l||c|c|c||c|c|c|}
&\multicolumn{3}{c||}{\emph{ResNet18 Backbone}} & \multicolumn{3}{c|}{\emph{ResNet50 Backbone}} \\
\hline
&\multicolumn{6}{c|}{\emph{CIFAR-100}} \\
\hline 
         & 100 epochs & 300 epochs & 500 epochs & 100 epochs & 300 epochs & 500 epochs \\ 
\hline
$\cL_{nce}$ & $55.70 \pm 0.37$ &	$59.71 \pm 0.36$ & $59.89 \pm 0.34$ & $60.16 \pm 0.48$ &	$65.40 \pm 0.31$ & $65.53 \pm 0.30$ \\ 
\method{}-direct & $\mathbf{57.63\pm 0.07}$ & $\mathbf{60.12\pm 0.26}$ & $\mathbf{60.27\pm 0.29}$ & $\mathbf{62.93\pm 0.28}$ & $\mathbf{65.84\pm 0.14}$ & $\mathbf{65.87\pm 0.21}$ \\
\hline
&\multicolumn{6}{c|}{\emph{CIFAR-10}} \\
\hline
$\cL_{nce}$ &  $84.06\pm 0.30$ & $87.63\pm 0.13$ & $87.86\pm 0.12$ & $86.39 \pm 0.16$ &	$89.97 \pm 0.14$ & $90.19 \pm 0.23$ \\ 
\method{}-direct & $\mathbf{85.11\pm 0.19}$ & $\mathbf{87.93\pm 0.16}$ &  $\mathbf{88.09\pm 0.13}$ & $\mathbf{87.79\pm 0.25}$ & $\mathbf{90.41\pm 0.18}$ & $\mathbf{90.50\pm 0.21}$ \\ 
\hline
&\multicolumn{6}{c|}{\emph{STL-10}} \\
\hline
$\cL_{nce}$ & $78.46\pm 0.24$ & $82.49\pm 0.26$ & $83.70\pm 0.12$ & $81.64 \pm 0.24$ & $86.57 \pm 0.17$ & $\mathbf{87.90 \pm 0.22}$ \\ 
\method{}-direct & $\mathbf{79.32\pm 0.36}$ & $\mathbf{82.95\pm 0.17}$ & $\mathbf{84.05\pm 0.20}$ & $\mathbf{83.20\pm 0.25}$ & $\mathbf{87.17\pm 0.14}$ & $87.85\pm 0.21$ \\ 
    \end{tabular}
    \vspace{-0.1in}
    \caption{\small More experiments with ResNet18/ResNet50 backbone on CIFAR-10, STL-10 and CIFAR-100. Batchsize is 128. For ResNet18, learning rate is 0.01; for ResNet50, learning rate is 0.001.}
    \label{tab:more-ablation}
    %
\end{table}

The results are shown in Tbl.~\ref{tbl:diff-formulation}. We can see that (1) backpropagating through $\alpha(\vtheta)$ is  worse, justifying our perspective of coordinate-wise optimization, (2) our proposed \method{} works for different regularizers, (3) using different regularizer leads to comparable or better performance than original InfoNCE $\cL_{nce}$, (4) the pairwise importance $\alpha$ does not even need to come from a minimization process. Instead, we can directly set $\alpha$ based on pairwise squared distances $d^2_{ij}$ and $d^2_i$. For \method{}-direct, the performance is slightly worse if we do not normalize $\alpha_{ij}$ (i.e., $\alpha_{ij} := \exp(-d_{ij}^p/\tau)$). It seems that for strong performance, $\frac{\dd r}{\dd \alpha_{ij}}$ should go to $+\infty$ when $\alpha_{ij} \rightarrow 0$. Regularizers that do not satisfy this condition (e.g., squared regularizer $r_s$) may not work as well. 

Tbl.~\ref{tab:more-ablation} shows more experiments with different backbones (e.g., ResNet50) and more complicated datasets (e.g., CIFAR-100). Overall, we see consistent gains of $\alpha$-CL over InfoNCE in early stages of the training (e.g., 1-2 point of absolute percentage gain) and comparable performance at 500 epoch. More ablations on batchsizes and exponent $p$ in Eqn.~\ref{eq:direct-alpha} are provided in Appendix~\ref{sec:appendix-more-exp}. 

\vspace{-0.07in}
\section{Conclusion and Future Work}
\vspace{-0.07in}
We provide a novel perspective of contrastive learning (CL) via the lens of coordinate-wise optimization and propose a unified framework called \method{} that not only covers a broad family of loss functions including InfoNCE, but also allows a direct set of importance of sample pairs. Preliminary experiments on CIFAR10/STL-10/CIFAR100 show comparable/better performance with the new loss than InfoNCE. Furthermore, we prove that with deep linear networks, the representation learning part is equivalent to Principal Component Analysis (PCA). In addition, we also extend our analysis to representation learning in 2-layer ReLU network, shedding light on the important difference in representation learning for linear/nonlinear cases. 

\textbf{Future work}. Our framework \method{} turns various loss functions into a unified framework with different choices of pairwise importance $\alpha$ and how to find good choices remains open. Also, we mainly focus on representation learning with fixed pairwise importance $\alpha$. However, in the actual training, $\alpha$ and $\vtheta$ change concurrently. Understanding their interactions is an important next step. Finally, removing Assumption~\ref{assumption:orth-mixture} in ReLU analysis is also an open problem to be addressed later.

\bibliography{references}
\bibliographystyle{icml2022}

\clearpage

\ifarxiv
\else
\section*{Checklist}

\begin{enumerate}

\item For all authors...
\begin{enumerate}
  \item Do the main claims made in the abstract and introduction accurately reflect the paper's contributions and scope?
    \answerYes{}
  \item Did you describe the limitations of your work?
    \answerYes{}
  \item Did you discuss any potential negative societal impacts of your work?
    \answerNA{}
  \item Have you read the ethics review guidelines and ensured that your paper conforms to them?
    \answerYes{}
\end{enumerate}

\item If you are including theoretical results...
\begin{enumerate}
  \item Did you state the full set of assumptions of all theoretical results?
    \answerYes{}
        \item Did you include complete proofs of all theoretical results?
    \answerYes{}
\end{enumerate}

\item If you ran experiments...
\begin{enumerate}
  \item Did you include the code, data, and instructions needed to reproduce the main experimental results (either in the supplemental material or as a URL)?
    \answerYes{}
  \item Did you specify all the training details (e.g., data splits, hyperparameters, how they were chosen)?
    \answerYes{}
        \item Did you report error bars (e.g., with respect to the random seed after running experiments multiple times)?
    \answerYes{}
        \item Did you include the total amount of compute and the type of resources used (e.g., type of GPUs, internal cluster, or cloud provider)?
    \answerYes{}
\end{enumerate}

\item If you are using existing assets (e.g., code, data, models) or curating/releasing new assets...
\begin{enumerate}
  \item If your work uses existing assets, did you cite the creators?
    \answerYes{}
  \item Did you mention the license of the assets?
    \answerNA{}
  \item Did you include any new assets either in the supplemental material or as a URL?
    \answerNo{}
  \item Did you discuss whether and how consent was obtained from people whose data you're using/curating?
    \answerNA{}
  \item Did you discuss whether the data you are using/curating contains personally identifiable information or offensive content?
    \answerNA{}
\end{enumerate}

\item If you used crowdsourcing or conducted research with human subjects...
\begin{enumerate}
  \item Did you include the full text of instructions given to participants and screenshots, if applicable?
    \answerNA{}
  \item Did you describe any potential participant risks, with links to Institutional Review Board (IRB) approvals, if applicable?
    \answerNA{}
  \item Did you include the estimated hourly wage paid to participants and the total amount spent on participant compensation?
    \answerNA{}
\end{enumerate}

\end{enumerate}
\fi


\onecolumn
\appendix

\section{Proofs}
\subsection{Section~\ref{sec:general-game-framework}}

\gradientmatch*
\begin{proof}
By the definition of gradient descent, we have for any component $\theta$ in a high-dimensional vector $\vtheta$: 
\begin{equation}
    -\frac{\partial \cL}{\partial \theta} = -\sum_{i=1}^N \frac{\partial \vz[i]}{\partial \theta}\frac{\partial \cL}{\partial \vz[i]} + \frac{\partial \vz[i']}{\partial \theta}\frac{\partial \cL}{\partial \vz[i']} \label{eq:dot-wl-original}
\end{equation}
Here we use the ``Denominator-layout notation'' and treat $\frac{\partial \cL}{\partial \vz[i]}$ as a column vector while $\frac{\partial \vz[i]}{\partial \theta}$ as a row vector. Using Lemma~\ref{lemma:gradient-form}, we have:
\begin{equation}
     -\frac{\partial \cL}{\partial \theta} = \con_\alpha\left[\frac{\partial \vz}{\partial \theta}, \vz^\top\right]
\end{equation}
On the other hand, treating $\alpha$ as independent variables of $\vtheta$, we compute (here $o_k$ is the $k$-th component of $\vz$):
\begin{equation}
    \frac{\partial \cE_\alpha}{\partial \theta} = \frac{1}{2} \sum_k \con_\alpha\left[\frac{\partial o_k}{\partial \theta}, o_k\right] + \frac{1}{2}\sum_k\con_\alpha\left[o_k, \frac{\partial o_k}{\partial \theta}\right]
\end{equation}
For scalar $x$ and $y$, $\con_\alpha[x,y] =\con_\alpha[y,x]$ and $\sum_k \con_\alpha[a_k, b_k] = \con_\alpha[\va, \vb^\top]$ for row vector $\va$ and column vector $\vb$. Therefore, 
\begin{equation}
    \frac{\partial \cE_\alpha}{\partial \vtheta} = \con_\alpha\left[\frac{\partial \vz}{\partial \vtheta}, \vz^\top\right]
\end{equation}

Therefore, we have 
\begin{equation}
    \frac{\partial \cE_\alpha}{\partial \vtheta} = -\frac{\partial \cL}{\partial \vtheta}
\end{equation} 
and the proof is complete.  
\end{proof}

\infoncealpha*
\begin{proof}
We just need to solve the internal minimizer w.r.t. $\alpha$. Note that each $\alpha_{i}$ can be optimized independently. 

First, we know that $\cE_\alpha(\vtheta) := \frac{1}{2}\tr \con_\alpha [\vz,\vz]$ can be written as:
\begin{eqnarray}
    \cE_\alpha(\vtheta) &=& \frac{1}{2}\sum_{i\neq j} \alpha_{ij} \left[\tr (\vz[i] - \vz[j]) (\vz[i] - \vz[j])^\top - \tr(\vz[i] - \vz[i'])(\vz[i] - \vz[i'])^\top\right] \\
    &=& \frac{1}{2}\sum_{i\neq j} \alpha_{ij} \left[\|\vz[i] - \vz[j]\|_2^2 - \|\vz[i] - \vz[i']\|^2_2\right] \\
    &=& \sum_{i\neq j} \alpha_{ij} \left(d^2_{ij} - d^2_i\right) 
\end{eqnarray}

For each $\alpha_{i\cdot}$, applying Lemma~\ref{lemma:linear-entropy} with $c_{ij} = d^2_{ij} - d^2_i$, the optimal solution $\alpha$ is:
\begin{eqnarray}
\alpha_{ij} &=& \frac{1}{\tau} \exp\left(-\frac{c_{ij}}{\tau}\right) \phi'\left(\sum_{j\neq i} \exp\left(-\frac{c_{ij}}{\tau}\right)\right) \\
&=& \frac{1}{\tau} \exp\left(\frac{d_i^2 - d^2_{ij}}{\tau}\right) \phi'\left(\sum_{j\neq i} \exp\left(\frac{d^2_i - d^2_{ij}}{\tau}\right)\right) \\
&=& \psi'(d^2_i - d^2_{ij}) \phi'\left(\sum_{j\neq i} \psi(d^2_i-d^2_{ij})\right)  \\
&=& \psi'(d^2_i - d^2_{ij}) \phi'(\xi_i)
\end{eqnarray}
which coincides with Eqn.~\ref{eq:alpha} that is from the gradient descent rule of the loss function $\cL_{\phi, \psi}$. 

In particular, for InfoNCE, we have $\phi(x) = \tau\log(\epsilon + x)$, $\phi'(x) = \tau / (x + \epsilon)$ and therefore:
\begin{equation}
\alpha_{ij} = \frac{\exp( (d^2_i-d^2_{ij}) / \tau)}{\epsilon + \sum_{j\neq i} \exp( (d^2_i - d^2_{ij}) / \tau)} = \frac{\exp(-d^2_{ij} / \tau)}{\epsilon\exp(-d^2_i/\tau) + \sum_{j\neq i} \exp( - d^2_{ij} / \tau)} 
\end{equation}
which is exactly the coefficients $\alpha_{ij}$ directly computed during minimization of $\cL_{nce}$. If $\epsilon = 0$, then the constraint becomes $\sum_{j\neq i} \alpha_{ij} = 1$ and we have:
\begin{equation}
     \alpha_{ij} = \frac{\exp(-d^2_{ij} / \tau)}{\sum_{j\neq i} \exp( - d^2_{ij} / \tau)}  
\end{equation}
That is, the coefficients $\alpha$ does not depend on intra-augmentation squared distance $d^2_i$.
\end{proof}

\minimaxeq*
\begin{proof}
The proof naturally follows from the conclusion of Theorem~\ref{thm:general-E} and Theorem~\ref{thm:infonce-alpha}. 
\end{proof}

\subsection{Section~\ref{sec:linear-is-pca}}
\anylinearispca*
\begin{proof}
Notice that in deep linear setting, $\vz = W(\vtheta)\vx$ where $W(\vtheta)$ does not dependent on specific samples. Therefore, $\con_\alpha[\vz, \vz] = W(\vtheta)\con_\alpha[\vx, \vx]W^\top(\vtheta) = W(\vtheta)X_\alpha W^\top(\vtheta) $. 
\end{proof}

\dynamicslinear*
\begin{proof}
We can start from Eqn.~\ref{eq:dot-wl-original} directly and takes out $J^\top_{>l}$. This leads to 
\begin{equation}
    \dot W_l = J^\top_{>l} \left(\sum_{i=1}^N \frac{\partial \cL}{\partial \vz[i]} \vf^\top_{l-1}[i] + \frac{\partial \cL}{\partial \vz[i']} \vf^\top_{l-1}[i']\right) = J^\top_{>l} \con_\alpha[\vz, \vf_{l-1}]  \label{eq:dot-wl-linear}
\end{equation}
Using that $\vz = J_{\ge l} \vf_{l-1}$ leads to the conclusion. If the network is linear, then $J_{>l}^\top[i] = J^\top_{>l}$ is a constant. Then we can take the common factor $J^\top_{>l}J_{\ge l}$ out of the summation, yield $\dot W_l = J^\top_{>l}J_{\ge l} F_{l-1}$. Here $F_{l} := \con_\alpha[\vf_l]$ is the contrastive covariance at layer $l$.  
\end{proof}

\subsubsection{Section~\ref{sec:normalization}}
\label{sec:appendix-normalization}
\begin{figure}
    \centering
    \includegraphics[width=0.5\textwidth]{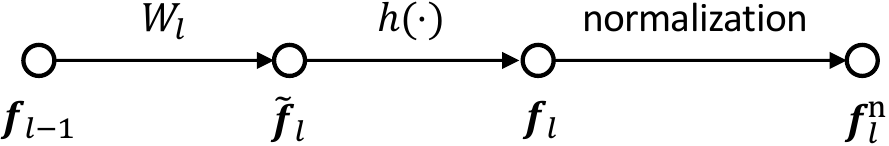}
    \caption{Notations on normalization (Sec.~\ref{sec:appendix-normalization}).}
    \label{fig:normalization}
\end{figure}

For this we talk about more general cases where the deep network is nonlinear. Let $h(\cdot)$ be the point-wise activation function and the network architecture looks like the following:
\begin{equation}
    \vz[i] := W_L h(W_{L-1}(h(\ldots W_1\vx[i]))) \label{eq:model-appendix}
\end{equation}
We consider the case where $h(\cdot)$ satisfies the following constraints:
\begin{definition}[Reversibility~\cite{tian2020understanding} / Homogeneity~\cite{du2018algorithmic}]
The activation function $h(x)$ satisfies $h(x) = h'(x) x$.
\end{definition}
This is satisfied by linear, ReLU, leaky ReLU and many polynomial activations (with an additional constant). With this condition, we have $\vf_l[i] = D_l W_l\vf_{l-1}[i]$, where $D_l = D_l(\vx[i]) := \diag[h'(\vw_{lk}^\top \vf_{l-1}[i])] \in \rr^{n_l\times n_l}$ is a diagonal matrix. For ReLU activation, the diagonal entry of $D_l$ is binary. 

\def\iin{\mathrm{in}}
\def\out{\mathrm{out}}

\begin{definition}[Reversible Layers~\cite{tian2020understanding}]
A layer is reversible if there exists $J[i]$ so that $\vf_\out[i] = J[i] \vf_\iin[i]$ and $\vg_\iin[i] = J^\top[i] \vg_\out[i]$ for each sample $i$. 
\end{definition}
It is clear that linear layers, ReLU and leaky ReLU are reversible. Lemma~\ref{lemma:normalization-lemma} tells us that $\ell_2$-normalization and LayerNorm are also reversible. 

\lemmaltwolayernorm*
\begin{proof}
See Lemma~\ref{lemma:layernorml2} that proves more general cases. 
\end{proof}

\subsubsection{Section~\ref{sec:local-is-global}}
\begin{definition}[Aligned-rank-1 solution]
\label{def:rank-1}
A solution $\vtheta = \{W_l\}_{l=1}^L$ is called aligned-rank-1, if there exists a set of unit vectors $\{\vv_l\}_{l=0}^L$ so that $W_l = \vv_l \vv_{l-1}^\top$ for $1 \le l \le L$. 
\end{definition}

\mlpispca*
\begin{proof}
A necessary condition for $\vtheta$ to be the local maximum is the critical point condition (here $\lambda_{l-1}$ is some constant):
\begin{equation}
    W^\top_{>l} W_{>l} W_l F_{l-1} = \lambda_{l-1} W_l \label{eq:critical-point}
\end{equation}
Right multiplying $W_l$ on both sides of the critical point condition for $W_l$, and taking matrix trace, we have:
\begin{equation}
    2\cE(\vtheta) = \tr(W^\top_{>l} W_{>l} W_l F_{l-1} W_l^\top) = \tr(\lambda_{l-1} W_l W_l^\top) = \lambda_{l-1} 
\end{equation}
Therefore, all $\lambda_l$ are the same, denoted as $\lambda$, and they are equal to the objective value. 

Now let's consider $l = 1$. Then we have:
\begin{equation}
    W^\top_{>1} W_{>1} W_1 X = \lambda W_1
\end{equation}
Applying $\vec(AXB) = (B^\top \otimes A)\vec(X)$, we have:
\begin{equation}
     (X \otimes W^\top_{>1} W_{>1}) \vec(W_1) =  \lambda\vec(W_1) \label{eq:eigfunc-kron-form}
\end{equation}
with the constraint that $\|\vec(W_1)\|^2_2 = \|W_1\|_F^2 = 1$. Similarly, we have $2\cE(\vtheta) = \lambda$. 

We then prove that $\lambda$ is the largest eigenvalue of $X \otimes W^\top_{>1} W_{>1}$. We prove by contradiction. If not, then $\vec(W_1)$ is not the largest eigenvector, then there is always a direction $W_1$ can move, while respecting the constraint $\|W_1\|_F=1$ and keeping $W_{>1}$ fixed, to make $\cE(\vtheta)$ strictly larger. Therefore, for any local maximum $\vtheta$, $\lambda$ has to be the largest eigenvalue of $X \otimes W^\top_{>1} W_{>1}$.  

Let $\{\vv_{0m}\}$ be the orthonormal basis of the eigenspace of $\lambda_{\max}(X)$ and $\vu$ be the (unique by the assumption) maximal unit eigenvector of $W^\top_{>1} W_{>1}$. Then $\vec(W_1) = \sum_m c_m \vv_{0m} \otimes \vu$ where $\sum_m c_m^2 = 1$, or $\vec(W_1) = \vv_0 \otimes \vu$ where the unit vector $\vv_0 := \sum_m c_m \vv_{0m}$. Plug $\vec(W_1) = \vv_0 \otimes \vu$ into Eqn.~\ref{eq:eigfunc-kron-form}, notice that $\vv_0$ is still the largest eigenvector of $X$, and we have $\lambda = \lambda_{\max}(X) \|W_{>1}\vu\|^2_2$.  

Now we show that $\lambda_{\max}(W^\top_{>1} W_{>1}) = \|W_{>1}\|^2_2 = 1$. If not, i.e., $\|W_{>1}\|_2 < 1$, then first by Lemma~\ref{lemma:norm2rank1}, we know that $\cW_{>1}:= \{W_L, W_{L-1}, \ldots, W_2\}$ must not be aligned-rank-1. Since $W_{>1}^\top W_{>1}$ is PSD and has unique maximal eigenvector $\vu$, the eigenvalue associated with $\vu$ must be strictly positive and thus $W_{>1}\vu \neq 0$. 

Then by Lemma~\ref{lemma:mm-property}, $\cW_{>1}$ is not a local maximum of $\cJ(\cW_{>1};\vu) := \max_{\cW_{>1}} \|W_{>1}\vu\|_2$ s.t. $\|W_l\|_F=1$, which means that there exists $\cW'_{>1}:= \{W'_L, W'_{L-1},\ldots, W'_2\}$ in the local neighborhood of $\cW_{>1}$ so that
\begin{itemize}
    \item $\|W'_l\|_F=1$ for $2\le l\le L$. That is, $W'$ is a feasible solution of $\cJ$.
    \item $\cJ(\cW'_{>1}) := \|W'_{>1}\vu\|_2 > \|W_{>1}\vu\|_2 = \cJ(\cW_{>1})$. 
\end{itemize}
Then let $\vtheta' := \{W'_L, W'_{L-1}, \ldots, W'_2, W_1\}$ which is a feasible solution to \linssl{}, we have:
\begin{eqnarray}
    2\cE(\vtheta') &=& \vec^\top(W_1) (X \otimes {W'}^\top_{>1} W'_{>1}) \vec(W_1) \\
    &=& (\vv^\top_{0}\otimes \vu^\top) (X \otimes {W'}^\top_{>1} W'_{>1}) (\vv_{0}\otimes \vu) \\
    &=& \lambda_{\max}(X) \|W'_{>1}\vu\|^2_2 \\
    &>& \lambda_{\max}(X) \|W_{>1}\vu\|^2_2 = \lambda = 2\cE(\vtheta)
\end{eqnarray}
This means that $\vtheta$ is not a local maximum, which is a contradiction. Note that $\vtheta'$ is not necessarily a critical point (and Eqn.~\ref{eq:critical-point} may not hold for $\vtheta'$). 

Therefore, $\lambda_{\max}(W^\top_{>1} W_{>1}) = \|W_{>1}\|^2_2 = 1$ and thus $2\cE(\vtheta) = \lambda = \lambda_{\max}(X)$.  

Since $\|W_{>1}\|_2 = 1$, again by Lemma~\ref{lemma:norm2rank1}, $W_{L:2}$ is aligned-rank-1 and $W_{>1} = \vv_L \vv_1^\top$ is also a rank-1 matrix. $W_{>1}^\top W_{>1} = \vv_1\vv_1^\top$ has a unique maximal eigenvector $\vv_1$.  Therefore $\vec(W_1) = \vv_0\otimes \vv_1$, or $W_1 = \vv_1\vv_0^\top$. As a result, $\vtheta := \{W_{L :2}, W_1\}$ is aligned-rank-1. 

Finally, since all local maxima have the same objective value $2\cE=\lambda_{\max}(X)$, they are all global maxima. 
\end{proof}

\textbf{Remarks}. Leveraging similar proof techniques, we can also show that with BatchNorm layers, the local maxima are more constrained. From Lemma~\ref{lemma:bn-weight} we knows that if each hidden node is covered with BatchNorm, then its fan-in weights are conserved. Therefore, without loss of generality, we could set the per-filter normalization: $\|\vw_{lk}\|_2 = 1$. In this case we have:

\begin{definition}[Aligned-uniform solution]
A solution $\vtheta$ is called aligned-uniform, if it is aligned-rank-1, and $[\vv_l]_k = \pm 1 / \sqrt{n_l}$ for $1 \le l \le L-1$. The two end-point unit vectors ($\vv_0$ and $\vv_L$) can still be arbitrary.  
\end{definition}

\mlpispcabn*
\begin{proof}
Leveraging Lemma~\ref{lemma:bn-reg} in Theorem~\ref{thm:pca} yields the conclusion. 
\end{proof}

\textbf{Remark}. We could see that with BatchNorm, the optimization problem is more constrained, and the set of local maxima have less degree of freedom. This makes optimization better behaved.   

\def\linear{\mathrm{linear}}
\def\relu{\mathrm{relu}}

\subsection{Section~\ref{sec:relu-behavior}}
\lemmaevaluationtwolayer*
\begin{proof}
For the first part, we just want to prove that if Assumption~\ref{assumption:orth-mixture} holds, then a 2-layer ReLU network with weights $\vw_{1k}$ and $W_2$ has the same activation as another ReLU network with  $\vw'_{1k} = \max(\vw_{1k},0) \ge 0$ and $W'_2=W_2$.

We are comparing the two activations:
\begin{eqnarray}
    f_{1k} &=& \max\left(\sum_m w_{1km} x_{km}, 0\right) \\
    f'_{1k} &=& \max\left(\sum_m \max(w_{1km}, 0) x_{km}, 0\right) = \sum_m \max(w_{1km}, 0) x_{km}
\end{eqnarray}
The equality is due to the fact that $\vx_k \ge 0$ (by nonnegativeness). Now we consider two cases. 

\textbf{Case 1}. If all $\vw_{1k} \ge 0$ then obviously they are identical.

\textbf{Case 2}. If there exists $m$ so that $w_{1km} < 0$. The only situation that the difference could happen is for some specific $\vx_k[i]$ so that $x_{km}[i] > 0$. By Assumption~\ref{assumption:orth-mixture}(one-hotness), for $m'\neq m$, $x_{km}[i] = 0$ so the gate $d_k[i] = \mathbb{I}(\vw_{1k}^\top \vx_k > 0) = 0$. On the other hand, ${\vw'}_{1k}^\top \vx_k = 0$ so $d_k'[i] = 0$. 

Therefore, in all situations, $f_{1k} = f'_{1k}$. 

For the second part, since $W_1 \ge 0$ and all input $\vx \ge 0$ by non-negativeness, all gates are open and the energy $\cE_\alpha$ of \relumodel{} is the same as the linear model. 
\end{proof}

\theoremdynamicstwolayer*
\begin{proof}
Let $\vw_{1k} \ge 0$ be the $k$-th filter to be considered and $w_{1km} \ge 0$ its $m$-th component. Consider a linear network with the same weights ($\vw'_{1k} = \vw_{1k}$ and $W_2' = W_2$) with only the ReLU activation removed. 

Now we consider the gradient rule of the ReLU network and the corresponding linear network with a sticky weight rule (here $g_k[i]$ is the backpropagated gradient sent to node $k$ for sample $i$, and $d_k[i]$ is the binary gating for sample $i$ at node $k$): 
\begin{eqnarray}
    \dot w_{1km} &=& \sum_i g_k[i] d_k[i] x_m[i] \\
    \dot w'_{1km} &=& \mathbb{I}(w_{1km} > 0) \sum_i g'_k[i] x_m[i]
\end{eqnarray}
Thanks to Lemma~\ref{lemma:relu-linear-identification}, we know the forward pass between two networks are identical and thus $g_k[i] = g'_k[i]$ so we don't need to consider the difference between backpropagated gradient.  

In the following, we will show that each summand of the two equations is identical.

\textbf{Case 1. $x_m[i] = 0$}. In that case, $g_k[i]x_m[i] = g_k[i]d_k[i]x_m[i] = 0$ regardless of whether the gate $d_k[i]$ is open or closed.  

\textbf{Case 2. $x_m[i] > 0$}. There are two subcases:

\emph{Subcase 1: $d_k[i] = 1$}. In this case, the ReLU gating of $k$-th filter is open, then $g_k'[i]x_m[i] = g_k[i]x_m[i] = g_k[i]d_k[i]x_m[i]$. By Assumption~\ref{assumption:orth-mixture}(One-hotness), for other $m'\neq m$, $x_{km'}[i] = 0$, since $d_k[i] = 1$, it must be the case that $w_{1km} > 0$ and thus $\mathbb{I}(w_{1km} > 0) = 1$. So the two summands are identical. 

\emph{Subcase 2: $d_k[i] = 0$}. Then $w_{1km}$ must be $0$, otherwise since $\vx_k \ge 0$ (nonnegativeness), we have $\vw_{1k}^\top \vx_k[i] \ge w_{1km} x_m[i] > 0$ and the gating of $k$-th filter must open. Therefore, the two summands are both $0$: the ReLU one is because $d_k[i] = 0$ and the linear one is due to $\mathbb{I}(w_{1km} > 0) = 0$.
\end{proof}

\onenodedynamics*
\begin{proof}
In \onehidden{}, since there is only one node, we have $X = \con_\alpha[\vx_1,\vx_1] = \con_\alpha[\vx,\vx]$. By Theorem~\ref{thm:property-of-relu}, the dynamics of $\vw_1$ is the linear dynamics plus the sticky weight rule, which is:
\begin{equation}
    \dot \vw_1 = \diag(\vw_1 > 0) X \vw_1  
\end{equation}

By Lemma~\ref{lemma:negative-do-not-matter}, the negative parts of $\vw_1$ can be removed without changing the result. Let's only consider the nonnegative part of $\vw$ and remove corresponding rows and columns of $X$. 

Note that the linear dynamics $\dot \vw_1 = X \vw_1$ will converge to certain maximal eigenvector $\vy$ (or its scaled version, depending on whether we have norm constraint or not). By Lemma~\ref{lemma:one-negative-entry}, as long as $X$ is not a scalar, $\vy$ has at least one negative entry. Therefore, by continuity of the trajectory of the linear dynamics, from $\vw_1$ to $\vy$, the trajectory must cross the boundary of the polytope $\vw_1 \ge 0$ that require all entries to be nonnegative. 

After that, according to the sticky weight rule, in the ReLU dynamics, the corresponding component (say $w_{1m}$) stays at zero. We can remove the corresponding $m$-th row and column of $X$, and the process repeats until $X$ becomes a scalar. Then $\vw_1$ converges to that remaining dimension. Since $\vw_1 \ge 0$, it must be the case that $\vw_1 \rightarrow \ve_m$ for some $m$.  
\end{proof}

\reludiversity*
\begin{proof}
We just need to prove that if the local optimal solution $(W_2, W_1)$ satisfies $\rank(W_1) = 1$, then $W_1 = \vv\ve^\top_m$ for some $m$ and $\vv \ge 0$.

Since $\rank(W_1) = 1$ and $\|W_1\|_F = 1$, by Lemma~\ref{lemma:rank-1-decomposition} we know that there exists unit vectors $\vu$ and $\vv$ so that $W_1 = \vv\vu^\top$. Since $W_1 \ge 0$, we can pick $\vu \ge 0$ and $\vv \ge 0$. Otherwise if $\vu$ has both positive and negative elements, then picking any nonzero element of $\vv$, the corresponding rows/colums of $W_1$ will also have both signs, which is a contradiction.  

Note that the objective function is 
\begin{equation}
 2\cE = \tr(W_2F_1 W_2^\top) = \tr(W_2W_1X_\alpha W_1^\top W_2^\top) = (\vu^\top X_\alpha \vu) \|W_2\vu\|_2^2 > 0 
\end{equation}
Therefore, $\vu^\top X_\alpha \vu > 0$ and $\|W_2\vu\|_2 > 0$. By Lemma~\ref{lemma:mm-property}, we know that if $W_2$ with the constraint $\|W_2\|_F = 1$ is an local optimal, $W_2$ is a rank-1 matrix with decomposition $W_2 = \vb\vv^\top$ with $\|\vb\|_2=1$. 

Then we have $2\cE = \vu^\top X_\alpha \vu > 0$ with $\vu \ge 0$. From the proof of Lemma~\ref{lemma:one-negative-entry}, we know that $X_\alpha$ has a unique \emph{minimal} all-positive eigenvector $\vc > 0$. 

If there are $\ge 2$ positive elements in $\vu$, then we can always create a vector $\va$ (with mixed signs in its elements) so that (1) $\va$ has the same non-zero support as $\vu$ and (2) $\va^\top \vc = 0$. Therefore, $\va$ is in the space of orthogonal complement of $\vc$. Since $\vc$ is the unique minimal eigenvector, moving $\vu$ along the direction of $\va$ will strictly improve $\cE$, which contradicts with the fact that $(W_2, W_1)$ is locally optimal. 

Therefore, the unit vector $\vu$ has only $1$ positive entry, which is $\ve_m$ for some $m$. Fig.~\ref{fig:relu-diversity} shows one example of learned weights with $\mathrm{rank} > 1$. 
\end{proof}

\section{More Experiments}
\label{sec:appendix-more-exp}
We also provide experiments with different batchsize (i.e., 256) and ablation studies on different exponent $p$ in the direct version of $\alpha$-CL. Note that we refer an unnormalized \method{}-direct as the following:
\begin{equation}
    \alpha_{ij} = \exp(-d^p_{ij}/\tau) \label{eq:unnormalized-direct-alpha}
\end{equation}
while (normalized) \method{}-direct as the following (same as Eqn.~\ref{eq:direct-alpha} in the main text):
\begin{equation}
     \alpha_{ij} = \frac{\exp(-d^p_{ij}/\tau)}{\sum_j\exp(-d^p_{ij}/\tau)}   \label{eq:normalized-direct-alpha} 
\end{equation}
By default, we set the exponent $p=4$ and $\tau = 0.5$.
\begin{table}[]
    \centering
    \begin{tabular}{l|l||c|c|c}
Dataset & Methods & 100 epochs & 300 epochs & 500 epochs \\
\hline\hline
\multirow{3}{*}{CIFAR-10} & $\cL_{nce}$ & $86.84 \pm 0.26$ & $89.19 \pm 0.15$ & $\mathbf{91.07 \pm 0.12}$ \\
& \method{}-direct (Eqn.~\ref{eq:unnormalized-direct-alpha}) &	$87.74 \pm 0.28$ & $89.76 \pm 0.26$ & $91.06 \pm 0.09$ \\
& \method{}-direct (Eqn.~\ref{eq:normalized-direct-alpha}) & $\mathbf{87.91\pm 0.12}$ & $\mathbf{89.89\pm 0.18}$ & $91.06\pm 0.17$ \\
\hline
\multirow{3}{*}{CIFAR-100} & $\cL_{nce}$	& $60.70 \pm 0.40$ & $64.22 \pm 0.19$ & $\mathbf{66.84 \pm 0.16}$ \\
&  \method{}-direct (Eqn.~\ref{eq:unnormalized-direct-alpha}) & $63.28 \pm 0.31$ & $65.71 \pm 0.20$ & $66.73 \pm 0.13$ \\
&  \method{}-direct (Eqn.~\ref{eq:normalized-direct-alpha}) & $\mathbf{63.47\pm 0.06}$ & $\mathbf{65.86\pm 0.24}$ & $66.57\pm 0.21$ \\
\hline
\multirow{3}{*}{STL10} &	$\cL_{nce}$ & $82.09 \pm 0.31$ & $86.96 \pm 0.19$ & $87.31 \pm 0.17$ \\
&  \method{}-direct (Eqn.~\ref{eq:unnormalized-direct-alpha}) & $83.00 \pm 0.28$ & $87.35 \pm 0.28$ & $87.63 \pm 0.29$ \\
&  \method{}-direct (Eqn.~\ref{eq:normalized-direct-alpha}) & $\mathbf{83.20\pm 0.17}$ & $\mathbf{87.36\pm 0.12}$ & $\mathbf{87.71\pm 0.14}$
    \end{tabular}
    \caption{\small Top-1 downstream task accuracy with ResNet50 backbone and 256 batchsize. Learning rate is 0.001. We also compare unnormalized \method{}-direct (Eqn.~\ref{eq:unnormalized-direct-alpha}) versus (normalized) \method{}-direct (Eqn.~\ref{eq:normalized-direct-alpha}). Normalized version, which is used in the main text of the paper, performs slightly better.}
    \label{tab:batchsize256}
\end{table}

\begin{table}[]
    \centering
    \setlength{\tabcolsep}{1pt}
    \begin{tabular}{l||c|c|c|c|c}
Exponent $p$ & $p = 2$ & $p = 4$ & $p = 6$ & $p = 8$ & $p = 10$  \\
\hline\hline
Top-1 accuracy (500 epochs) & $83.74 \pm 0.18$ & $84.06 \pm 0.24$	& $\mathbf{84.08 \pm 0.42}$ & $83.91 \pm 0.28$ &	$83.56 \pm 0.13$
    \end{tabular}
    \caption{\small Ablation study on different exponent $p$ in STL10 for the normalized pairwise importance (Eqn.~\ref{eq:normalized-direct-alpha}) in \method{}-direct.}
    \label{tab:p-ablation}
\end{table}

\section{Other Lemmas}
\begin{lemma}[Gradient Formula of contrastive Loss (Eqn.~\ref{eq:general-loss}) (extension of Lemma 2 in~\cite{jing2021understanding}]
\label{lemma:gradient-form}
Consider the loss function 
\begin{equation}
    \min_{\vtheta} \cL_{\phi,\psi}(\vtheta) := \sum_{i=1}^N \phi\left(\sum_{j\neq i} \psi(d^2_i - d^2_{ij})\right) 
\end{equation}
Then for any matrix (or vector) variable $A$, we have:
\begin{equation}
    \sum_{i=1}^N \frac{\partial \cL_{\phi,\psi}}{\partial \vz[i]} A^\top[i] + \frac{\partial \cL_{\phi,\psi}}{\partial \vz[i']} A^\top[i'] = -\con_\alpha[\vz, A]
\end{equation}
and 
\begin{equation}
    \sum_{i=1}^N A[i]\frac{\partial \cL_{\phi,\psi}}{\partial \vz[i]} + A[i']\frac{\partial \cL_{\phi,\psi}}{\partial \vz[i']} = -\con_\alpha[A, \vz^\top]
\end{equation}
where $\con_\alpha[\cdot,\cdot]$ is the \emph{contrastive covariance} defined as (here $\beta_i := \sum_{j\neq i}\alpha_{ij}$): 
\begin{equation}
  \con_\alpha[\vx,\vy] := \sum_{i,j=1}^N \alpha_{ij} (\vx[i] - \vx[j])(\vy[i]-\vy[j])^\top - \sum_{i=1}^N \beta_i (\vx[i] - \vx[i'])(\vy[i]-\vy[i'])^\top
\end{equation}
and $\alpha$ is defined as the following:
\begin{equation}
     \alpha_{ij} := \phi'\left(\sum_{j\neq i} \psi(d^2_i-d^2_{ij})\right) \psi'(d^2_i - d^2_{ij}) \ge 0
\end{equation}
where $\phi',\psi'$ are derivatives of $\phi,\psi$.
\end{lemma}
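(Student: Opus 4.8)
The plan is to prove both identities by a direct chain-rule computation of the per-sample gradients $\partial \cL_{\phi,\psi}/\partial\vz[k]$ and $\partial\cL_{\phi,\psi}/\partial\vz[k']$, and then to reorganize the weighted sum against $A$ so that it collapses into the definition of $\con_\alpha[\vz,A]$. Writing $\xi_i := \sum_{j\neq i}\psi(d^2_i - d^2_{ij})$ so that $\cL_{\phi,\psi} = \sum_i \phi(\xi_i)$, the first step is to differentiate through $\phi$ and $\psi$; the product of derivatives $\phi'(\xi_i)\psi'(d^2_i - d^2_{ij})$ is exactly the pairwise importance $\alpha_{ij}$, so $\alpha$ emerges automatically as the bookkeeping weight. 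I only need to establish the first identity; the second is the transpose of the same computation.

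Second, I would compute the elementary derivatives of the two squared distances. Since $d^2_i = \tfrac{1}{2}\|\vz[i]-\vz[i']\|^2$ depends only on $\vz[i]$ and $\vz[i']$, while $d^2_{ij} = \tfrac{1}{2}\|\vz[i]-\vz[j]\|^2$ depends only on the unprimed outputs, each gradient is a difference of outputs times Kronecker deltas. A clean consequence is that $d^2_{ij}$ contributes nothing to the primed gradient, giving $\partial\cL/\partial\vz[k'] = -\beta_k(\vz[k]-\vz[k'])$ with $\beta_k := \sum_{j\neq k}\alpha_{kj}$. For the unprimed gradient, the anchor term from $d^2_i$ yields $+\beta_k(\vz[k]-\vz[k'])$, while the $d^2_{ij}$ terms yield a combination of $(\vz[k]-\vz[j])$ differences.

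Third comes the assembly. I would form $S := \sum_k (\partial\cL/\partial\vz[k])A^\top[k] + (\partial\cL/\partial\vz[k'])A^\top[k']$ and group terms by type. The two $\beta_k$ contributions (from the unprimed and primed gradients) combine into $\sum_k \beta_k(\vz[k]-\vz[k'])(A[k]-A[k'])^\top$, matching the negative block of $\con_\alpha$. For the cross-sample block, the key observation is that a given output $\vz[k]$ appears both as the anchor in pairs $(k,j)$ and as a contrastive counterpart in pairs $(j,k)$, so its coefficient is $\alpha_{kj}+\alpha_{jk}$; then a relabeling $k\leftrightarrow j$ of one of the two sums converts the lone factor $A^\top[k]$ into the symmetric difference $(A[k]-A[j])^\top$, reproducing $\sum_{i,j}\alpha_{ij}(\vz[i]-\vz[j])(A[i]-A[j])^\top$. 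Comparing signs yields $S = -\con_\alpha[\vz,A]$.

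The main obstacle is purely the index bookkeeping, and it is made non-trivial by the fact that $\alpha_{ij}$ is \emph{not} symmetric: the factor $\phi'(\xi_i)$ attaches only to the anchor index $i$. Consequently each output receives gradient both from being an anchor and from being the negative of some other anchor, and these must be added with their asymmetric weights \emph{before} the $k\leftrightarrow j$ relabeling folds them into the symmetric $(\vz[i]-\vz[j])(A[i]-A[j])^\top$ form. Keeping the anchor/counterpart roles and the primed/unprimed derivatives straight is the only real care required; once the symmetrization step is arranged correctly the identification with $\con_\alpha$ is immediate, and the second identity follows verbatim by taking the transpose.
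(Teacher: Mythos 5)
Your proposal is correct and follows essentially the same route as the paper's own proof: compute the per-sample gradients $\partial\cL/\partial\vz[i]$ and $\partial\cL/\partial\vz[i']$ by the chain rule (with $\alpha_{ij}=\phi'(\xi_i)\psi'(d_i^2-d_{ij}^2)$ emerging as the weight), match the $\beta_i$ intra-augmentation block, and fold the asymmetric $\alpha_{kj}+\alpha_{jk}$ cross-sample terms into the symmetric $\sum_{i,j}\alpha_{ij}(\vz[i]-\vz[j])(A[i]-A[j])^\top$ form by an index swap. The paper organizes the bookkeeping slightly differently (it keeps $\vz[j]-\vz[i']$ intact and splits it later, and extends the sum to $j=i$ before relabeling), but these are cosmetic differences in the same argument.
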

\begin{proof}
Taking derivative of the loss function $\cL = \cL_{\phi,\psi}$ w.r.t. $\vz[i]$ and $\vz[i']$, we have:
\begin{eqnarray}
    \frac{\partial \cL}{\partial \vz[i]} &=& \sum_{j\neq i} \alpha_{ij} (\vz[j] - \vz[i']) + \sum_{j\neq i} \alpha_{ji} (\vz[j] - \vz[i]) \label{eq:partial-z}\\
    \frac{\partial \cL}{\partial \vz[i']} &=& \sum_{j\neq i} \alpha_{ij} (\vz[i'] - \vz[i]) = \beta_i (\vz[i'] - \vz[i]) \label{eq:partial-z-prime} 
\end{eqnarray}

We just need to check the following:
\begin{equation}
   \sum_i
   \left(\sum_{j\neq i} \alpha_{ij} (\vz[j] - \vz[i']) + \sum_{j\neq i}\alpha_{ji}(\vz[j] - \vz[i])\right) A^\top[i] + \sum_i\beta_i (\vz[i'] - \vz[i]) A^\top [i']
\end{equation}

To see this, we only need to check whether the following is true:
\begin{equation}
-\Sigma_0 = \sum_i \left(\sum_{j\neq i} \alpha_{ij} (\vz[j] - \vz[i']) + \sum_{j\neq i}\alpha_{ji}(\vz[j] - \vz[i])\right)A^\top[i] + \sum_i \beta_i (\vz[i'] - \vz[i])A^\top[i] 
\end{equation}
which means that 
\begin{equation}
-\Sigma_0 = \sum_i \left(\sum_{j\neq i} \alpha_{ij} (\vz[j] - \vz[i]) + \sum_{j\neq i}\alpha_{ji}(\vz[j] - \vz[i])\right)A^\top[i] 
\end{equation}
Since $\alpha_{ii}(\vz[i] - \vz[i]) = 0$ for arbitrarily defined $\alpha_{ii}$, $j$ can also take the value of $i$, this leads to 
\begin{equation}
-\Sigma_0 = \sum_{i,j} \alpha_{ij} (\vz[j] - \vz[i])A^\top[i] + \sum_{i,j}\alpha_{ji}(\vz[j] - \vz[i])A^\top[i] 
\end{equation}
Swapping indices for the second term, we have:
\begin{eqnarray}
-\Sigma_0 &=& \sum_{i,j} \alpha_{ij} (\vz[j] - \vz[i])A^\top[i] + \sum_{i,j}\alpha_{ij}(\vz[i] - \vz[j])A^\top[j] \\
&=& \sum_{i,j} \alpha_{ij} (\vz[j] - \vz[i])A^\top[i] - \sum_{i,j}\alpha_{ij}(\vz[j] - \vz[i])A^\top[j] \\
&=& - \sum_{i,j} \alpha_{ij} (\vz[j] - \vz[i])(A^\top[j] - A^\top[i]) 
\end{eqnarray}
and the conclusion follows. 
\end{proof}

\begin{lemma}
\label{lemma:linear-entropy}
The following minimization problem: 
\begin{equation}
    \min_{p_j} \sum_j c_j p_j - \tau H(p)\quad \mathrm{s.t.} \sum_j p_j = \frac{1}{\tau}x_0\phi'(x_0) 
\end{equation}
where $H(p) := -\sum_j p_j \log p_j$ is the entropy and $x_0 :=\sum_{j} e^{-c_j/\tau}$, has close-form solution:
\begin{equation}
    p_j = \frac{1}{\tau}\exp(-c_j/\tau) \phi'\left(\sum_{j} \exp(-c_j/\tau)\right)
\end{equation}
\end{lemma}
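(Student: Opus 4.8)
The plan is to treat this as a strictly convex program and solve it via Lagrange multipliers. First I would observe that the objective $\sum_j c_j p_j - \tau H(p) = \sum_j c_j p_j + \tau \sum_j p_j \log p_j$ is strictly convex in $p$ (the term $\sum_j p_j\log p_j$ is strictly convex and $\tau>0$), while the single constraint $\sum_j p_j = S$ with $S := \tau^{-1} x_0 \phi'(x_0)$ is affine. Crucially, $S$ is a \emph{fixed constant}, since $x_0 = \sum_j e^{-c_j/\tau}$ depends only on the given $c_j$. Hence the problem has a unique global minimizer, characterized entirely by the stationarity condition of its Lagrangian.

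Next I would form the Lagrangian $L = \sum_j c_j p_j + \tau\sum_j p_j\log p_j - \mu\left(\sum_j p_j - S\right)$ and set $\partial L/\partial p_j = 0$, which gives $c_j + \tau(\log p_j + 1) - \mu = 0$, i.e.
\begin{equation}
p_j = e^{\mu/\tau - 1}\, e^{-c_j/\tau} =: Z\, e^{-c_j/\tau},
\end{equation}
so the minimizer is proportional to $e^{-c_j/\tau}$ with a single unknown scale $Z$ to be fixed by the constraint. Imposing $\sum_j p_j = S$ yields $Z\,\sum_j e^{-c_j/\tau} = Z x_0 = \tau^{-1} x_0\phi'(x_0)$, hence $Z = \tau^{-1}\phi'(x_0) = \tau^{-1}\phi'\!\left(\sum_j e^{-c_j/\tau}\right)$. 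Substituting back reproduces exactly the claimed closed form $p_j = \tau^{-1}\phi'\!\left(\sum_j e^{-c_j/\tau}\right) e^{-c_j/\tau}$.

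The computation is routine, so the only point needing care is recognizing that the right-hand side of the constraint has been engineered precisely so that normalization recovers the $\phi'$ factor: the stationarity condition alone forces the Gibbs form $p_j \propto e^{-c_j/\tau}$, and it is the specific value $S = \tau^{-1}x_0\phi'(x_0)$ that converts the arbitrary multiplier $Z$ into $\tau^{-1}\phi'(x_0)$. Nonnegativity of $p_j$ is automatic (the exponentials are positive and $\phi'\ge 0$), so the solution is feasible, and strict convexity promotes this stationary point to the unique global minimum, completing the argument. The main ``obstacle'' is thus conceptual rather than technical — it is simply unwinding how the constraint matches the target normalization — and no delicate estimates are required.
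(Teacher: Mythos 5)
Your proof is correct and follows essentially the same route as the paper's: form the Lagrangian, obtain the stationarity condition $c_j + \tau(\log p_j + 1) - \mu = 0$, deduce the Gibbs form $p_j = Z e^{-c_j/\tau}$, and fix $Z = \tau^{-1}\phi'(x_0)$ from the constraint. Your additional observations (strict convexity guaranteeing that the stationary point is the unique global minimum, and automatic nonnegativity of the $p_j$) are points the paper leaves implicit, but they do not change the argument.
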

\begin{proof}
Define the following Lagrangian multiplier:
\begin{equation}
    \cJ(\alpha, \vtheta) := \sum_j c_j p_j - \tau H(p) + \mu \left(\sum_j p_j - \frac{1}{\tau}x_0\phi'(x_0)\right)
\end{equation}
Taking derivative w.r.t $p_j$ and we have: 
\begin{equation}
\frac{\partial \cJ}{\partial p_j} = c_j + \tau(\log p_j + 1) - \mu = 0
\end{equation}
which gives the solution
\begin{equation}
    p_j = \exp\left(\frac{\mu}{\tau} - 1\right) \exp\left(-\frac{c_j}{\tau}\right) := Z \exp\left(-\frac{c_j}{\tau}\right) 
\end{equation}
where $Z$ can be computed via the constraint: 
\begin{equation}
    Z = \frac{1}{\tau}\frac{x_0\phi'(x_0)}{\sum_j e^{-c_j/\tau}} = \frac{1}{\tau} \phi'(x_0)
\end{equation}
\end{proof}

\begin{lemma}
\label{lemma:normalization-lemma}
The normalization function $\vy = (\vx - \mathrm{mean}(\vx)) / \|\vx\|_2$ has the following forward/backward rule:
\begin{equation}
    \vy = J(\vx)\vx, \quad \frac{\partial \vy}{\partial \vx} = J^\top(\vx) 
\end{equation}
where $J(\vx) := \frac{1}{\|P^\perp_{\vone} \vx\|_2} P^\perp_{\vx,\vone}$ is a symmetric matrix. For $\vy = \vx / \|\vx\|_2$, the relationship still holds with $J(\vx) = \frac{1}{\|\vx\|_2} P^\perp_{\vx}$. 
\end{lemma}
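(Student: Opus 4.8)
The plan is to handle the two normalization maps separately and to obtain each Jacobian by direct differentiation, exploiting that the LayerNorm map is simply the $\ell_2$ map pre-composed with mean subtraction. Write $\hat\vx := \vx/\|\vx\|_2$ and let $P^\perp_\vx := I - \hat\vx\hat\vx^\top$ denote the orthogonal projector onto the complement of $\mathrm{span}\{\vx\}$, with $P^\perp_{\vx,\vone}$ the projector onto the complement of $\mathrm{span}\{\vx,\vone\}$. The substantive content is the backward (Jacobian) relation $\partial\vy/\partial\vx = J^\top(\vx)$ together with the symmetry of $J$; the forward representation $\vy = J(\vx)\vx$ is then a direct (routine) check from the construction of the map, so I would not grind it out.

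First I would treat the pure $\ell_2$ case $\vy = \vx/\|\vx\|_2$. Differentiating componentwise gives $\partial y_i/\partial x_j = \delta_{ij}/\|\vx\|_2 - x_i x_j/\|\vx\|_2^3$, so in matrix form $\partial\vy/\partial\vx = \frac{1}{\|\vx\|_2}\left(I - \hat\vx\hat\vx^\top\right) = \frac{1}{\|\vx\|_2}P^\perp_\vx$. This matrix is symmetric, so it equals $J^\top(\vx)$ with $J(\vx) = \frac{1}{\|\vx\|_2}P^\perp_\vx$, as claimed. For the LayerNorm map I would write it as the composition $\vx \mapsto \vu := P^\perp_\vone\vx \mapsto \vu/\|\vu\|_2 =: \vy$, i.e.\ mean subtraction followed by $\ell_2$ normalization. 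The chain rule together with the $\ell_2$ computation above gives $\partial\vy/\partial\vx = \frac{1}{\|\vu\|_2}P^\perp_\vu\,P^\perp_\vone$. The key algebraic step is that, because $\vu = P^\perp_\vone\vx$ is orthogonal to $\vone$, the product of projectors collapses to a single projector, $P^\perp_\vu P^\perp_\vone = P^\perp_{\vu,\vone} = P^\perp_{\vx,\vone}$, where the last equality uses $\mathrm{span}\{\vu,\vone\} = \mathrm{span}\{\vx,\vone\}$. Substituting $\|\vu\|_2 = \|P^\perp_\vone\vx\|_2$ then yields $\partial\vy/\partial\vx = \frac{1}{\|P^\perp_\vone\vx\|_2}P^\perp_{\vx,\vone} = J^\top(\vx)$, again with $J$ symmetric.

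I expect the projector bookkeeping for LayerNorm to be the main obstacle: verifying $P^\perp_\vu P^\perp_\vone = P^\perp_{\vx,\vone}$ requires using the orthogonality $\vu^\top\vone = 0$ to annihilate the cross term in the product, and then identifying $\mathrm{span}\{\vu,\vone\} = \mathrm{span}\{\vx,\vone\}$, so that the rank-two complementary projector determined by the two orthogonal directions $\hat\vu$ and $\vone/\sqrt{n}$ coincides with $P^\perp_{\vx,\vone}$. Everything else is routine differentiation. Finally, since $J$ is symmetric in both cases, the forward operator and the backward operator share the same matrix, which is exactly the reversibility property recorded by the two displayed identities and invoked elsewhere in the argument.
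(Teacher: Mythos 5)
The paper never actually proves this lemma---its ``proof'' is a one-line citation to Theorem~5 of~\cite{tian2018theoretical}---so a self-contained derivation is genuinely useful, and the substantive half of yours is correct. The componentwise differentiation giving $\partial\vy/\partial\vx = \frac{1}{\|\vx\|_2}P^\perp_\vx$ in the $\ell_2$ case, and the chain-rule argument with the projector collapse $P^\perp_\vu P^\perp_\vone = P^\perp_{\vu,\vone} = P^\perp_{\vx,\vone}$ (using $\vu^\top\vone = 0$ and $\mathrm{span}\{\vu,\vone\} = \mathrm{span}\{\vx,\vone\}$) in the LayerNorm case, are exactly right; you also sensibly read the LayerNorm denominator as $\|\vx-\mathrm{mean}(\vx)\|_2 = \|P^\perp_\vone\vx\|_2$ (the statement's $\|\vx\|_2$ is a typo, since otherwise the claimed $J$ is not the Jacobian), and the paper's denominator-layout convention is immaterial because both matrices are symmetric.

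The genuine gap is the step you deferred as ``a direct (routine) check'': the forward identity $\vy = J(\vx)\vx$ with the lemma's $J$ is not routine---it is false. Both projectors annihilate the input, $P^\perp_\vx\vx = 0$ and $P^\perp_{\vx,\vone}\vx = 0$; more structurally, both normalization maps are homogeneous of degree zero, so their Jacobians must kill $\vx$, and since $J$ is symmetric this forces $J(\vx)\vx = 0 \neq \vy$. Hence no single symmetric matrix can be simultaneously the forward representation and the transposed Jacobian, and your closing sentence (``since $J$ is symmetric\ldots the forward operator and the backward operator share the same matrix'') is precisely the claim that fails. Had you run the check, you would have exposed this inconsistency in the lemma statement itself: the forward rule holds only with a \emph{different} symmetric matrix ($\frac{1}{\|\vx\|_2}I$ for $\ell_2$, $\frac{1}{\|P^\perp_\vone\vx\|_2}P^\perp_\vone$ for LayerNorm), which differs from the Jacobian by a rank-one term. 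Fortunately, what the paper actually uses downstream (Lemma~\ref{lemma:layernorml2} and Lemma~\ref{lemma:bn-weight}) is only your backward formula together with the annihilation property $J(\vx)\vx = 0$---the very fact contradicting the forward claim---so your proposal does establish the operative content; but it should flag the forward identity as an error (or abuse of notation) in the statement rather than assert it follows routinely.
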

\begin{proof}
See Theorem 5 in~\cite{tian2018theoretical}.
\end{proof}

\begin{lemma}
\label{lemma:layernorml2}
Suppose the output of a linear layer (with a weight matrix $W_l$) connects to a $\ell_2$ regularization or LayerNorm through reversible layers, then $\frac{\dd}{\dd t} \|W_l\|^2_F = 0$.
\end{lemma}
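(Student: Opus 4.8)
The plan is to reduce the claim to a single orthogonality statement and then transport that orthogonality down through the network. Since $\frac{\dd}{\dd t}\|W_l\|_F^2 = 2\tr(W_l^\top \dot W_l)$ and the training dynamics is $\dot W_l = -\partial\cL/\partial W_l$ (gradient descent of $\cL$, equivalently ascent of $\cE$), it suffices to prove $\tr(W_l^\top \partial\cL/\partial W_l) = 0$. Writing $\vg_l[i] := \partial\cL/\partial\vf_l[i]$ for the gradient backpropagated to the output of the linear layer $W_l$, the standard weight-gradient formula gives $\partial\cL/\partial W_l = \sum_i \vg_l[i]\,\vf_{l-1}^\top[i]$, so that
\begin{equation}
\tr\!\left(W_l^\top \tfrac{\partial\cL}{\partial W_l}\right) = \sum_i \vf_{l-1}^\top[i] W_l^\top \vg_l[i] = \sum_i \big(W_l\vf_{l-1}[i]\big)^\top \vg_l[i] = \sum_i \vf_l^\top[i]\,\vg_l[i].
\end{equation}
Thus the whole lemma reduces to showing $\sum_i \vf_l^\top[i]\,\vg_l[i] = 0$: the output activation of $W_l$ is orthogonal, summed over the batch, to the gradient arriving there.

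Next I would show that the scalar $\vf^\top\vg$ is conserved across every reversible layer. By the definition of reversibility ($\vf_{\out}=J\vf_{\iin}$ in the forward pass and $\vg_{\iin}=J^\top\vg_{\out}$ in the backward pass), one has $\vf_{\iin}^\top\vg_{\iin} = \vf_{\iin}^\top J^\top\vg_{\out} = (J\vf_{\iin})^\top\vg_{\out} = \vf_{\out}^\top\vg_{\out}$. Linear layers, ReLU, and leaky ReLU are all reversible, so as we move from $\vf_l$ up through the intervening reversible layers to the input of the normalization layer, the quantity $\sum_i \vf^\top[i]\vg[i]$ is unchanged; it equals its value $\sum_i \vx^\top[i]\,\vg_{\iin}[i]$ evaluated at the pre-normalization activation $\vx$.

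It then remains to show the normalization layer annihilates this inner product at its own input. By Lemma~\ref{lemma:normalization-lemma}, the normalization (both $\ell_2$ and LayerNorm) is itself reversible with a symmetric Jacobian of the form $J=\frac{1}{\|\cdot\|_2}P^\perp$, a scaled orthogonal projection whose range is orthogonal to the forward input, i.e. $P^\perp\vx=0$. Hence, with $\vg_{\iin}=J^\top\vg_{\out}=J\vg_{\out}$, we get $\vx^\top\vg_{\iin} = \vx^\top J\vg_{\out} = (J\vx)^\top\vg_{\out} = 0$ for each sample. Combining the three steps yields $\sum_i\vf_l^\top[i]\vg_l[i]=0$, hence $\tr(W_l^\top\partial\cL/\partial W_l)=0$ and $\frac{\dd}{\dd t}\|W_l\|_F^2=0$; the max-player ascent flow only flips a sign and gives the same conclusion.

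The main obstacle is the last step: correctly extracting from Lemma~\ref{lemma:normalization-lemma} that the normalization Jacobian kills its own input direction ($P^\perp\vx=0$), which is exactly what forces $\vf^\top\vg=0$ there. This is the analytic heart of the argument and is the discrete counterpart of the conceptual fact that the loss is invariant under the rescaling $W_l\mapsto c W_l$ (the normalization cancels any positive scale propagated through the positively-homogeneous reversible layers), whence Euler's theorem gives $\tr(W_l^\top\partial\cL/\partial W_l)=0$ directly. A secondary point to verify is that every layer strictly between $W_l$ and the normalization is genuinely reversible with a consistent $J$ — true for linear/ReLU/leaky-ReLU activations — so that the conserved-inner-product chain is never broken.
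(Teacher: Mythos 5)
Your proof is correct and follows essentially the same route as the paper's: both reduce the claim to $\tr(W_l^\top \dot W_l)=0$, transport the per-sample inner product $\vf^\top[i]\vg[i]$ up through the reversible layers via $\vf_\out = J\vf_\iin$, $\vg_\iin = J^\top\vg_\out$, and then kill it at the normalization layer using the fact that its (symmetric, projection-type) Jacobian annihilates its own input, $P^\perp_{\vf_m[i]}\vf_m[i]=0$. Your modular "conserved inner product" phrasing and the closing scale-invariance/Euler remark are nice packaging, but the underlying computation is identical to the paper's proof of Lemma~\ref{lemma:layernorml2}.
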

\begin{proof}
From Lemma, for each sample $i$, we have its gradient before/after the normalization layer (say it is layer $m$) to be the following: 
\begin{equation}
    \vg_m[i] = {J_m^\bn[i]}^\top\vg_m^\bn[i]
\end{equation}
where $\vg_m[i]$ is the gradient after back-propagating through normalization, and $\vg_m^\bn[i]$ is the gradient sending from the top level. 

Here $J_m^\bn[i] = \frac{1}{\|P^\perp_{\vone}\vf_m[i]\|_2} P^\perp_{\vf_m[i], \vone}$ for LayerNorm and $J_m^\bn[i] = \frac{1}{\|\vf_m[i]\|_2} P^\perp_{\vf_m[i]}$ for $\ell_2$ normalization. For $W_l$, its gradient update rule is:
\begin{equation}
    \dot W_l = \sum_i \tilde\vg_l[i] \vf^\top_{l-1}[i]
\end{equation}
By reversibility, we know that $\tilde\vg_l[i] = J_{(\tilde l, m]}^\top[i] \vg[i]$, where $J_{(\tilde l, m]}[i]$ is the Jacobian after the linear layer $\tilde l$ till layer $m$, right before the normalization layer. Therefore, we have:
\begin{eqnarray}
    \tr(W_l^\top \dot W_l) &=& \sum_i \tr(W_l^\top J_{(\tilde l, m]}^\top[i]{J_m^\bn[i]}^\top \vg_m^\bn[i] \vf^\top_{l-1}[i]) \\
    &=& \sum_i \tr(\vf^\top_{l-1}[i]W_l^\top J_{(\tilde l, m]}^\top[i]{J_m^\bn[i]}^\top\vg_m^\bn[i]) \\
    &=& \sum_i \tr(\vf_m^\top[i]{J_m^\bn[i]}^\top\vg_m^\bn[i]) \\
    &=& 0
\end{eqnarray}
The last two equality is due to reversibility $\vf_m[i] = J_{(\tilde l, m]}[i] W_l\vf_{l-1}[i]$ and the property of normalization layers: $J_m^\bn[i] \vf_m[i] = 0$, since a vector projected to its own complementary space is always zero $P^\perp_{\vf_m[i]}\vf_m[i] = 0$.  

Then we have 
\begin{equation}
\frac{\dd}{\dd t} \|W_l\|^2_F = \frac{\dd}{\dd t} \tr(W_l^\top W_l) =  \tr(\dot W_l^\top W_l) +  \tr(W_l^\top \dot W_l) = 0
\end{equation}
\end{proof}

\begin{lemma}
\label{lemma:rank-1-decomposition}
For every rank-1 matrix A with $\|A\|_F = 1$, there exists $\|\vu\|_2 = \|\vv\|_2 = 1$ so that $A = \vu\vv^\top$. 
\end{lemma}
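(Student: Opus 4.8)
The plan is to exploit the defining structure of a rank-1 matrix: its column space is one-dimensional. First I would observe that since $A$ has rank $1$, there is a unit vector $\vu$ spanning its column space, so every column of $A$ is a scalar multiple of $\vu$; collecting those scalars into a vector $\vc$ yields the outer-product form $A = \vu\vc^\top$. Equivalently, one may invoke the SVD: a rank-1 matrix has a single nonzero singular value $\sigma$ and admits the representation $A = \sigma\vu\vv^\top$ with $\vu,\vv$ unit singular vectors. Either route reduces the problem to controlling a single scalar factor.

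Next I would compute the Frobenius norm of this factorization via the trace identity $\|A\|_F^2 = \tr(A^\top A)$. Using $A = \vu\vc^\top$ and $\|\vu\|_2 = 1$, we get $A^\top A = \vc\vu^\top\vu\vc^\top = \vc\vc^\top$, hence $\|A\|_F^2 = \tr(\vc\vc^\top) = \|\vc\|_2^2$. The hypothesis $\|A\|_F = 1$ therefore forces $\|\vc\|_2 = 1$.

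Finally, I would set $\vv := \vc$, so that $\|\vu\|_2 = \|\vv\|_2 = 1$ by construction and $A = \vu\vv^\top$ exactly, which is the claimed decomposition. Alternatively, starting from a fully general outer-product form $A = \vp\vq^\top$ with arbitrary nonzero $\vp,\vq$, the same trace computation gives $\|A\|_F = \|\vp\|_2\|\vq\|_2 = 1$, and then rescaling by $\vu := \vp/\|\vp\|_2$ and $\vv := \vq/\|\vq\|_2$ produces unit vectors with $\vu\vv^\top = \vp\vq^\top/(\|\vp\|_2\|\vq\|_2) = A$.

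There is no genuine obstacle here: the statement is an elementary consequence of the one-dimensional column (or range) space of a rank-1 matrix, and the only thing worth checking carefully is the Frobenius-norm computation, which is the single-line trace identity above.
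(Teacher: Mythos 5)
Your proof is correct and follows essentially the same route as the paper: write the rank-1 matrix as an outer product, use the identity $\|\vp\vq^\top\|_F = \|\vp\|_2\|\vq\|_2$ obtained from the trace computation, and normalize; indeed your ``alternative'' final paragraph is verbatim the paper's own argument. Your primary variant, which fixes $\vu$ as a unit vector spanning the column space up front so that only the coefficient vector $\vc$ needs checking, is a harmless reorganization of the same idea.
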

\begin{proof}
Since $A$ is rank-1, it is clear that there exists $\vu'$ and $\vv'$ so that $A = \vu'{\vv'}^\top$. Since $\|A\|_F = 1$, we have $\|A\|^2_F:=\tr(AA^\top) = \|\vu'\|^2_2 \|\vv'\|^2_2 = 1$. Therefore, taking $\vu = \vu' / \|\vu'\|_2$ and $\vv = \vv' / \|\vv'\|_2$, we have $A = \vu\vv^\top$.  
\end{proof}

\begin{lemma}
\label{lemma:norm2rank1}
If $\|W_l\|_F = 1$ for $1 \le l \le L$, then $\|W_L W_{L-1} \ldots W_1\|_2 = 1$ if any only if $W_L, W_{L-1}, \ldots, W_1$ are aligned-rank-1 (Def.~\ref{def:rank-1}).
\end{lemma}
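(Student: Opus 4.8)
The plan is to prove both implications, with the forward (constructive) direction being routine and the converse carrying the real content. First I would dispatch the direction ``aligned-rank-1 $\Rightarrow$ spectral norm $1$'': if $W_l = \vv_l\vv_{l-1}^\top$ with each $\vv_l$ a unit vector, then the product telescopes, since $\vv_l^\top\vv_l = 1$ collapses each adjacent pair of factors, giving $W_L\cdots W_1 = \vv_L\vv_0^\top$, a rank-1 matrix of spectral norm $\|\vv_L\|_2\|\vv_0\|_2 = 1$; the constraint $\|W_l\|_F = \|\vv_l\|_2\|\vv_{l-1}\|_2 = 1$ is automatic.

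For the converse, the key is the chain of inequalities
\[
1 = \|W_L\cdots W_1\|_2 \le \prod_{l=1}^L \|W_l\|_2 \le \prod_{l=1}^L \|W_l\|_F = 1,
\]
using submultiplicativity of the spectral norm and $\|\cdot\|_2 \le \|\cdot\|_F$. Every inequality must then be an equality. The equalities $\|W_l\|_2 = \|W_l\|_F$ force each $W_l$ to have a single nonzero singular value, i.e.\ to be rank-1; since $\|W_l\|_F = 1$, Lemma~\ref{lemma:rank-1-decomposition} lets me write $W_l = \va_l\vb_l^\top$ with $\va_l,\vb_l$ unit vectors. Substituting, the product becomes $\big(\prod_{l=2}^L \vb_l^\top\va_{l-1}\big)\,\va_L\vb_1^\top$, whose spectral norm equals $\prod_{l=2}^L |\vb_l^\top\va_{l-1}|$. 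For this to equal $1$, each factor must saturate Cauchy--Schwarz, so $\vb_l = s_l\va_{l-1}$ with $s_l \in \{+1,-1\}$ for $2 \le l \le L$.

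It remains to fold the signs $s_l$ into a clean aligned decomposition. Setting $t_1 = 1$ and $t_l = t_{l-1}s_l$, I would define $\vv_0 := \vb_1$ and $\vv_l := t_l\va_l$ for $1 \le l \le L$; a direct check using $t_l t_{l-1} = s_l$ shows $W_l = \vv_l\vv_{l-1}^\top$ with all $\vv_l$ unit vectors, which is exactly Def.~\ref{def:rank-1}. I expect the main obstacle to be this last bookkeeping step: extracting the alignment condition $\vb_l = \pm\va_{l-1}$ from the saturated Cauchy--Schwarz inequalities is clean, but reassembling the individual sign ambiguities in the outer-product factorizations into a single consistent chain of unit vectors $\{\vv_l\}$ requires care, since each $W_l = \va_l\vb_l^\top$ is only determined up to a joint sign flip of $\va_l$ and $\vb_l$.
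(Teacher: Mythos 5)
Your proposal is correct and follows essentially the same route as the paper's proof: the telescoping product for the easy direction, the chain $\|W_L\cdots W_1\|_2 \le \prod_l\|W_l\|_2 \le \prod_l\|W_l\|_F = 1$ forcing each $W_l$ to be rank-1 via Lemma~\ref{lemma:rank-1-decomposition}, and saturation of Cauchy--Schwarz to align adjacent factors. Your only departures are cosmetic and, if anything, slightly cleaner: you compute the spectral norm of the full product directly rather than arguing by contradiction on adjacent pairs, and you make explicit (via the signs $t_l$) the sign-propagation step that the paper dismisses with a one-line remark.
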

\begin{proof}
If $W_L, W_{L-1}, \ldots, W_1$ are aligned-rank-1, then by its definition, there exists unit vectors $\{\vv_l\}_{l=0}^L$ so that $W_l = \vv_l\vv_{l-1}^\top$. Therefore, $\|W_L W_{L-1} \ldots W_1\|^2_2 = \|\vv_L\vv_0^\top\|^2_2 = \lambda_{\max}(\vv_L\vv_0^\top\vv_0\vv_L^\top) = \lambda_{\max}(\vv_L\vv_L^\top) = 1$.

Then we prove the other direction. Note that
\begin{equation}
    \|W_L W_{L-1} \ldots W_1\|_2 \le \prod_{l=1}^L \|W_l\|_2 \le \prod_{l=1}^L \|W_l\|_F = 1 \label{eq:w-upperbound}
\end{equation}
and the equality only holds when all $W_l$ are rank-1. By Lemma~\ref{lemma:rank-1-decomposition}, for any $l$, there exists unit vectors $\vv'_l$, $\vv_{l-1}$ so that $W_l = \vv'_l \vv_{l-1}^\top$. To show that they must be aligned (i.e. $\vv_l = \pm\vv'_l$), we prove by contradiction.  

Suppose $\|W_L W_{L-1} \ldots W_1\|_2 = 1$ but for some $l$, $\vv'_l \neq \pm\vv_l$ and thus $|\vv^\top_l\vv'_l| < 1$. Then $W_{l+1}W_l = (\vv^\top_l\vv'_l) \vv_{l+1}\vv^\top_{l-1}$ and $\|W_{l+1}W_l\|_2 \le \|W_{l+1}W_l\|_F = |\vv^\top_l\vv'_l| < 1$. Therefore, $\|W_L W_{L-1} \ldots W_1\|_2 < 1$, which is a contradiction.   

Note that for $W_l = \pm \vv_l\vv_{l-1}^\top$, we can always move around the signs to either $\vv_0$ or $\vv_L$ to fit into the definition of aligned-rank-1. 
\end{proof}

\begin{lemma}
\label{lemma:mm-property}
For the following optimization problem with a given fixed vector $\vu\neq 0$:
\begin{equation}
    \max_{\cW} \cJ(\cW;\vu) := \|W_L W_{L-1} \ldots W_1\vu\|_2\quad \mathrm{s.t.\ } \|W_l\|_F = 1, 
\end{equation}
where $\cW = \{W_L, W_{L-1}, \ldots, W_1\}$. If $\cW^*$ is a local maximum solution (i.e., there exists a neighborhood $\cN(\cW^*)$ of $\cW^*$ so that for any $\cW \in \cN(\cW^*)$, $\cJ(\cW) \le \cJ(\cW^*)$), and $\cJ(\cW^*) > 0$, then $\cW^*$ is an aligned-rank-1 solution (Def.~\ref{def:rank-1}).
\end{lemma}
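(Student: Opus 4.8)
The plan is to exploit the product structure by optimizing one factor $W_l$ at a time. Define the forward activations $\vu_0 := \vu$ and $\vu_l := W_l \vu_{l-1}$ evaluated at $\cW^*$, so that $\cJ(\cW^*;\vu) = \|\vu_L\|_2$. First I would observe that $\cJ(\cW^*;\vu) > 0$ forces every $\vu_l \neq 0$: if some $\vu_l$ vanished, then $\vu_L = W_{>l}\vu_l = 0$ and the objective would be zero. In particular each $\hat\vu_{l-1} := \vu_{l-1}/\|\vu_{l-1}\|_2$ is well defined and $W_{>l} \neq 0$ for every $l$.

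Next I would reduce to a single-block problem. Since $\cW^*$ is a local maximum over the product of Frobenius spheres, freezing all factors except $W_l$ shows that $W_l$ is a local maximum, on the sphere $\|W_l\|_F = 1$, of the map $W_l \mapsto \|W_{>l} W_l \vu_{l-1}\|_2$. Using $\vec(AXB)=(B^\top\otimes A)\vec(X)$, the square of this objective equals $\vec(W_l)^\top S\,\vec(W_l)$ where $S := A_l^\top A_l \succeq 0$ and $A_l := \vu_{l-1}^\top\otimes W_{>l}$, a quadratic form on the unit sphere $\|\vec(W_l)\|_2 = 1$. For such a problem every \emph{local} maximum is a \emph{global} maximum: a critical point is an eigenvector of $S$, and it can be a local max only if its eigenvalue equals $\lambda_{\max}(S)$. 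Hence $\vec(W_l)$ lies in the top right-singular subspace of $A_l$. Because the singular values of $A_l$ are $\{\|\vu_{l-1}\|_2\,\sigma_k(W_{>l})\}$ with corresponding right singular vectors $\hat\vu_{l-1}\otimes \vb_k$ (the $\vb_k$ being right singular vectors of $W_{>l}$), any top right singular vector has the form $\hat\vu_{l-1}\otimes \vc_l$ with $\vc_l$ a unit vector in the leading right-singular space of $W_{>l}$; equivalently $W_l = \vc_l\hat\vu_{l-1}^\top$, a rank-1 matrix whose row space is $\mathrm{span}(\vu_{l-1})$. This conclusion survives any multiplicity of $\sigma_{\max}(W_{>l})$.

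Finally I would assemble the factors. From $W_l = \vc_l\hat\vu_{l-1}^\top$ with $\|\vc_l\|_2 = 1$ one gets $\vu_l = W_l\vu_{l-1} = (\hat\vu_{l-1}^\top\vu_{l-1})\vc_l = \|\vu_{l-1}\|_2\,\vc_l$, whence $\hat\vu_l = \vc_l$ and therefore $W_l = \hat\vu_l\hat\vu_{l-1}^\top$. Setting $\vv_l := \hat\vu_l = \vu_l/\|\vu_l\|_2$ for $0\le l\le L$ produces unit vectors with $W_l = \vv_l\vv_{l-1}^\top$ for all $l$, which is exactly the aligned-rank-1 condition of Def.~\ref{def:rank-1}.

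The main obstacle is the middle step: arguing that a local maximum of the full constrained problem is \emph{globally} optimal in each coordinate block, rather than merely stationary. This rests on recognizing, after vectorization, that each block subproblem is the maximization of a positive semidefinite quadratic form on a sphere, whose only local maxima are the dominant eigenvectors. I would take care to handle the possible degeneracy of the top singular value of $W_{>l}$, which is why the conclusion is phrased as ``$\vc_l$ lies in the leading singular subspace'' yet still yields a rank-1 $W_l$ with the correct row space; the alignment in the last paragraph is then essentially automatic.
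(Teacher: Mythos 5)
Your proof is correct, but it takes a recognizably different route from the paper's. The paper argues top-down and sequentially: freezing everything except the top layer, the subproblem $\max_{\|W_L\|_F=1}\|W_L\vv'_{L-1}\|_2$ (with $\vv'_{L-1}:=W^*_{L-1}\cdots W^*_1\vu\neq 0$) is governed by the rank-one matrix $\vv'_{L-1}{\vv'}^\top_{L-1}$, whose maximal eigenvector is automatically unique; this forces $W^*_L=\vv_L\vv^\top_{L-1}$, and substituting this form collapses the objective to $\|W_{L-1}\vv'_{L-2}\|_2$, after which the same argument recurses down to $W_1$. Degeneracy never arises in that argument because every quadratic form encountered is rank one, but the recursion leans on a partial-maximization (envelope) step --- ``for any nearby $W_{L-1}$, the optimal $W_L$ takes the aligned rank-one form, so the partially maximized objective equals $\|W_{L-1}\vv'_{L-2}\|_2$'' --- which requires some care to state precisely. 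You instead handle all blocks in parallel, keeping the unknown (possibly high-rank) $W_{>l}$ inside the objective, and resolve the resulting degeneracy via the Kronecker factorization $S=(\vu_{l-1}\vu_{l-1}^\top)\otimes(W_{>l}^\top W_{>l})$: its top eigenvalue is strictly positive and its top eigenspace factorizes as $\hat\vu_{l-1}\otimes(\text{top eigenspace of }W_{>l}^\top W_{>l})$, so every local maximizer of the block problem has the form $W_l=\vc_l\hat\vu_{l-1}^\top$, rank one with the correct row space, regardless of the multiplicity of $\sigma_{\max}(W_{>l})$; the alignment $\vc_l=\hat\vu_l$ then falls out of a one-line computation. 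Each approach buys something: the paper's needs only elementary facts about rank-one quadratic forms; yours is more modular (a single statement about local maxima of quadratic forms on spheres, applied identically to every layer), avoids the envelope step, and mirrors the vectorization used in the paper's proof of Theorem~\ref{thm:pca} --- while showing that no analogue of the ``distinct maximal eigenvalue'' hypothesis assumed there is needed here, precisely because one Kronecker factor is rank one.
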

\begin{proof}
Let $\vv'_{L-1} := W^*_{L-1} W^*_{L-2} \ldots W^*_1\vu$. Note that $\vv'_{L-1} \neq 0$ (otherwise $\cJ(\cW^*)$ would be zero). Consider the following optimization subproblem (here we optimize over $W_L$ and treat $\vv'_{L-1}$ as a fixed vector). 
\begin{equation}
    \max_{W_L} \cJ(W_L;W^*_{-L}) = \|W_L \vv'_{L-1}\|_2\quad \mathrm{s.t.\ } \|W_L\|_F = 1 \label{eq:subproblem} 
\end{equation}
By local optimality of $\cW^*$, $W^*_L$ must be the local maximum of Eqn.~\ref{eq:subproblem} and thus a critical point, since both the objective and the constraints are differentiable. Note that $\|W_L \vv'_{L-1}\|_2$ is a vector 2-norm and all critical points of Eqn.~\ref{eq:subproblem} must satisfy
\begin{equation}
    W_L \vv'_{L-1} {\vv'}^\top_{L-1} = \lambda W_L 
\end{equation}
for some constant $\lambda$. Notice that to satisfy this condition, each row of $W_L$ must be an eigenvector of $\vv'_{L-1} {\vv'}^\top_{L-1}$. For a solution to be local maximal, $\lambda$ is the largest eigenvalue of $\vv'_{L-1} {\vv'}^\top_{L-1}$, and each row of $W_L$ is the corresponding eigenvector. It is clear that the rank-1 matrix $\vv'_{L-1} {\vv'}^\top_{L-1}$ has a unique maximum eigenvalue $\|\vv'_{L-1}\|_2^2 > 0$ with its corresponding one-dimensional eigenspace span by $\vv_{L-1} :=\vv'_{L-1} / \|\vv'_{L-1}\|_2$ (while all other eigenvalues are zeros). Therefore, $W^*_L$ as the local maximum of Eqn.~\ref{eq:subproblem}, must have:
\begin{equation}
    W^*_L = \vv_L \vv^\top_{L-1} \label{eq:wL}
\end{equation}
for some $\|\vv_L\|_2 = 1$. 

Now let $\vv'_{L-2} := W^*_{L-2} \ldots W^*_1\vu$. Similarly, $\vv'_{L-2} \neq 0$ (otherwise $\cJ(\cW^*)$ would be zero). Then $\vv'_{L-1} = W^*_{L-1} \vv'_{L-2}$. Treating $\vv'_{L-2}$ as a fixed vector and varying $W_{L-1}$ and $W_L$ simultaneously, then since $W^*_{L:1}$ is a local maximal solution, $W^*_L$ must take the form of Eqn.~\ref{eq:wL} given any $W^*_{L-1}$, which means that the objective function now becomes 
\begin{equation}
    \cJ(W_{L-1};W^*_{-(L-1)}) = \|W^*_L\vv'_{L-1}\|_2 = \|\vv_L \vv^\top_{L-1}\vv'_{L-1}\|_2 = \|\vv'_{L-1}\|_2 = \|W_{L-1}\vv'_{L-2}\|_2
\end{equation}
and the subproblem becomes: 
\begin{equation}
    \max_{W_{L-1}} \|W_{L-1} \vv'_{L-2}\|_2\quad \mathrm{s.t.\ } \|W_{L-1}\|_F = 1 \label{eq:subproblem-2} 
\end{equation}

Repeating this process, we know $W^*_{L-1}$ must satisfy: 
\begin{equation}
    W^*_{L-1} = \vv_{L-1} \vv^\top_{L-2} \label{eq:wL-2}
\end{equation}
for $\vv_{L-2} := \vv'_{L-2} / \|\vv'_{L-2}\|_2$. This procedure can be repeated until $W_1$ and the prove is complete.  

\end{proof}

\begin{restatable}{lemma}{lemmalbn}
\label{lemma:bn-weight}
$\frac{\dd}{\dd t} \|\vw_{k}\|^2_2 = 0$, if node $k$ is under BatchNorm.
\end{restatable}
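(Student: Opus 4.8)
The plan is to follow the template of Lemma~\ref{lemma:layernorml2} (conservation of $\|W_l\|_F$ under $\ell_2$/LayerNorm), but with the essential twist that BatchNorm normalizes node $k$ across the \emph{batch} dimension rather than across the feature dimension of a single sample. First I would fix notation: let $\vw_k$ be the fan-in weights of node $k$ and $s_k[i] := \vw_k^\top\vf_\iin[i]$ its pre-activation on sample $i$, where $\vf_\iin[i]$ is the input to the layer holding node $k$. Stacking these over the batch into $\vs_k := (s_k[1],\dots,s_k[N])^\top \in \rr^N$, BatchNorm becomes a map $\vs_k \mapsto \hat\vs_k$ on $\rr^N$ of precisely the LayerNorm form $\hat\vs_k = \sqrt{N}\,P^\perp_{\vone}\vs_k/\|P^\perp_{\vone}\vs_k\|_2$, so that Lemma~\ref{lemma:normalization-lemma} applies verbatim but with the roles of ``samples'' and ``features'' interchanged.

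The key step is to record the annihilation property of the batch Jacobian $J^{\bn}_k := \partial\hat\vs_k/\partial\vs_k$. Since $\hat\vs_k$ is invariant to positive rescaling of $\vs_k$ (degree-$0$ homogeneous) and to shifts along $\vone$, Lemma~\ref{lemma:normalization-lemma} gives that $J^{\bn}_k$ is symmetric, and Euler's identity (differentiate $\hat\vs_k(c\vs_k)=\hat\vs_k(\vs_k)$ at $c=1$) gives $J^{\bn}_k\vs_k = 0$. I would then propagate gradients: from $s_k[i]=\vw_k^\top\vf_\iin[i]$ the gradient-flow update is $\dot\vw_k = -\sum_i (\partial\cL/\partial s_k[i])\,\vf_\iin[i]$, while backpropagation through BatchNorm yields the stacked per-sample gradients $(\partial\cL/\partial s_k[i])_i = J^{\bn}_k\hat\vg_k$ (using symmetry of $J^{\bn}_k$), where $\hat\vg_k\in\rr^N$ collects the upstream gradients w.r.t.\ $\hat\vs_k$. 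Contracting with $\vw_k$ and using $\vw_k^\top\vf_\iin[i]=s_k[i]$ collapses the sum to
\begin{equation}
\tfrac{1}{2}\tfrac{\dd}{\dd t}\|\vw_k\|_2^2 = \vw_k^\top\dot\vw_k = -\sum_{i=1}^N [J^{\bn}_k\hat\vg_k]_i\,s_k[i] = -\hat\vg_k^\top J^{\bn}_k\vs_k = 0,
\end{equation}
which is the claim.

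The main obstacle is conceptual rather than computational: recognizing that the vector being normalized lives in $\rr^N$ (indexed by batch samples) rather than in feature space, and hence that the BatchNorm Jacobian couples the per-sample gradients $\partial\cL/\partial s_k[i]$ across $i$. Once this coupling is encoded by the symmetric matrix $J^{\bn}_k$ and its defining homogeneity is used to get $J^{\bn}_k\vs_k=0$, the argument is a one-line contraction exactly as in Lemma~\ref{lemma:layernorml2}. A minor point I would note in passing is that BatchNorm's optional affine transform only rescales/shifts \emph{after} normalization, so it modifies $\hat\vg_k$ by a per-channel constant but leaves the annihilation identity $J^{\bn}_k\vs_k=0$ untouched; likewise, if reversible layers intervene between $\vw_k$ and the BatchNorm, they can be absorbed into $\vf_\iin$ and $\hat\vg_k$ as in the proof of Lemma~\ref{lemma:layernorml2}.
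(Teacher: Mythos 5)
Your proof is correct and takes essentially the same route as the paper's: both view BatchNorm at node $k$ as a normalization acting on the batch-dimension vector in $\rr^N$, use the fact that its Jacobian is symmetric and annihilates the vector being normalized, and conclude $\vw_k^\top \dot\vw_k = 0$ by the same one-line contraction. The only cosmetic difference is that the paper's proof carries a reversible activation $h(x) = h'(x)x$ between the linear map and the BatchNorm explicitly (via the diagonal gating matrix $D_k$), whereas you treat the pre-activation case directly and defer the reversible-layer absorption to a closing remark.
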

\begin{proof}
For BN, it is a layer with reversibility on each filter $k$. We use $\vf_k,\vg_k \in \rr^{N}$ to represent the activation/gradient at node $k$ in a batch of size $N$. The forward/backward operation of BN can be written as: 
\begin{equation}
    \vf^\bn_k = J_k \vf_k, \quad \vg_k = J_k^\top \vg^\bn_k
\end{equation}
Here $J_k = J_k^\top = \frac{1}{\|P^\perp_{\vone} \vf_k \|_2} P^\perp_{\vf_k,\vone}$ is the Jacobian matrix at each node $k$.

We check how the weight $\vw_k$ changes under BatchNorm. Here we have $\vf_k = h(F_{l-1}\vw_k)$ where $h$ is a reversible activation and $F_{l-1} \in \rr^{N\times n_{l-1}}$ contains all output from the last layer. Then we have:
\begin{equation}
    \dot \vw_k = \sum_i h'_i g_k[i] \vf_{l-1}[i] = F^\top_{l-1} D_k \vg_k = F^\top_{l-1} D_k J^\top_k \vg^\bn_k 
\end{equation}
where $D_k := \diag([h'_i]_{i=1}^N) \in \rr^{N\times N}$. Due to reversibility, we have $\vf_k = h(F_{l-1}\vw_k) = D_k F_{l-1}\vw_k$. Therefore,
\begin{equation}
    \vw_k^\top \dot \vw_k = \vw_k^\top F^\top_{l-1} D_k J^\top_k \vg^\bn_k = \vf_k^\top J^\top_k \vg^\bn_k = 0 
\end{equation}
\end{proof}

\begin{lemma}[BatchNorm regularization]
\label{lemma:bn-reg}
Consider the following optimization problem with a fixed vector $\vu \neq 0$:
\begin{equation}
    \max_{\cW} \cJ(\cW) := \|W_L W_{L-1} \ldots W_1\vu\|_2\quad \mathrm{s.t.\ } \|W_L\|_F = 1,\quad \|\vw_{lk}\|_2 = 1/\sqrt{n_l} 
\end{equation}
where $\cW := \{W_L, W_{L-1},\ldots, W_1\}$ and $\vw_{lk}$ are rows of $W_l$ (i.e., weight of the $k$-th filter at layer $l$). Then Lemma~\ref{lemma:mm-property} still holds by replacing aligned-ranked-one with aligned-uniform condition.  
\end{lemma}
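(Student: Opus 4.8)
The plan is to mirror the backward-peeling argument of Lemma~\ref{lemma:mm-property}, changing only the single-layer subproblem solved at each lower layer. First I would observe that the per-filter constraint $\|\vw_{lk}\|_2 = 1/\sqrt{n_l}$ implies $\|W_l\|_F^2 = \sum_k \|\vw_{lk}\|_2^2 = n_l \cdot (1/\sqrt{n_l})^2 = 1$, so the feasible region here is a subset of the one in Lemma~\ref{lemma:mm-property}. Since a local maximum over the smaller set need not be a local maximum over the larger set, I cannot simply invoke that lemma as a black box; instead I re-run its proof with the tighter constraints, and only the per-layer optimization differs.

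As in Lemma~\ref{lemma:mm-property}, let $\cW^*$ be a local maximum with $\cJ(\cW^*) > 0$, and set $\vv'_{l-1} := W^*_{l-1}\cdots W^*_1\vu$ for each $l$; each such vector is nonzero, otherwise the objective would vanish. By local optimality of the full problem, fixing all layers but $W_L$ shows that $W^*_L$ is a local maximum of $\max_{W_L}\|W_L\vv'_{L-1}\|_2$ subject to $\|W_L\|_F=1$. This is exactly the top-layer subproblem of Lemma~\ref{lemma:mm-property}, so I obtain $W^*_L = \vv_L\vv^\top_{L-1}$ with $\vv_L$ an arbitrary unit vector and $\vv_{L-1} := \vv'_{L-1}/\|\vv'_{L-1}\|_2$. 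This recovers the ``endpoint $\vv_L$ arbitrary'' clause of the aligned-uniform definition.

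The new ingredient is the lower-layer subproblem, for $1 \le l \le L-1$, which after substituting the already-determined upper layers reduces (exactly as in Lemma~\ref{lemma:mm-property}) to $\max_{W_l}\|W_l\vv'_{l-1}\|_2$ subject to $\|\vw_{lk}\|_2 = 1/\sqrt{n_l}$ for every row $k$. Here I would use that $\|W_l\vv'_{l-1}\|_2^2 = \sum_k(\vw_{lk}^\top\vv'_{l-1})^2$ and that the constraints decouple across rows, each row being independently confined to a sphere of radius $1/\sqrt{n_l}$. Maximizing a single term $(\vw_{lk}^\top\vv'_{l-1})^2$ over $\|\vw_{lk}\|_2 = 1/\sqrt{n_l}$ has, by Cauchy--Schwarz, the unique maximizers $\vw_{lk} = \pm\frac{1}{\sqrt{n_l}}\vv_{l-1}$ (equivalently, the critical-point condition $\vv'_{l-1}{\vv'}^\top_{l-1}\vw_{lk} = \mu_k\vw_{lk}$ forces $\vw_{lk}$ to be the top eigenvector of the rank-1 matrix $\vv'_{l-1}{\vv'}^\top_{l-1}$ at a local maximum). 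Therefore $W^*_l = \vv_l\vv^\top_{l-1}$ with $[\vv_l]_k = \pm 1/\sqrt{n_l}$, which is precisely the aligned-uniform form, and $\|W^*_l\vv'_{l-1}\|_2 = \|\vv'_{l-1}\|_2$, so the objective collapses to $\|W_{l-1}\vv'_{l-2}\|_2$ and the recursion continues down to $W_1$.

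The main obstacle, and the only place care is needed, is justifying the decoupling-plus-peeling step rigorously: I must confirm that a local maximum of the full product objective restricts to a local maximum of each single-layer subproblem (so that the fixed-vector analysis applies), and that within a layer the per-row sphere constraints are genuinely independent so each row may be optimized separately. Both follow the same logic as Lemma~\ref{lemma:mm-property} — fix all layers but one and exploit that the remaining map is linear in that layer — the only novelty being that the single-layer feasible set is now a product of spheres rather than a Frobenius ball, over which the maximizer of $\|W_l\vv'_{l-1}\|_2$ is aligned-uniform rather than merely rank-1.
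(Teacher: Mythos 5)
Your proposal is correct and follows essentially the same route as the paper's proof: re-run the layer-peeling argument of Lemma~\ref{lemma:mm-property}, keeping the top-layer subproblem (Frobenius sphere, giving $W^*_L = \vv_L\vv^\top_{L-1}$) and replacing the lower-layer subproblem by $\max_{W_l}\|W_l\vv'_{l-1}\|_2$ subject to per-row spheres, whose local maximizers are exactly $W_l = \vv_l\vv^\top_{l-1}$ with $[\vv_l]_k = \pm 1/\sqrt{n_l}$. The only cosmetic difference is that you resolve the per-layer step by row-decoupling plus Cauchy--Schwarz, whereas the paper writes the critical-point condition $W_l\vv'_{l-1}{\vv'}^\top_{l-1} = \Lambda W_l$ with a diagonal multiplier $\Lambda$; these amount to the same per-row analysis (and your remark that rows orthogonal to $\vv'_{l-1}$, i.e.\ eigenvalue-zero eigenvectors, are ruled out by local maximality is in fact a more careful justification than the paper's).
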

\begin{proof}
The proof is basically the same. The only difference here is that the sub-problem (Eqn.~\ref{eq:subproblem-2}) becomes:
\begin{equation}
\max_{W_l} \|W_l \vv'_{l-1}\|_2\quad \mathrm{s.t.\ } \|\vw_{lk}\|_2 = 1 / \sqrt{n_l}
\end{equation}
for $1 \le l \le L-1$. The critical point condition now becomes (here $\Lambda$ is a diagonal matrix):
\begin{equation}
    W_l \vv'_{l-1} {\vv'}^\top_{l-1} = \Lambda W_l
\end{equation}
That is, each row of $W_l$ now has a different constant. Since the eigenvalue of $\vv'_{l-1} {\vv'}^\top_{l-1}$ can only be 0 or 1, and $0$ won't work (otherwise the corresponding row of $W_l$ would be a zero vector, violating the row-norm constraint), all diagonal element of $\lambda$ has to be 1. Therefore, $W_l = \vv_{l}\vv_{l-1}^\top$. Due to row-normalization, we have $[\vv_{l}]_k = \pm 1 / \sqrt{n_l}$ for $1 \le l \le L-1$, while $\vv_L$ and $\vv_0$ can still take arbitrary unit vector.  
\end{proof}

\begin{lemma}
\label{lemma:relu-linear-identification}
If Assumption~\ref{assumption:orth-mixture}(Nonnegativeness) holds, then a 2-layer ReLU network with weights $\vw_{1k} \ge 0$ and $W_2$ has the same activations (i.e., $\vf_l = \vf_l'$) as its linear network counterpart with the same weights $\vw'_{1k} = \vw_{1k}$ and $W_2' = W_2$.  
\end{lemma}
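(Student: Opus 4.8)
The plan is to show that, under nonnegativity of both the inputs and the first-layer weights, every ReLU gate at the hidden layer is open, so the single nonlinearity in the network is bypassed and the two networks compute identical maps layer by layer.

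First I would invoke Assumption~\ref{assumption:orth-mixture}(Nonnegativeness): after the coordinate rotation that sets $\bar\vx_m = \ve_m$, every input is a nonnegative combination of the one-hot basis vectors, so $\vx[i] = \vf_0[i] \ge 0$ entrywise for every sample $i$. Combined with the hypothesis $\vw_{1k} \ge 0$, each hidden pre-activation $\vw_{1k}^\top \vf_0[i] = \sum_m w_{1km} x_m[i]$ is a sum of products of nonnegative scalars and is therefore nonnegative. Because the ReLU activation $h(x) = \max(x,0)$ acts as the identity on $[0,\infty)$, every gate is open: the gating matrix $D_1[i] = \diag[h'(\vw_{1k}^\top \vf_0[i])]$ equals the identity. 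Hence the first-layer activation of the ReLU network satisfies $\vf_1[i] = h(W_1 \vf_0[i]) = W_1 \vf_0[i]$, which is exactly the first-layer activation $\vf_1'[i]$ of the linear counterpart sharing the same $\vw'_{1k} = \vw_{1k}$. Since the top layer is linear in both networks and $W_2' = W_2$, applying $W_2$ to these identical activations yields $\vz[i] = W_2 \vf_1[i] = W_2 \vf_1'[i] = \vz'[i]$, so $\vf_l[i] = \vf_l'[i]$ for every layer $l$ and every sample $i$, which is the claim.

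There is no genuine obstacle here; the lemma is essentially entrywise sign bookkeeping. The only point deserving care is that the argument relies on nonnegativity of the \emph{inputs} (Assumption~\ref{assumption:orth-mixture}), not merely of the weights: it is the product structure $w_{1km} x_m[i] \ge 0$, requiring \emph{both} factors nonnegative, that forces every pre-activation into the linear regime of the ReLU, so I would state this dependence explicitly. This lemma is precisely the ingredient that lets Lemma~\ref{lemma:negative-do-not-matter} and Theorem~\ref{thm:property-of-relu} treat $X_\alpha$ and the linear energy formula as valid for \relumodel{}.
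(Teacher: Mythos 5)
Your proof is correct and follows exactly the paper's argument: both reduce to showing the first-layer activations agree, then observe that nonnegativity of the inputs (Assumption~\ref{assumption:orth-mixture}) together with $\vw_{1k} \ge 0$ forces every pre-activation $\sum_m w_{1km} x_{km} \ge 0$, so the ReLU acts as the identity and the second layer trivially matches since $W_2' = W_2$. Your additional remark about the role of input nonnegativity is a fair emphasis but does not change the substance.
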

\begin{proof}
Since $W_2' = W_2$, we only need to prove $\vf_1 = \vf'_1$. For each filter $k$, we have its activation $f_{1k} = \max(\sum_m w_{1km} x_{km}, 0)$ and $f'_{1k} = \sum_m w'_{1km} x_{km} = \sum_m w_{1km} x_{km}$. By Assumption~\ref{assumption:orth-mixture}(nonnegativeness), all $x_{km} \ge 0$. Since $w_{1km} \ge 0$, $\sum_m w_{1km} x_{km} \ge 0$ and $f_{1k} = f'_{1k}$. 
\end{proof}

\begin{restatable}{lemma}{onenegativeentry}
\label{lemma:one-negative-entry}
If Assumption~\ref{assumption:orth-mixture} holds, $M \ge 2$, $\vx_1$ covers all $M$ modes, and $\alpha_{ij} > 0$, then the maximal eigenvector of $X_\alpha$ always contains at least one negative entry.  
\end{restatable}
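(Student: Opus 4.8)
The plan is to exploit the special sign structure that Assumption~\ref{assumption:orth-mixture} forces on $X_\alpha$ and then invoke Perron--Frobenius theory. First I would take $\bar\vx_m = \ve_m$ as in the text, so that each sample is a nonnegative scaled one-hot vector $\vx[i] = a_i \ve_{m_i}$ with $a_i > 0$, and its augmented copy is $\vx[i'] = \gamma[i]\vx[i]$. Plugging this into the definition of $\con_\alpha[\vx]$ (Eqn.~\ref{eq:contrastive-covariance}), the second (intra-augmentation) term contributes only to the diagonal, since $\vx[i]-\vx[i'] = (1-\gamma[i])a_i\ve_{m_i}$ yields a purely diagonal rank-one matrix. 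For the first (inter-sample) term, the off-diagonal entry $(m,m')$ with $m \neq m'$ receives, from each pair $(i,j)$ with $m_i = m$ and $m_j = m'$, a contribution $-\alpha_{ij} a_i a_j \le 0$, and no positive contributions. Hence every off-diagonal entry of $X_\alpha$ is nonpositive.

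Next I would use the hypotheses that $\vx_1$ covers all $M$ modes and $\alpha_{ij} > 0$: for each unordered pair $m \neq m'$ there is at least one straddling sample pair, so the $(m,m')$ entry is in fact \emph{strictly} negative. Consequently, for $c$ large enough the matrix $C := cI - X_\alpha$ is entrywise nonnegative with strictly positive off-diagonal entries, hence irreducible. By the Perron--Frobenius theorem, the largest eigenvalue of $C$ is simple and its eigenvector $\vc$ may be taken strictly positive, $\vc > 0$. Translating back through the relation $\lambda(C) = c - \lambda(X_\alpha)$, this says that the \emph{smallest} eigenvalue of $X_\alpha$ is simple, with a strictly positive eigenvector $\vc > 0$ that is unique up to scale. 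This is exactly the fact invoked in the proof of Theorem~\ref{thm:relu-diversity}.

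Finally, since $M \ge 2$ and $\lambda_{\min}(X_\alpha)$ is simple, all eigenvalues cannot coincide, so $\lambda_{\max}(X_\alpha) > \lambda_{\min}(X_\alpha)$. Thus any maximal eigenvector $\vv$ of $X_\alpha$ belongs to an eigenvalue distinct from $\lambda_{\min}$ and is therefore orthogonal to $\vc$ by symmetry of $X_\alpha$. Because $\vc > 0$ entrywise and $\vv \neq 0$, the relation $\vv^\top \vc = \sum_m v_m c_m = 0$ cannot hold with all $v_m \ge 0$ (that would force $\vv = 0$); hence $\vv$ has at least one strictly negative entry, which is the claim.

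I expect the main obstacle to be the bookkeeping in the first two steps: carefully verifying that every off-diagonal entry of $X_\alpha$ is nonpositive, and then upgrading this to \emph{strict} negativity using mode coverage and $\alpha_{ij} > 0$, so that the irreducibility needed for the sharp (simple, strictly positive) form of Perron--Frobenius is genuinely available rather than only the weaker conclusion that some nonnegative eigenvector exists.
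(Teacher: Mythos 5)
Your proposal is correct and follows essentially the same route as the paper's proof: establish that all off-diagonal entries of $X_\alpha$ are strictly negative (the paper isolates this as Lemma~\ref{lemma:off-diagonal-negative}), shift by a multiple of the identity to apply Perron--Frobenius and obtain a unique strictly positive minimal eigenvector, then use $M \ge 2$ and orthogonality of eigenvectors of the symmetric matrix $X_\alpha$ to force a negative entry in any maximal eigenvector. The only cosmetic differences are that you close with the elementary observation that a nonzero nonnegative vector cannot be orthogonal to a strictly positive one, whereas the paper invokes its quantitative Lemma~\ref{lemma:negative-entry}, and you use the irreducible-nonnegative form of Perron--Frobenius where the paper uses the strictly positive form; both are equally valid here.
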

\begin{proof}
Let $X_k := \con_\alpha[\vx_k,\vx_k]$. By Lemma~\ref{lemma:off-diagonal-negative}, all off-diagonal elements of $X_k$ are negative. Then $X_k$ can be written as $X_k = \beta I - X'_k$ for some $\beta$ where $X'_k$ is a symmetric matrix whose entries are all positive. By Perron–Frobenius theorem, $X'_k$ has a unique maximal eigenvector $\vu_k > 0$ (with all positive entries) and its associated positive eigenvalue $\lambda_k > 0$. Therefore, $\vu_k > 0$ is also the unique(!) minimal eigenvector of $X_k$. Since $M \ge 2$, there exists a maximal eigenspace, in which any maximal eigenvector $\vy_k$ satisfies $\vy_k^\top\vu_k = 0$. By Lemma~\ref{lemma:negative-entry}, the theorem holds. 
\end{proof}

\begin{lemma}
\label{lemma:off-diagonal-negative}
If the receptive field $R_k$ satisfies Assumption~\ref{assumption:orth-mixture}, and the collection of $N$ vectors $\{\vx_k[i]\}_{i=1}^N$ contains all $M$ modes, then all off-diagonal elements of $\con_\alpha[\vx_k,\vx_k]$ are negative.  
\end{lemma}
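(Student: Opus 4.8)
The plan is to exploit the one-hot structure of Assumption~\ref{assumption:orth-mixture} to reduce each off-diagonal entry of $X_\alpha := \con_\alpha[\vx_k,\vx_k]$ to a sum over sample pairs sitting in two \emph{distinct} modes, and to show each such pair contributes a strictly negative quantity. First I would apply the coordinate rotation described below the assumption, setting $\bar\vx_m = \ve_m$, so that every nonzero data vector has the one-hot form $\vx_k[i] = a_i\,\ve_{m(i)}$ for a single mode index $m(i)$ and a coefficient $a_i := a_{m(i)}[i] > 0$ (samples with $\vx_k[i]=0$ are dropped, since they contribute nothing).

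Next I would isolate the augmentation (second) term of $\con_\alpha$. By the augmentation property $\vx_k[i'] = \gamma[i]\vx_k[i]$, so $\vx_k[i] - \vx_k[i'] = (1-\gamma[i])\,a_i\,\ve_{m(i)}$ is a scalar multiple of a single basis vector, whence its outer product is diagonal. Thus the augmentation term touches only diagonal entries, and every off-diagonal entry comes entirely from the first (inter-sample) term:
\[
[X_\alpha]_{mm'} = \sum_{i=1}^N\sum_{j\neq i}\alpha_{ij}\,(\vx_k[i]-\vx_k[j])_m\,(\vx_k[i]-\vx_k[j])_{m'}, \qquad m\neq m'.
\]

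The key step is a support/sign argument. Since $\vx_k[i]-\vx_k[j]$ is supported on $\{m(i),m(j)\}$, for fixed $m\neq m'$ a pair $(i,j)$ gives a nonzero summand only when $\{m(i),m(j)\}=\{m,m'\}$; in that case $i\neq j$ is automatic. In the ordered case $m(i)=m,\ m(j)=m'$ the two relevant components are $(\vx_k[i]-\vx_k[j])_m = a_i>0$ and $(\vx_k[i]-\vx_k[j])_{m'} = -a_j<0$, so their product is $-a_i a_j$; the opposite ordering gives the same value by symmetry. Hence each contributing pair adds $-\alpha_{ij}a_i a_j \le 0$, and
\[
[X_\alpha]_{mm'} = -\!\!\sum_{\substack{i:\,m(i)=m\\ j:\,m(j)=m'}}\!\!\alpha_{ij}\,a_i a_j \;-\!\!\sum_{\substack{i:\,m(i)=m'\\ j:\,m(j)=m}}\!\!\alpha_{ij}\,a_i a_j.
\]
Because the collection $\{\vx_k[i]\}$ covers all $M$ modes, there exist $i$ with $m(i)=m$ and $j$ with $m(j)=m'$; using $\alpha_{ij}>0$ (the standing positivity assumed wherever this lemma is invoked, e.g.\ Lemma~\ref{lemma:one-negative-entry}) makes at least one summand strictly negative while all others are nonpositive, giving $[X_\alpha]_{mm'}<0$.

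The computation itself is routine bookkeeping, so the step I would take the most care over is precisely this support/sign analysis: verifying that the two basis components of the difference vector always carry \emph{opposite} signs (so cross-mode terms are genuinely negative rather than sign-indefinite), and confirming that the augmentation term lands only on the diagonal and therefore cannot cancel the negativity accumulated off the diagonal.
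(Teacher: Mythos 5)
Your proof is correct and follows essentially the same route as the paper's: both isolate the augmentation term (showing it is purely diagonal since augmentation only rescales a single one-hot mode) and then use one-hotness to show that every contributing inter-sample pair has opposite-sign components on the two modes $m \neq m'$, yielding strictly negative contributions once mode coverage and $\alpha_{ij} > 0$ are invoked. Your explicit parametrization $\vx_k[i] = a_i\,\ve_{m(i)}$ is just a cleaner packaging of the paper's four-case analysis over the sets $A_m = \{i : x_{km}[i] > 0\}$ and their complements.
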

\begin{proof}
We check every entry of $X_k := \con_\alpha[\vx_k,\vx_k]$. Let $\beta_i := \sum_{j\neq i}\alpha_{ij}$. Note that for off-diagnoal element $[X_k]_{ml}$ with $m \neq l$, we have:
\begin{equation}
    [X_k]_{ml} = \sum_{ij}\alpha_{ij} (x_{km}[i] - x_{km}[j])(x_{kl}[i] - x_{kl}[j]) - \sum_{i} \beta_i (x_{km}[i] - x_{km}[i'])(x_{kl}[i] - x_{kl}[i'])
\end{equation}
Let $A_m := \{i: x_{km}[i] > 0\}$ be the sample set in which the $m$-th component is strictly positive, and $A_m^c := \{1, 2, \ldots, N\} \backslash A_m$ its complement. By Assumption~\ref{assumption:orth-mixture}(one-hotness), if $i\in A_m$ then $i\in A^c_{m'}$ for any $m' \neq m$. 

Now we consider several cases for sample $i$ and $j$:

\textbf{Case 1, $i,j\in A_m$}. Then $i,j \in A^c_l$ for $l\neq m$. This means that $x_{kl}[i] - x_{kl}[j] = 0$. 

\textbf{Case 2, $i,j\in A^c_m$}. Then $x_{km}[i] - x_{km}[j] = 0$.

\textbf{Case 3, $i\in A_m$ and $j\in A^c_m$}. Since $j\in A^c_m$, we have $x_{km}[i] - x_{km}[j] = x_{km}[i] > 0$. On the other hand, since $i\in A_m$, $i \in A_l^c$, we have $x_{kl}[i] - x_{kl}[j] = - x_{kl}[j] \le 0$. Therefore, $(x_{km}[i] - x_{km}[j])(x_{kl}[i] - x_{kl}[j]) \le 0$. 

\textbf{Case 4. $i\in A^c_m$ and $j\in A_m$}. This is similar to Case 3. 

Putting them all together, since $\alpha_{ij} > 0$, we know that 
\begin{equation}
    \sum_{ij}\alpha_{ij} (x_{km}[i] - x_{km}[j])(x_{kl}[i] - x_{kl}[j]) \le 0
\end{equation}
Furthermore, it is strictly negative since for $i \in A_m$ and $j\in A_l$, we have
\begin{equation}
    (x_{km}[i] - x_{km}[j])(x_{kl}[i] - x_{kl}[j]) = - x_{km}[i]x_{kl}[j] < 0
\end{equation}
By our assumption that the $N$ vectors $\{\vx_k[i]\}_{i=1}^N$ contains all $M$ modes, both $A_m$ and $A_l$ are not empty so this is achievable.  

For the second summation, by Assumption~\ref{assumption:orth-mixture}(Augmentation), either $i,i'\in A_m$ or $i,i'\in A_m^c$, it is always zero for $m\neq l$.  
\end{proof}

\begin{lemma}
\label{lemma:negative-entry}
If $\vv > 0$ is an all positive $d$-dimensional vector, $\vu^\top\vv = 0$, then 
\begin{equation}
    \min_m u_m \le -\frac{\min_m v_m}{d-k} \frac{\|\vu\|_\infty}{\|\vv\|_\infty}
\end{equation}
where $k$ is the number of nonnegative entries in $\vu$. 
\end{lemma}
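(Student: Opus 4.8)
The plan is to exploit the orthogonality constraint $\vu^\top\vv = 0$ by splitting the coordinates according to the sign of $\vu$. Write $P := \{m : u_m \ge 0\}$ and $Q := \{m : u_m < 0\}$, so that $|P| = k$ and $|Q| = d-k$. First I would dispose of the degenerate case $\vu = \vzero$, where both sides vanish. Whenever $\vu \neq \vzero$, the constraint forces $Q$ to be nonempty: if $\vu \ge \vzero$ were nonzero then $\vu^\top\vv > 0$ (since $\vv > 0$), contradicting orthogonality; symmetrically $P$ must contain a strictly positive coordinate. Hence $\min_m u_m < 0$ with $\min_m u_m = -\max_{m\in Q}|u_m|$, and proving the claim reduces to the lower bound $\max_{m\in Q}|u_m| \ge \frac{\min_m v_m}{d-k}\frac{\|\vu\|_\infty}{\|\vv\|_\infty}$.

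The key identity comes from rewriting $\vu^\top\vv = 0$ as the balance equation $\sum_{m\in P} u_m v_m = \sum_{m\in Q} |u_m| v_m$. I would bound the right-hand side from above by $(\max_{m\in Q}|u_m|)\sum_{m\in Q}v_m \le (\max_{m\in Q}|u_m|)(d-k)\|\vv\|_\infty$, using $|Q| = d-k$ and $v_m \le \|\vv\|_\infty$. For the left-hand side I would keep only the largest term: since every summand $u_m v_m$ with $m\in P$ is nonnegative, $\sum_{m\in P}u_m v_m \ge (\max_{m\in P}u_m)\min_m v_m$.

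Now I split on where $\|\vu\|_\infty$ is attained. If $\|\vu\|_\infty = \max_{m\in Q}|u_m|$, the desired bound is immediate because $\frac{\min_m v_m}{(d-k)\|\vv\|_\infty}\le 1$ (as $\min_m v_m \le \|\vv\|_\infty$ and $d-k\ge 1$). Otherwise $\|\vu\|_\infty = \max_{m\in P}u_m$, and combining the two estimates above yields $\|\vu\|_\infty \min_m v_m \le (\max_{m\in Q}|u_m|)(d-k)\|\vv\|_\infty$, which is exactly the required lower bound on $\max_{m\in Q}|u_m|$. Substituting back through $\min_m u_m = -\max_{m\in Q}|u_m|$ gives the stated inequality.

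The argument is elementary once the sign split is made; the only real subtlety, which I would treat as the main obstacle, is that $\|\vu\|_\infty$ may be realized on a negative coordinate rather than a nonnegative one, so the single balance inequality does not directly produce the bound. This is what forces the two-case structure: when the $\infty$-norm sits on a negative coordinate, the balance equation no longer feeds a positive term of size $\|\vu\|_\infty$ into the left-hand side, and instead the inequality must be seen to hold with room to spare via $\frac{\min_m v_m}{(d-k)\|\vv\|_\infty} \le 1$. I would finally double-check the boundary situations ($\vu = \vzero$, and the automatic nonemptiness of both $P$ and $Q$ forced by $\vu^\top\vv = 0$ with $\vv > 0$), which guarantee $d-k \ge 1$ and $\max_{m\in P}u_m > 0$ throughout.
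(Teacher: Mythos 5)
Your proof is correct and follows essentially the same route as the paper's: the same case split on whether $\|\vu\|_\infty$ is attained on a negative or nonnegative coordinate, with the trivial case handled via $\min_m v_m \le (d-k)\|\vv\|_\infty$, and the main case handled by bounding the positive part of $\vu^\top\vv$ below by $\|\vu\|_\infty \min_m v_m$ and the negative part above by $(d-k)(\max_{m}|u_m|_{u_m<0})\|\vv\|_\infty$ — your ``balance equation'' is just a rearrangement of the paper's single inequality. Your write-up is in fact more complete than the paper's, since you explicitly dispose of $\vu = \vzero$ and justify that both sign classes are nonempty (hence $d-k\ge 1$ and $\min_m u_m < 0$), points the paper leaves implicit.
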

\begin{proof}
Let $m_0 := \argmax_m |u_m|$. If $u_{m_0} = -\|\vu\|_\infty = \min_m u_m$ then we have proven the theorem. Otherwise $u_0 := u_{m_0} \ge 0$. $u_{m_0}$ is the largest entry of $\{u_m\}$. 

Since $\min_m u_m < 0$, by Rearrangement inequality we have:
\begin{equation}
    0 = \vu^\top \vv = \sum_m u_m v_m \ge \left(\min_m v_m\right) u_0 + (d-k) \left(\max_m v_m\right)\left(\min_m u_m\right)
\end{equation}
The conclusion follows. 
\end{proof}

\end{document}